\newtheorem{theorem}{Theorem}
\newtheorem{problem}{Problem}
\newtheorem{definition}{Definition}
\newtheorem{corollary}{Corollary}
\newtheorem{lemma}{Lemma}
\newcommand{\traj}{\xi}
\newcommand{\state}{x}
\newcommand{\statespace}{\mathcal{X}}
\newcommand{\safeset}{\mathcal{S}}
\newcommand{\unsafeset}{\mathcal{A}}
\newcommand{\numsafe}{N_s}
\newcommand{\numunsafe}{N_{\neg s}}
\newcommand{\control}{u}
\newcommand{\controlset}{\mathcal{U}}
\newcommand{\trajxu}{\traj_{xu}}
\newcommand{\hattrajxu}{\hat{\traj}_{xu}}
\newcommand{\trajx}{\traj_\state}
\newcommand{\traju}{\traj_\control}
\newcommand{\constraintspace}{\mathcal{C}}
\newcommand{\guarunsafe}{\mathcal{G}_{\neg s}}
\newcommand{\guarsafe}{\mathcal{G}_s}
\newcommand{\feas}{\mathcal{F}}
\newcommand{\cstate}{\kappa}
\newcommand{\demj}{\traj_{j}^\textrm{dem}}
\newcommand{\demjopt}{\traj_{j}^*}
\newcommand{\stat}{\textrm{stat}}
\newcommand{\comp}{\textrm{comp}}
\newcommand{\ineq}{\textrm{ineq}}
\newcommand{\struct}{\theta^s }
\newcommand{\params}{\theta^p }
\newcommand{\hatstruct}{\hat{\theta}^s }
\newcommand{\hatparams}{\hat{\theta}^p }
\newcommand{\costparams}{\theta^c }
\newcommand{\hatcostparams}{\hat{\theta}^c }
\newcommand{\Nap}{N_\textrm{AP} }
\newcommand{\ltl}{\varphi(\struct, \params)}
\newcommand{\ltlprop}{\varphi(\hatstruct, \hatparams)}
\newcommand{\Nmin}{N^*}
\newcommand{\eventually}{\lozenge}
\newcommand{\always}{\square}
\newcommand{\andltl}{\wedge}
\newcommand{\orltl}{\vee}
\newcommand{\prop}{p}
\newcommand{\props}{\mathcal{P}}
\newcommand{\feat}{\eta}
\newcommand{\unkfeat}{\mathbf{g}}
\newcommand{\bzero}{\mathbf{0}}
\newcommand{\demunsafe}{\xi^{\neg s}}
\newcommand{\Nsat}{N_\textrm{DAG}}
\newcommand{\Dag}{\mathcal{D}}
\newcommand{\parse}{\mathbf{X}}
\newcommand{\lparse}{\mathbf{L}}
\newcommand{\rparse}{\mathbf{R}}
\newcommand{\satmatrix}{\mathbf{S}}
\newcommand{\Noper}{N_\textrm{g}}
\newcommand{\Ntemp}{N_\textrm{o}}
\newcommand{\Bool}{\mathbf{Z}}
\newcommand{\Booll}{Z}
\newcommand{\seq}{\mathcal{Q}}
\newcommand{\avoid}{\mathcal{V}}
\newcommand{\formset}{\varphi}
\newtheorem{rem}{Remark}
\newtheorem{assumption}{Assumption}
\begin{document}

\title{Explaining Multi-stage Tasks by Learning Temporal Logic Formulas from Suboptimal Demonstrations\vspace{-4pt}}

\author{\authorblockN{Glen Chou, Necmiye Ozay, and Dmitry Berenson}
\authorblockA{Electrical Engineering and Computer Science, University of Michigan, Ann Arbor, MI 48109\\
Email: \texttt{\{gchou, necmiye, dmitryb\}@umich.edu\vspace{-4pt}}}}

\maketitle

\begin{abstract}
We present a method for learning multi-stage tasks from demonstrations by learning the logical structure and atomic propositions of a consistent linear temporal logic (LTL) formula. The learner is given successful but potentially suboptimal demonstrations, where the demonstrator is optimizing a cost function while satisfying the LTL formula, and the cost function is uncertain to the learner. Our algorithm uses the Karush-Kuhn-Tucker (KKT) optimality conditions of the demonstrations together with a counterexample-guided falsification strategy to learn the atomic proposition parameters and logical structure of the LTL formula, respectively. We provide theoretical guarantees on the conservativeness of the recovered atomic proposition sets, as well as completeness in the search for finding an LTL formula consistent with the demonstrations. We evaluate our method on high-dimensional nonlinear systems by learning LTL formulas explaining multi-stage tasks on 7-DOF arm and quadrotor systems and show that it outperforms competing methods for learning LTL formulas from positive examples.
\end{abstract}

\IEEEpeerreviewmaketitle

\vspace{-3pt}
\section{Introduction}
\vspace{-1pt}

Imagine demonstrating a multi-stage task to a robot arm barista, such as preparing a drink for a customer (Fig. \ref{fig:arm_setup}). How should the robot understand and generalize the demonstration? One popular method is inverse reinforcement learning (IRL), which assumes a level of optimality on the demonstrations, and aims to learn a reward function that replicates the demonstrator's behavior when optimized \cite{irl_1, irl_2, lfd3, ng_irl}. Due to this representation, IRL works well on short-horizon tasks, but can struggle to scale to multi-stage, constrained tasks \cite{swirl, Vazquez-Chanlatte18, wafr}. Transferring reward functions across environments (i.e. from one kitchen to another) can also be difficult, as IRL may overfit to aspects of the training environment. It may instead be fruitful to decouple the high- and low-level task structure, learning a logical/temporal abstraction of the task that is valid for different environments which can combine low-level, environment-dependent skills. Linear temporal logic (LTL) is well-suited for representing this abstraction, since it can unambiguously specify high-level temporally-extended constraints \cite{baierkatoen} as a function of atomic propositions (APs), which can be used to describe salient low-level state-space regions. To this end, a growing community in controls and anomaly detection has focused on learning linear temporal logic (LTL) formulas to explain trajectory data. However, the vast majority of these methods require both positive and negative examples in order to regularize the learning problem. While this is acceptable in anomaly detection, where one expects to observe formula-violating trajectories, in the context of robotics, it can be unsafe to ask a demonstrator to execute formula-violating behavior, such as spilling the drink or crashing into obstacles.

\begin{figure}[H]
\centering
\includegraphics[width=9cm]{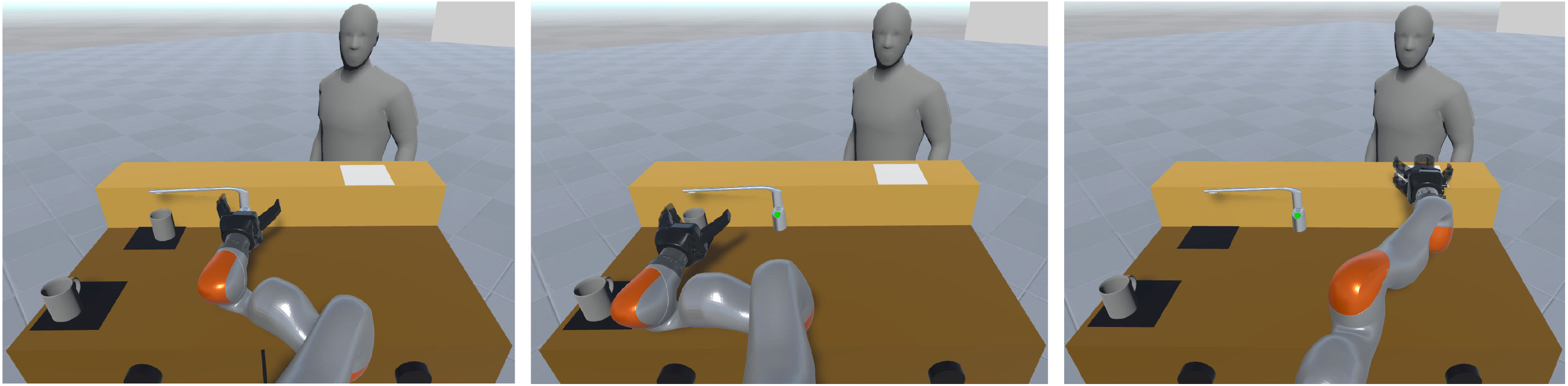}\vspace{-8pt}
\caption{Multi-stage manipulation: first fill the cup, then grasp it, and then deliver it. To avoid spills, a pose constraint is enforced after the cup is grasped.
\vspace{-25pt}}
\label{fig:arm_setup}
\end{figure}
\vspace{-3pt}

In this paper, our insight is that by assuming that demonstrators are goal-directed (i.e. approximately optimize an objective function that may be uncertain to the learner), we can regularize the LTL learning problem without being given formula-violating behavior. In particular, we learn LTL formulas which are parameterized by their high-level logical structure and low-level AP regions, and we show that to do so, it is important to consider demonstration optimality both in terms of the quality of the discrete high-level logical decisions and the continuous low-level control actions. We use the Karush-Kuhn-Tucker (KKT) optimality conditions from continuous optimization to learn the shape of the low-level APs, along with notions of discrete optimality to learn the high-level task structure. We solve a mixed integer linear program (MILP) to jointly recover LTL and cost function parameters which are consistent with the demonstrations. We make the following contributions:

\begin{enumerate}[leftmargin=*]
	\item We develop a method for time-varying, constrained inverse optimal control, where the demonstrator optimizes a cost function while respecting an LTL formula, where the parameters of the atomic propositions, formula structure, and an uncertain cost function are to be learned. We require only positive demonstrations, can handle demonstration suboptimality, and for fixed formula structure, can extract guaranteed conservative estimates of the AP regions.
	\item We develop conditions on demonstrator optimality needed to learn high- and low-level task structure: AP regions can be learned with discrete feasibility, while logical structure requires various levels of discrete optimality. We develop variants of our method under these different assumptions.
	\item We provide theoretical analysis of our method, showing that under mild assumptions, it is guaranteed to return the shortest LTL formula which is consistent with the demonstrations, if one exists. We also prove various results on our method's conservativeness and on formula learnability.
	\item We evaluate our method on learning complex LTL formulas demonstrated on nonlinear, high-dimensional systems, show that we can use demonstrations of the same task on different environments to learn shared high-level task structure, and show that we outperform previous approaches.
\end{enumerate}

\section{Related Work}\label{sec:relatedwork}

There is extensive literature on inferring temporal logic formulas from data via decision trees \cite{BombaraVPYB16}, genetic algorithms \cite{BufoBSBLB14}, and Bayesian inference \cite{Vazquez-Chanlatte18, shah_bayesian}. However, most of these methods require positive and negative examples as input \cite{CamachoM19, KongJAGB14, KongJB17, daniel}, while our method is designed to only use positive examples. Other methods require a space-discretization \cite{VaidyanathanIBD17, ArakiVLVDR19, Vazquez-Chanlatte18}, while our approach learns LTL formulas in the original continuous space. Some methods learn AP parameters, but do not learn logical structure or perform an incomplete search, relying on formula templates \cite{LeungAP19, BakhirkinFM18,  XuNJT19}, while other methods learn structure but not AP parameters \cite{shah_bayesian}. Perhaps the method most similar to ours is \cite{telex}, which learns parametric signal temporal logic (pSTL) formulas from positive examples by fitting formulas that the data tightly satisfies. However, the search over logical structure in \cite{telex} is incomplete, and tightness may not be the most informative metric given goal-directed demonstrations (c.f. Sec. \ref{sec:results}). To our knowledge, this is the first method for learning LTL formula structure and parameters in continuous spaces on high-dimensional systems from only positive examples.

IRL \cite{irl_1, irl_2, boyd, toussaint, pontryagin} searches for a reward function that replicates a demonstrator's behavior when optimized, but these methods can struggle to represent multi-stage, long-horizon tasks \cite{swirl}. To alleviate this, \cite{swirl, RanchodRK15} learn sequences of reward functions, but in contrast to temporal logic, these methods are restricted to learning tasks which can be described by a single fixed sequence. Temporal logic generalizes this, being able to represent tasks that involve more choices and can be completed with multiple different sequences. Some work \cite{PapushaWT18, ZhouL18} aims to learn a reward function given that the demonstrator satisfies a known temporal logic formula; we will learn both jointly.

Finally, there is relevant work in constraint learning. These methods generally focus on learning time-invariant constraints \cite{wafr, corl, ral, lfdc4} or a fixed sequence of task constraints \cite{lfdc1}, which our method subsumes by learning time-dependent constraints that can be satisfied by different sequences.

\section{Preliminaries and Problem Statement}\label{sec:prelims}

We consider discrete-time nonlinear systems $\state_{t+1} = f(\state_t, \control_t, t)$, with state $\state\in\statespace$ and control $\control\in\controlset$, where we denote state/control trajectories of the system as $\trajxu\doteq(\trajx,\traju)$.

We use linear temporal logic (LTL) \cite{baierkatoen}, which augments standard propositional logic to express properties holding on trajectories over (potentially infinite) periods of time. In this paper, we will be given finite-length trajectories demonstrating tasks that can be completed in finite time. To ensure that the formulas we learn can be evaluated on finite trajectories, we focus on learning formulas, given in positive normal form, which are described in a parametric temporal logic similar to bounded LTL \cite{BLTL}, and which can be written with the grammar
\begin{equation}\label{eq:grammar}
	\hspace{-5pt}\varphi ::= p \mid \neg p \mid \varphi_1 \orltl \varphi_2 \mid \varphi_1 \andltl \varphi_2 \mid \always_{[t_1,t_2]} \varphi \mid \varphi_1\ \mathcal{U}_{[t_1,t_2]}\ \varphi_2,\hspace{-1pt}
\end{equation}

\noindent where $\prop \in \props \doteq \{\prop_i\}_{i=1}^{\Nap}$ are atomic propositions (APs) and $\Nap$ is known to the learner. $t_1\leq t_2$ are nonnegative integers. Here, $\neg \varphi$ denotes the negation of formula $\varphi$, $\varphi_1 \vee \varphi_2$ denotes the disjunction of formulas $\varphi_1$ and $\varphi_2$, $\varphi_1 \wedge \varphi_2$ denotes the conjunction of formulas $\varphi_1$ and $\varphi_2$, the ``always" operator $\square_{[t_1, t_2]} \varphi$ denotes that $\varphi$ ``always" has to hold, and the ``until" operator $\varphi_1\ \mathcal{U}_{[t_1, t_2]}\ \varphi_2$ denotes that $\varphi_2$ must eventually hold, and $\varphi_1$ must hold up until that time. The semantics, describing satisfaction of an LTL formula $\varphi$ by a trajectory $\trajxu$, denoted $\varphi \models \trajxu$, are given in App. \ref{sec:app_semantics}. Note that negation only appears directly before APs. Let the size of the grammar be $\Noper = \Nap + \Ntemp$, where $\Ntemp$ is the number of temporal/boolean operators in the grammar. A useful derived operator is ``eventually" $\eventually_{[t_1,t_2]} \varphi \doteq \top\ \mathcal{U}_{[t_1,t_2]}\ \varphi$. We consider tasks that involve optimizing a parametric cost function (encoding efficiency concerns, etc.), while satisfying an LTL formula $\ltl$ (encoding constraints for task completion):

\vspace{2pt}
\begin{problem}[Forward problem]\label{prob:fwdprob}
	\begin{equation*}\label{eq:fwdprob}
	\begin{array}{>{\displaystyle}c >{\displaystyle}l >{\displaystyle}l}
				&\\[-15pt]
		\underset{\trajxu}{\text{minimize}} & \quad c(\trajxu, \costparams) &\\[-1pt]
		\text{subject to} & \quad \trajxu \models \ltl\\[-1pt]
		& \quad \bar\feat(\trajxu) \in \bar\safeset \subseteq \constraintspace\\
	\end{array}\hspace{-15pt}
\end{equation*}
\end{problem}

\noindent where $c(\cdot, \costparams)$ is a potentially non-convex cost function, parameterized by $\costparams \in \Theta^c$. The LTL formula $\ltl$ is parameterized by $\struct \in \Theta^s$, encoding the logical and temporal structure of the formula, and by $\params \doteq \{\params_i\}_{i=1}^{\Nap}$, where $\params_i \in \Theta^p_i$ defines the shape of the region where $\prop_i$ holds. Specifically, we consider APs of the form: $\state \models \prop_i \Leftrightarrow \unkfeat_i(\feat_i(\state), \params_i) \le \bzero$, where $\feat_i(\cdot): \statespace \rightarrow \constraintspace$ is a known nonlinear function, $\unkfeat_i(\cdot,\cdot) \doteq [g_{i,1}(\cdot,\cdot), \ldots, g_{i,{N_i^{\ineq}}}(\cdot,\cdot)]^\top$ is a vector-valued parametric function, and $\constraintspace$ is the space in which the constraint is evaluated, elements of which are denoted \textit{constraint states} $\cstate \in \constraintspace$. In the manipulation example, the joint angles are $\state$, the end effector pose is $\cstate$, and $\feat(\cdot)$ are the forward kinematics. As shorthand, let $G_i(\cstate, \params_i) \doteq \max_{m\in\{1,\ldots,N_i^\ineq\}}\big(g_{i,m}(\cstate, \params_i) \big)$. Define the subset of $\constraintspace$ where $\prop_i$ holds/does not hold, as
\begin{align}
	\safeset_i(\params_i) &\doteq \{ \cstate \mid G_i(\cstate, \params_i) \le 0\}\\
	\unsafeset_i(\params_i) &\doteq \textrm{cl}(\{ \cstate \mid G_i(\cstate, \params_i) > 0\}) = \textrm{cl}(\safeset_i(\params_i)^c)
\end{align}

\noindent To ensure that Problem \ref{prob:fwdprob} admits an optimum, we have defined $\unsafeset_i(\params_i)$ to be closed; that is, states on the boundary of an AP can be considered either inside or outside. For these boundary states, our learning algorithm can automatically detect if the demonstrator intended to visit or avoid the AP (c.f. Sec. \ref{sec:fixedtemplatekkt}). Any \textit{a priori} known constraints are encoded in $\bar\safeset$, where $\bar\feat(\cdot)$ is known. In this paper, we encode in $\bar\safeset$ the system dynamics, start state, and if needed, a goal state separate from the APs. 

We are given $\numsafe$ demonstrations $\{\demj\}_{j=1}^{\numsafe}$ of duration $T_j$, which approximately solve Prob. \ref{prob:fwdprob}, in that they are feasible (satisfy the LTL formula and known constraints) and achieve a possibly suboptimal cost. Note that Prob. \ref{prob:fwdprob} can be modeled with continuous ($\trajxu$) and boolean decision variables (referred to collectively as $\Bool$) \cite{wolff}; the boolean variables determine the high-level plan, constraining the trajectory to obey boolean decisions that satisfy $\ltl$, while the continuous component synthesizes a low-level trajectory implementing the plan. We will use different assumptions of demonstrator optimality on the continuous/boolean parts of the problem, depending on if $\params$ (Sec. \ref{sec:onlyparams}), $\struct$ (Sec. \ref{sec:params_and_struct}), or $\costparams$ (Sec. \ref{sec:costparams}) are being learned, and discuss how these different degrees of optimality can affect the learnability of LTL formulas (Sec. \ref{sec:theory}).

Our goal is to learn the unknown structure $\struct$ and AP parameters $\params$ of the LTL formula $\ltl$, as well as unknown cost function parameters $\costparams$, given demonstrations $\{\demj\}_{j=1}^{\numsafe}$ and the \textit{a priori} known safe set $\bar\safeset$.
\vspace{3pt}
\section{Learning Atomic Proposition Parameters ($\params$)}\label{sec:onlyparams}
\vspace{3pt}

We develop methods for learning unknown AP parameters $\params$ when the cost function parameters $\costparams$ and formula structure $\struct$ are known. We first review recent results \cite{ral} on learning time-invariant constraints via the KKT conditions (Sec. \ref{sec:kkt}), show how the framework can be extended to learn $\params$ (Sec. \ref{sec:fixedtemplatekkt}), and develop a method for extracting states which are guaranteed to satisfy/violate $\prop_i$ (Sec. \ref{sec:volumeextraction}). In this section, we will assume that demonstrations are \textit{locally-optimal for the continuous component} and \textit{feasible for the discrete component}.
\subsection{Learning time-invariant constraints via KKT}\label{sec:kkt}
\vspace{5pt}

Consider a simplified variant of Prob. \ref{prob:fwdprob} that only involves always satisfying a single AP; this reduces Prob. \ref{prob:fwdprob} to a standard trajectory optimization problem:

\vspace{3pt}
\begin{equation}\label{eq:simpfwdprob}
	\begin{array}{>{\displaystyle}c >{\displaystyle}l >{\displaystyle}l}
				&\\[-15pt]
		\underset{\trajxu}{\text{minimize}} & \quad c(\trajxu) &\\
		\text{subject to} & \quad \unkfeat(\feat(\state), \params) \le \bzero, \quad \forall \state \in \trajxu \\
		& \quad \bar\feat(\trajxu) \in \bar\safeset \subseteq \constraintspace\\
	\end{array}\hspace{-15pt}
\end{equation}
\vspace{3pt}

\noindent To ease notation, $\costparams$ is assumed known in Sec. \ref{sec:onlyparams}-\ref{sec:params_and_struct} and reintroduced in Sec. \ref{sec:costparams}. Suppose we rewrite the constraints of \eqref{eq:simpfwdprob} as $\mathbf{h}^k(\feat(\trajxu)) = \mathbf{0}$, $\mathbf{g}^k(\feat(\trajxu)) \le \mathbf{0}$, and $\mathbf{g}^{\neg k}(\feat(\trajxu), \params) \le \mathbf{0}$, where $k/\neg k$ group together known/unknown constraints. Then, with Lagrange multipliers $\lambda$ and $\nu$, the KKT conditions (first-order necessary conditions for local optimality \cite{cvxbook}) of the $j$th demonstration $\demj$, denoted $\textrm{KKT}(\demj)$, are:

\begin{subequations}\label{eq:kkt}
	\small\begin{align}
	\hspace{-25pt}\textrm{Primal}\ \ &\mathbf{h}^{k}(\feat(\state_t^j)) = \mathbf{0},\quad t = 1,\ldots,T_j\label{eq:kkt_primal1}\\
	\hspace{-25pt}\textrm{ feasibility:}\ \ &\mathbf{g}^{k}(\feat(\state_t^j)) \le \mathbf{0},\quad t = 1,\ldots,T_j \label{eq:kkt_primal2}\\
	\hspace{-25pt}  &\mathbf{g}^{\neg k}(\feat(\state_t^j), \textcolor{red}{\params}) \le \mathbf{0},\quad t = 1,\ldots,T_j \label{eq:kkt_primal3}\\[1pt]
	\hline\hspace{-25pt}\textrm{Lagrange mult.}\ \ &\textcolor{blue}{\boldsymbol{\lambda}_t^{j,k}} \ge \mathbf{0},\quad t = 1,\ldots,T_j\label{eq:kkt_lag1}\\
	\textrm{nonnegativity:}\ \ &\textcolor{blue}{\boldsymbol{\lambda}_t^{j,\neg k}} \ge \mathbf{0},\quad t = 1,\ldots,T_j\label{eq:kkt_lag2}\\
	\hline\hspace{-3pt}\textrm{Complementary}\ \ &\textcolor{blue}{\boldsymbol{\lambda}_t^{j,k}}\odot\mathbf{g}^{k}(\feat(\state_t^j)) = \mathbf{0},\ \ t = 1,\ldots,T_j\label{eq:kkt_comp1}\\
	\textrm{slackness:}\ \ &\textcolor{blue}{\boldsymbol{\lambda}_t^{j,\neg k}}\hspace{-1pt}\odot\mathbf{g}^{\neg k}(\feat(\state_t^j), \textcolor{red}{\params}) = \mathbf{0},\ \ t = 1,\ldots,T_j\label{eq:kkt_comp2}\\
	\hline\notag\\[-12pt]
	\hspace{-25pt}\textrm{Stationarity:}\ \ &\nabla_{\state_t} c(\demj) + \textcolor{blue}{\boldsymbol{\lambda}_t^{j,k}}^\top \nabla_{\state_t} \mathbf{g}^{k}(\feat(\state_t^j))\notag
	\\&\quad+\textcolor{blue}{\boldsymbol{\lambda}_t^{j,\neg k}}^{\top} \nabla_{\state_t} \mathbf{g}^{\neg k}(\feat(\state_t^j), \textcolor{red}{\params})\label{eq:kkt_stat}
	\\&\quad+\textcolor{blue}{\boldsymbol{\nu}_t^{j,k}}^\top \nabla_{\state_t} \mathbf{h}^{k}(\feat(\state_t^j)) = \mathbf{0}, \ \ t=1,\ldots,T_j\notag
\end{align}
\end{subequations}
\vspace{4pt}

\noindent where $\odot$ denotes elementwise multiplication. We vectorize the multipliers $\boldsymbol{\lambda}_t^{j,k} \in \mathbb{R}^{N_k^\ineq}$, $\boldsymbol{\lambda}_t^{j,\neg k} \in \mathbb{R}^{N_{\neg k}^\ineq}$, and $\boldsymbol{\nu}_t^{j,k} \in \mathbb{R}^{N_k^\ineq}$, i.e. $\boldsymbol{\lambda}_t^{j,k} = [\lambda_{t,1}^{j,k}, \ldots, \lambda_{t,N_\ineq^k }^{j,k}]^\top $. We drop \eqref{eq:kkt_primal1}-\eqref{eq:kkt_primal2}, as they involve no decision variables. Then, we can find a constraint which makes the $\numsafe$ demonstrations locally-optimal by finding a $\params$ that satisfies the KKT conditions for each demonstration:

\vspace{5pt}
\begin{problem}[KKT, exact]\label{prob:kkt_exact_timeinv}
\begin{equation*}
	\begin{array}{>{\displaystyle}c >{\displaystyle}l >{\displaystyle}l}
				&\\[-15pt]
		\text{find} & \params, \{\boldsymbol{\lambda}_t^{j,k}, \boldsymbol{\lambda}_t^{j,\neg k},\boldsymbol{\nu}_t^{j,k}\}_{t=1}^{T_j},\ \ j = 1,...,\numsafe\\[2pt]
		\text{subject to} & \{\textrm{KKT}(\demj)\}_{j=1}^{\numsafe}\\
	\end{array}
\end{equation*}
\end{problem}
\vspace{3pt}

\noindent If the demonstrations are only approximately locally-optimal, Prob. \ref{prob:kkt_exact_timeinv} may become infeasible. In this case, we can relax stationarity and complementary slackness to cost penalties:

\vspace{3pt}
\begin{problem}[KKT, suboptimal]\label{prob:kkt_subopt}
\begin{equation*}
	\begin{array}{>{\displaystyle}c >{\displaystyle}l >{\displaystyle}l}
				&\\[-15pt]
		\underset{\params, \boldsymbol{\lambda}_t^{j,k}, \boldsymbol{\lambda}_t^{j,\neg k},\boldsymbol{\nu}_t^{j,k}}{\text{minimize}} & \sum_{j=1}^{\numsafe}\big(\Vert \stat(\demj)\Vert_1 + \Vert \comp(\demj)\Vert_1\big)\\
		\text{subject to} & \eqref{eq:kkt_primal3}-\eqref{eq:kkt_lag2}, \ \forall\demj, \ j = 1,\ldots,\numsafe\\[2pt]
	\end{array}\hspace{-15pt}
\end{equation*}
\end{problem}
\vspace{2pt}

\noindent where $\stat(\demj)$ denotes the LHS of Eq. \eqref{eq:kkt_stat} and $\comp(\demj)$ denotes the concatenated LHSs of Eqs. \eqref{eq:kkt_comp1} and \eqref{eq:kkt_comp2}. For some constraint parameterizations (i.e. unions of boxes or ellipsoids \cite{ral}), Prob. \ref{prob:kkt_exact_timeinv}-\ref{prob:kkt_subopt} are MILP-representable and can be efficiently solved; we discuss this in further detail in Sec. \ref{sec:fixedtemplatekkt}.

\setlength{\textfloatsep}{10pt}
\begin{figure}
\centering
\hspace{-5pt}\includegraphics[width=9cm]{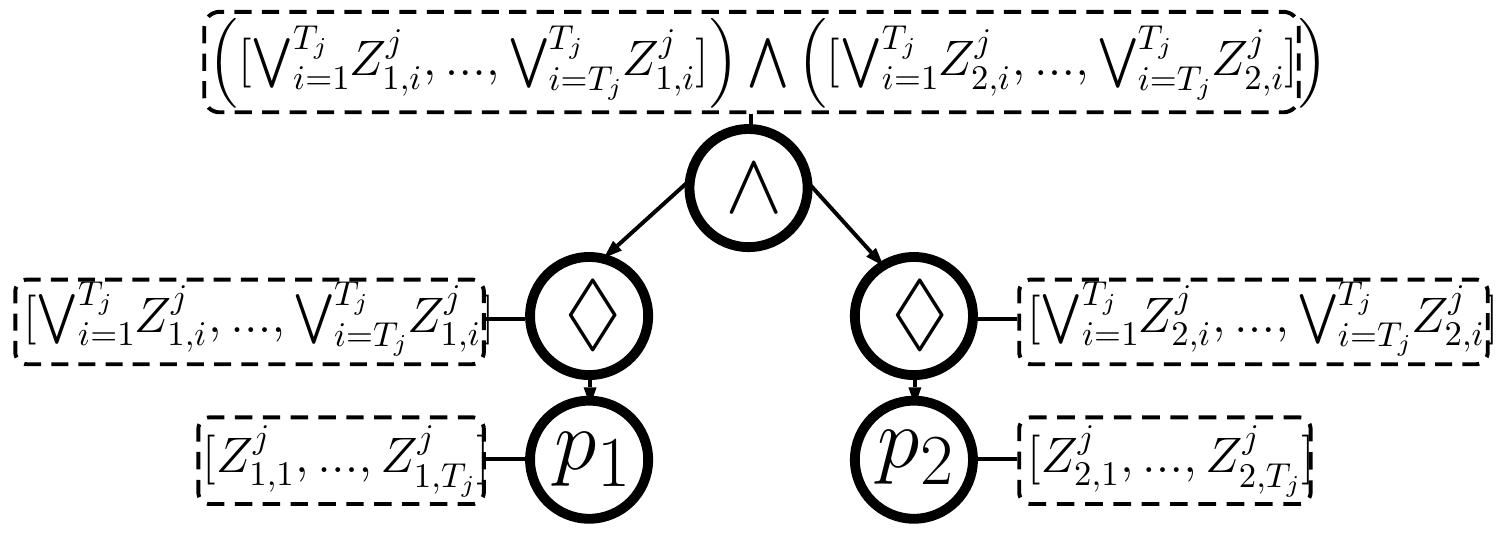}
\caption{A directed acyclic graph (DAG) model of the LTL formula $\varphi = (\eventually_{[0,T_j-1]} p_1)\wedge(\eventually_{[0,T_j-1]} p_2)$ (eventually satisfy $p_1$ and eventually satisfy $p_2$). The DAG representation can be interpreted as a parse tree for $\varphi$ (c.f. Sec. \ref{sec:dag}). The $T_j$ boolean values for each node represent the truth value of the formula associated with the DAG subtree when evaluated on $\demj$, starting at times $t = 1, \ldots, T_j$, respectively.  Each $\demj \models \varphi$ iff the first entry at the root node, $(\bigvee_{i=1}^{T_j} \Booll_{1,i}^j)\bigwedge(\bigvee_{i=1}^{T_j} \Booll_{2,i}^j)$, is true.\vspace{-5pt}}
\label{fig:parse_tree}
\vspace{-10pt}
\end{figure}

\subsection{Modifying KKT for multiple atomic propositions}\label{sec:fixedtemplatekkt}
\vspace{2pt}
Having built intuition with the single AP case, we return to Prob. \ref{prob:fwdprob} and discuss how the KKT conditions change in the multiple-AP setting. We first adjust the primal feasibility condition \eqref{eq:kkt_primal3}. Recall from Sec. \ref{sec:prelims} that we can solve Prob. \ref{prob:fwdprob} by finding a continuous trajectory $\trajxu$ and a set of boolean variables $\Bool$ enforcing that $\trajxu \models \ltl$. For each $\demj$, let $\Bool^j(\params_i) \in \{0,1\}^{\Nap\times T_j}$, and let the $(i,t)$th index $\Booll_{i,t}^{j}(\params_i)$ indicate if on $\demj$, constraint state $\cstate_t \models \prop_i$ for parameters $\params_i$:

\begin{equation} \label{eq:zind}
\begin{gathered}
	\Booll_{i,t}^{j}(\params_i) = 1 \Leftrightarrow \cstate_t \in \safeset_i(\params_i)\\
	\Booll_{i,t}^{j}(\params_i) = 0 \Leftrightarrow \cstate_t \in \unsafeset_i(\params_i)
\end{gathered}
\end{equation}

\noindent Since LTL operators have equivalent boolean encodings \cite{wolff}, the truth value of $\ltl$ can be evaluated as a function of $\Bool^j$, $\params$, and $\struct$, denoted as $\Phi(\Bool^j, \params, \struct)$ (we suppress $\struct$, as it is assumed known for now). For example, consider the LTL formula $\ltl = (\eventually_{[0,T_j-1]} \prop_1) \andltl (\eventually_{[0,T_j-1]} \prop_2)$, which enforces that the system must eventually satisfy $\prop_1$ and eventually satisfy $\prop_2$. We can evaluate the truth value of $\ltl$ $\demj$ by calculating $\Phi(\Bool^j,\params) = (\bigvee_{t=1}^{T_j} \Booll_{1,t}^j(\params_1)) \andltl (\bigvee_{t=1}^{T_j} \Booll_{2,t}^j(\params_2))$ (c.f. Fig. \ref{fig:parse_tree}). Boolean encodings of common temporal and logical operators can be found in \cite{BiereHJLS06}. Enforcing that $\Booll_{i,t}^j(\params_i)$ satisfies \eqref{eq:zind} can be done with a big-M formulation and binary variables $\mathbf{s}_{i,t}^j \in \{0,1\}^{N_i^\ineq}$ \cite{bertsimas}:

\begin{equation}\label{eq:ltlfeas}
\hspace{-10pt}\begin{gathered}
	\unkfeat_i(\cstate_t^j, \params_i) \le M(\mathbf{1}_{N_i^\ineq}-\mathbf{s}_{i,t}^j)\\
	\mathbf{1}_{N_i^\ineq}^\top \mathbf{s}_{i,t}^j - N_i^\ineq \le M\Booll_{i,t}^j - M_\epsilon \\
	\unkfeat_i(\cstate_t^j, \params_i) \ge -M\mathbf{s}_{i,t}^j\\
	\mathbf{1}_{N_i^\ineq}^\top \mathbf{s}_{i,t}^j - N_i^\ineq \ge -M(1-\Booll_{i,t}^j)
\end{gathered}\hspace{-10pt}
\end{equation}

\noindent where $\mathbf{1}_d$ is a $d$-dimensional ones vector, $M$ is a large positive number, and $M_\epsilon \in (0,1)$. In practice, $M$ and $M_\epsilon$ can be carefully chosen to improve the solver's performance. Note that $s_{i,m,t}^j$, the $m$th component of $\mathbf{s}_{i,t}^j$, encodes if $\cstate_t^j$ satisfies a negated $g_{i,m}(\cstate_t^j, \params_i)$, i.e. if $s_{i,m,t}^j = 1$ or $0$, then $\cstate_t^j$ satisfies $g_{i,m}(\cstate_t^j, \params_i) \le$ or $\ge 0$. We can rewrite the enforced constraint as $\mathbf{g}_{i}(\cstate_t^j, \params_i)\odot(2\mathbf{s}_{i,t}^j-\mathbf{1}_{N_i^\ineq}) \le \mathbf{0}$ for each $i$, $t$; we use this form to adapt the remaining KKT conditions. While enforcing \eqref{eq:ltlfeas} is hard in general, if $\mathbf{g}_{i}(\cstate, \params_i)$ is affine in $\params_i$ for fixed $\cstate$, \eqref{eq:ltlfeas} is MILP-representable; henceforth, we assume $\mathbf{g}_{i}(\cstate, \params_i)$ is of this form. Note that this can still describe non-convex regions, as the dependency on $\cstate$ can be nonlinear. To modify complementary slackness \eqref{eq:kkt_comp2} for the multi-AP case, we note that the elementwise product in \eqref{eq:kkt_comp2} is MILP-representable:
\vspace{0pt}
\begin{equation}\label{eq:compslackltl}
\begin{gathered}
	\begin{bmatrix}
		\boldsymbol{\lambda}_{i,t}^{j, \neg k},\ -\mathbf{g}_{i}(\cstate_t^j, \params_i)\odot (2\mathbf{s}_{i,t}^j - \mathbf{1}_{N_i^\ineq})
	\end{bmatrix} \le M\mathbf{Q}_{i,t}^j\\
	\mathbf{Q}_{i,t}^j \mathbf{1}_{2} \le \mathbf{1}_{N_i^\ineq}
\end{gathered}
\end{equation}
where $\mathbf{Q}_{i,t}^j\in \{0,1\}^{N_i^\ineq \times 2}$. Intuitively, \eqref{eq:compslackltl} enforces that either 1) the Lagrange multiplier is zero and the constraint is inactive, i.e. $g_{i,m}(\cstate,\params_i) \in [-M, 0]$ or $\in [0, M]$ if $s_{i,m,t}^j = 0$ or $1$, 2) the Lagrange multiplier is nonzero and $g_{i,m}(\cstate_t, \params_i) = 0$, or both. The stationarity condition \eqref{eq:kkt_stat} must also be modified to consider whether a particular constraint is negated; this can be done by modifying the second line of \eqref{eq:kkt_stat} to terms of the form $\big( {\boldsymbol{\lambda}_{i,t}^{j,\neg k}}^{\top} \hspace{-5pt}\odot (2\mathbf{s}_{i,t}^j - 1) \big) \nabla_{\state_t} \mathbf{g}_i^{\neg k}(\feat(\state_t), \params)$. The KKT conditions for the multi-AP case, denoted $\textrm{KKT}_\textrm{LTL}(\demj)$, then are:

\begin{subequations}\label{eq:kkt_ltl}
	\small\begin{align}
	\hspace{-25pt}\textrm{Primal}\ \ & \textrm{Equations } \eqref{eq:kkt_primal1}-\eqref{eq:kkt_primal2}, \quad t = 1,\ldots,T_j\label{eq:kkt_primal1_ltl}\\
	\hspace{-25pt}\textrm{ feasibility:}\ \ &\textrm{Equation } \eqref{eq:ltlfeas},\quad i = 1,\ldots,\Nap, \ t = 1,\ldots,T_j  \label{eq:kkt_primal2_ltl}\\[1pt]
	\hline\hspace{-25pt}\textrm{Lagrange}\ \ &\textrm{Equation } \eqref{eq:kkt_lag1},\quad t = 1,\ldots,T_j\label{eq:kkt_lag1_ltl}\\
	\textrm{nonneg.:}\ \ &\textcolor{blue}{\boldsymbol{\lambda}_{i,t}^{j,\neg k}} \ge \mathbf{0},\quad i = 1,\ldots,\Nap, \ t = 1,\ldots,T_j\label{eq:kkt_lag2_ltl}\\
	\hline\hspace{-3pt}\textrm{Complem.}\ \ &\textrm{Equation } \eqref{eq:kkt_comp1},\ \ t = 1,\ldots,T_j\label{eq:kkt_comp1_ltl}\\
	\textrm{slackness:}\ \ &\textrm{Equation } \eqref{eq:compslackltl},\quad i = 1,\ldots,\Nap, \ t = 1,\ldots,T_j \label{eq:kkt_comp2_ltl}\\
	\hline\notag\\[-12pt]
	\hspace{-25pt}\textrm{Stationarity:}\ \ &\nabla_{\state_t} c(\demj) + \textcolor{blue}{\boldsymbol{\lambda}_t^{j,k}}^\top \nabla_{\state_t} \mathbf{g}^{k}(\feat(\state_t^j))\notag
	\\[-0pt]&+\hspace{-1pt}\sum_{i=1}^{N_\ineq}\hspace{-1pt}\Big[\textcolor{blue}{\big(\boldsymbol{\lambda}_{i,t}^{j,\neg k}}^{\top}\hspace{-5pt}\odot (2\mathbf{s}_{i,t}^j - 1)\big) \nabla_{\state_t} \mathbf{g}_i^{\neg k}(\feat(\state_t^j), \textcolor{red}{\params_i})\Big]\hspace{-6pt}\label{eq:kkt_stat_ltl}
	\\[-0pt]&+\textcolor{blue}{\boldsymbol{\nu}_t^{j,k}}^\top \nabla_{\state_t} \mathbf{h}^{k}(\feat(\state_t^j)) = \mathbf{0}, \ \ t=1,\ldots,T_j\notag
\end{align}
\end{subequations}

\noindent As mentioned in Sec. \ref{sec:prelims}, if $\cstate_t^j$ lies on the boundary of AP $i$, the KKT conditions will automatically determine if $\cstate_t^j \in \safeset_i(\params_i)$ or $\cstate_t^j \in \unsafeset_i(\params_i)$ based on whichever option enables $\mathbf{s}_{i,t}^j$ to take values that satisfy \eqref{eq:kkt_ltl}. To summarize, our approach is to 1) find $\Bool^j$, which determines the feasibility of $\demj$ for $\ltl$, 2) find $s_{i,m,t}^j$, which link the value of $\Bool^j$ from the AP-containment level (i.e. $\cstate_t^j \in \safeset_i(\params_i)$) to the single-constraint level (i.e. $g_{i,m}(\cstate_t^j, \params_i) \le 0$), and 3) enforce that $\demj$ satisfies the KKT conditions for the continuous optimization problem defined by $\params$ and fixed values of $\mathbf{s}_{i,t}^j$. Finally, we can write the problem of recovering $\params$ for a fixed $\struct$ as:

\vspace{3pt}
\begin{problem}[Fixed template]\label{prob:kkt_exact_ltl}
	\begin{equation*}
	\hspace{-27pt}\begin{array}{>{\displaystyle}c >{\displaystyle}l >{\displaystyle}l}
				&\\[-12pt]
		\text{find} & \params, \boldsymbol{\lambda}_{t}^{j,k}, \boldsymbol{\lambda}_{i,t}^{j,\neg k},\boldsymbol{\nu}_t^{j,k}, \mathbf{s}_{i,t}^j, \mathbf{Q}_{i,t}^{j}, \mathbf{Z}^j,\ \forall i,j,t \\[3pt]
		\text{subject to} & \{\textrm{KKT}_\textrm{LTL}(\demj)\}_{j=1}^{\numsafe}\\[1pt]
	\end{array}\hspace{-20pt}
\end{equation*}
\end{problem}
\vspace{3pt}

We can also encode prior knowledge in Prob. \ref{prob:kkt_exact_ltl}, i.e. known AP labels or a prior on $\params_i$, which we discuss in App. \ref{sec:prior_knowledge}.

\subsection{Extraction of guaranteed learned AP}\label{sec:volumeextraction}

As with the constraint learning problem, the LTL learning problem is also ill-posed: there can be many $\params$ which explain the demonstrations. Despite this, we can measure our confidence in the learned APs by checking if a constraint state $\cstate$ is guaranteed to satisfy/not satisfy $\prop_i$. Denote $\feas_i$ as the feasible set of Prob. \ref{prob:kkt_exact_ltl}, projected onto $\Theta^p_i$ (feasible set of $\params_i$). Then, we say $\cstate$ is learned to be guaranteed contained in/excluded from $\safeset_i(\params_i)$ if for all $\params_i \in \feas_i$, $G_i(\cstate) \le 0\ / \ge 0$. Denote by:

\begin{equation}\label{eq:guarsafe}
		\guarsafe^i\hspace{-3pt} \doteq\hspace{-3pt} \bigcap_{\theta \in \feas_i} \{ \cstate\ |\ G_i(\cstate, \theta) \le 0 \}\hspace{-7pt}
\end{equation}

\begin{equation}\label{eq:guarunsafe}
		\guarunsafe^i\hspace{-3pt} \doteq\hspace{-3pt} \bigcap_{\theta \in \feas_i} \{ \cstate\ |\ G_i(\cstate, \theta) \ge 0 \}\hspace{-7pt}
\end{equation}

\noindent as the sets of $\cstate$ which are guaranteed to satisfy/not satisfy $\prop_i$.

To query if $\cstate$ is guaranteed to satisfy/not satisfy $\prop_i$, we can check the feasibility of the following problem:

\vspace{3pt}
\begin{problem}[Query containment of $\cstate$ in/outside of $\safeset_i(\params_i)$]\label{prob:query}
\begin{equation*}\vspace{1pt}
	\hspace{-27pt}\begin{array}{>{\displaystyle}c >{\displaystyle}l >{\displaystyle}l}
		&\\[-16pt]
		\text{find} & \params, \boldsymbol{\lambda}_{t}^{j,k}, \boldsymbol{\lambda}_{i,t}^{j,\neg k},\boldsymbol{\nu}_t^{j,k}, \mathbf{s}_{i,t}^j, \mathbf{Q}_{i,t}^{j}, \mathbf{Z}^j,\ \forall i,j,t\\[3pt]
		\text{subject to} & \{\textrm{KKT}_\textrm{LTL}(\demj)\}_{j=1}^{\numsafe} \\[3pt]
		& G_i(\cstate, \params_i) \ge 0 \textrm{ \textbf{OR} } G_i(\cstate, \params_i) \le 0
	\end{array}\hspace{-20pt}
\end{equation*}
\end{problem}
\noindent If forcing $\cstate$ to (not) satisfy $\prop_i$ renders Prob. \ref{prob:query} infeasible, we can deduce that to be consistent with the KKT conditions, $\cstate$ must (not) satisfy $\prop_i$. Similarly, continuous volumes of $\cstate$ which must (not) satisfy $\prop_i$ can be extracted by solving:

\vspace{3pt}
\begin{problem}[Volume extraction]\label{prob:volextract}
\begin{equation*}
	\begin{array}{>{\displaystyle}c >{\displaystyle}l >{\displaystyle}l}
			&\\[-15pt]
		\underset{\substack{\varepsilon, \cstate_\textrm{near}, \params, \boldsymbol{\lambda}_t^{j,k}, \boldsymbol{\lambda}_{i,t}^{j,\neg k},\\\boldsymbol{\nu}_t^{j,k}, \mathbf{s}_{i,t}^j, \mathbf{Q}_{i,t}^{j}, \mathbf{Z}^j}}{\text{minimize}} & \varepsilon &\\
		\text{subject to} & \{\textrm{KKT}_\textrm{LTL}(\demj)\}_{j=1}^{\numsafe} \\[3pt]
		& \Vert \cstate_\textrm{near} - \cstate_\textrm{query} \Vert_\infty \le \varepsilon \\[3pt]
		& G_i(\cstate_\textrm{near}, \params_i) > 0 \textrm{ \textbf{OR} } G_i(\cstate_\textrm{near}, \params_i) \le 0\\[2pt]
	\end{array}\hspace{-15pt}
\end{equation*}
\end{problem}
\noindent Prob. \ref{prob:volextract} searches for the largest box centered around $\cstate_\textrm{query}$ contained in $\guarsafe^i$/$\guarunsafe^i$. An explicit approximation of $\guarsafe^i$/$\guarunsafe^i$ can then be obtained by solving Prob. \ref{prob:volextract} for many different $\cstate_\textrm{query}$.

\section{Learning Temporal Logic Structure ($\params$, $\struct$)}\label{sec:params_and_struct}

We will discuss how to frame the search over LTL structures $\struct$ (Sec. \ref{sec:dag}), the learnability of $\struct$ based on demonstration optimality (Sec. \ref{sec:learnability}), and how we combine notions of discrete and continuous optimality to learn $\struct$ and $\params$ (Sec. \ref{sec:genframework}).

\subsection{Representing LTL structure}\label{sec:dag}
We adapt \cite{daniel} to search for a directed acyclic graph (DAG), $\Dag$, that encodes the structure of a parametric LTL formula and is equivalent to its parse tree, with identical subtrees merged. Hence, each node still has at most two children, but can have multiple parents. This framework enables both a complete search over length-bounded LTL formulas and encoding of specific formula templates through constraints on $\Dag$ \cite{daniel}.

Each node in $\Dag$ is labeled with an AP or operator from \eqref{eq:grammar} and has at most two children; binary operators like $\wedge$ and $\vee$ have two, unary operators like $\eventually_{[t_1,t_2]}$ have one, and APs have none (see Fig. \ref{fig:parse_tree}). Formally, a DAG with $\Nsat$ nodes, $\Dag = (\parse, \lparse, \rparse)$, can be represented as: $\parse\in \{0, 1\}^{\Nsat \times \Noper}$, where $\parse_{u,v} = 1$ if node $u$ is labeled with element $v$ of the grammar and $0$ else, and $\lparse, \rparse\in \{0,1\}^{\Nsat\times\Nsat}$, where $\lparse_{u,v}=1$ / $\rparse_{u,v}=1$ if node $v$ is the left/right child of node $u$ and $0$ else. The DAG is enforced to be well-formed (i.e. there is one root node, no isolated nodes, etc.) with further constraints; see \cite{daniel} for more details. Since $\Dag$ defines a parametric LTL formula, we set $\struct = \Dag$. To ensure that demonstration $j$ satisfies the LTL formula encoded by $\Dag$, we introduce a satisfaction matrix $\satmatrix_j^\textrm{dem} \in \{0,1\}^{\Nsat\times T_j}$, where $\satmatrix_{j,(u,t)}^\textrm{dem}$ encodes the truth value of the subformula for the subgraph with root node $u$ at time $t$ (i.e., $\satmatrix_{j,(u,t)}^\textrm{dem}=1$ iff the suffix of $\demj$ starting at time $t$ satisfies the subformula). This can be encoded with constraints:

\vspace{-5pt}
\begin{equation}\label{eq:sat1}
	|\satmatrix_{j,(u,t)}^\textrm{dem} - \Phi_{uv}^t| \le M(1 - \parse_{u,v})
\end{equation}
\vspace{-10pt}

\noindent where $\Phi_{uv}^t$ is the truth value of the subformula for the subgraph rooted at $u$ if labeled with $v$, evaluated on the suffix of $\demj$ starting at time $t$. The truth values are recursively generated, and the leaf nodes, each labeled with some AP $i$, have truth values set to $\Bool_i^j(\params_i)$. Next, we can enforce that the demonstrations satisfy the formula encoded in $\Dag$ by enforcing:

\vspace{-7pt}
\begin{equation}\label{eq:sat2}
	\satmatrix_{j,(\textrm{root},1)}^\textrm{dem} = 1,\ j = 1,\ldots, \numsafe
\end{equation}
\vspace{-12pt}

\noindent We will also use synthetically-generated invalid trajectories $\{\traj^{\neg s}\}_{j=1}^{\numunsafe}$ (Sec. \ref{sec:genframework}). To ensure $\{\traj^{\neg s}\}_{j=1}^{\numunsafe}$ do not satisfy the formula, we add more satisfaction matrices $\satmatrix_j^{\neg s}$ and enforce:

\vspace{-9pt}
\begin{equation}\label{eq:sat3}
	 \satmatrix_{j,(\textrm{root},1)}^{\neg s} =0, \ j = 1,\ldots, \numunsafe.
\end{equation}
\vspace{-13pt}

\noindent After discussing learnability, we will show how $\Dag$ can be integrated into the KKT-based learning framework in Sec. \ref{sec:genframework}.

\subsection{A detour on learnability}\label{sec:learnability}
When learning only the AP parameters $\params$ (Sec. \ref{sec:onlyparams}), we assumed that the demonstrator chooses any \textit{feasible} assignment of $\Bool$ consistent with the specification, then finds a locally-optimal trajectory for those fixed $\Bool$. Feasibility is enough if the structure $\struct$ of $\ltl$ is known: to recover $\params$, we just need to find some $\Bool$ which is feasible with respect to the known $\struct$ (i.e. $\Phi(\Bool^j, \params, \struct) = 1$) and makes $\demj$ locally-optimal; that is, the demonstrator can choose an arbitrarily suboptimal high-level plan as long as its low-level plan is locally-optimal for the chosen high-level plan. However, if $\struct$ is also unknown, only using boolean feasibility is not enough to recover meaningful logical structure, as this makes any formula $\varphi$ for which $\Phi(\Bool^j, \params, \struct) = 1$ consistent with the demonstration, including trivially feasible formulas always evaluating to $\top$. Consider the example in Fig. \ref{fig:venn_diagram}: $\params_1, \params_2$ are known and we are given two kinematic demonstrations minimizing path length under input constraints, formula $\varphi = (\neg p_2\ \mathcal{U}_{[0, T_j-1]}\ p_1)\wedge \eventually_{[0,T_j-1]} p_2 $, and start/goal constraints.
Assuming boolean feasibility, we cannot distinguish between formulas in $\formset_f$, the set of formulas for which the demonstrations are feasible in the discrete variables and locally-optimal in the continuous variables.

On the other end of the spectrum, we can assume the demonstrator is \textit{globally-optimal}. This invalidates many structures in $\formset_f$, i.e. the blue trajectory should not visit both $\safeset_1$ and $\safeset_2$ if $\varphi = (\eventually_{[0,T_j-1]} p_1)\vee(\eventually_{[0,T_j-1]} p_2)$; we achieve a lower cost by only visiting one. Using global optimality, we can distinguish between all but the formulas with globally-optimal trajectories of equal cost (formulas in $\formset_g$), i.e. we cannot learn the ordering constraint ($\neg p_2\ \mathcal{U}_{[0,T_j-1]}\ p_1$) from only the blue trajectory, as it coincides with the globally-optimal trajectory for $\varphi = (\eventually_{[0,T_j-1]} p_1) \andltl (\eventually_{[0,T_j-1]} p_2)$; we need the yellow trajectory to distinguish the two. We now define an optimality condition between feasibility and global optimality
\begin{definition}[Spec-optimality]\label{def:specopt}
	A demonstration $\demj$ is \textit{$\mu$-spec-optimal ($\mu$-SO)}, where $\mu \in \mathbb{Z}_+$, if for every index set $\iota \doteq \{(i_1,t_1),...,(i_\mu,t_\mu) \}$ in $\mathcal{I} \doteq \{ \iota \mid i_m \in \{1,...,\Nap\}, t_m \in \{1,..., T_j\}, m=1,...,\mu\}$, at least one of the following holds:
	\begin{itemize}
		\item $\demj$ is locally-optimal after removing the constraints associated with $\prop_{i_m}$ on $\cstate_{t_m}^j$, for all $(i_m, t_m) \in \iota$.
		\item For each index $(i_m, t_m) \in \iota$, the formula is not satisfied for a perturbed $\Bool$, denoted $\hat\Bool$, where $\hat\Booll_{i_m,t_m}(\params_{i_m}) = \neg \Booll_{i_m,t_m}(\params_{i_m})$, for all $m=1,\ldots,\mu$, and $\hat\Booll_{i',t'}(\params_{i'}) = \Booll_{i',t'}(\params_{i'})$ for all $(i',t')\notin \iota$.
		\item $\demj$ is infeasible with respect to $\hat\Bool$.
	\end{itemize}
\end{definition}

Spec-optimality enforces a level of logical optimality: if a state $\cstate_t^j$ on demonstration $\demj$ lies inside/outside of AP $i$ (i.e. $G_i(\cstate_t^j, \params_i) \le 0$ / $\ge 0$), and the cost $c(\demj)$ can be lowered if that AP constraint is relaxed, then the constraint must hold to satisfy the specification. Intuitively, this means that the demonstrator does not visit/avoid APs which will needlessly increase the cost and are not needed to complete the task. Further discussion regarding spec-optimality is presented in App. \ref{sec:spec_opt}. As globally-optimal demonstrations must also be spec-optimal (c.f. Lem. \ref{lem:specglobopt}), we will use spec-optimality to vastly reduce the search space when searching for formulas which make the demonstrations globally-optimal (Sec. \ref{sec:genframework}).

\begin{figure}[h]
\centering
\includegraphics[width=\linewidth]{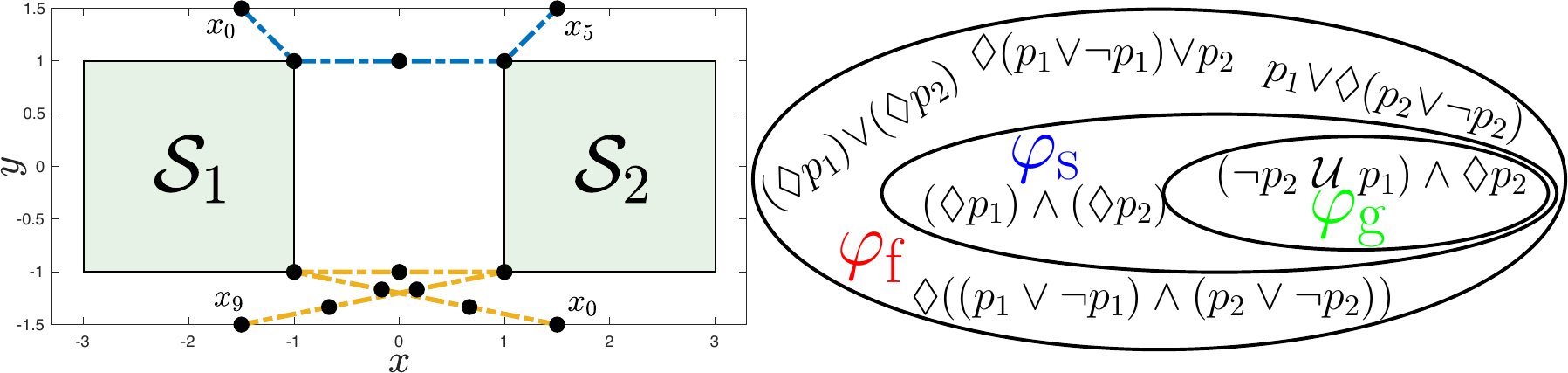}
\caption{\textbf{Left}: Two demonstrations which satisfy the LTL formula $\varphi = (\neg p_2\ \mathcal{U}_{[0,T_j-1]}\ p_1)\wedge \eventually_{[0,T_j-1]} p_2 $ (first satisfy $p_1$, then satisfy $p_2$). \textbf{Right}: Some example formulas that are consistent with $\varphi$, for various levels of discrete optimality ($\varphi_f$: discrete feasibility, $\varphi_s$: spec-optimality, $\varphi_g$: discrete global optimality).}\label{fig:venn_diagram}
\end{figure}

\begin{algorithm}
\SetInd{0.25em}{1em}
\SetAlgoLined
\textbf{Input}: $\{\demj\}_{j=1}^{\numsafe}$, $\bar\safeset$, \textbf{Output}: $\hatstruct, \hatparams$\\
$\Nsat \leftarrow 0$, $\{\demunsafe\} \leftarrow \{\}$

\While{$\neg$ consistent}{
$\Nsat \leftarrow \Nsat + 1$\\

\While{Problem \ref{prob:globopt} is feasible}{
$\hatstruct$, $\hatparams$ $\leftarrow$ Problem \ref{prob:globopt}$(\{\demj\}_{j=1}^{\numsafe}, \{\demunsafe\}, \Nsat)$ \\
\For{$j = 1$ to $\numsafe$}{
$\trajxu^j \leftarrow$ Problem \ref{prob:fwdprob_ctrexample}$(\demj)$ \\
\lIf{\normalfont$c(\trajxu^j) < \frac{c(\demj)}{(1+\delta)}$}{
$\{\demunsafe\} \leftarrow \{\demunsafe\} \cup \trajxu$
}
}
\lIf{\normalfont$\bigvee_{j=1}^{\numsafe} (c(\trajxu^j) < \frac{c(\demj)}{(1+\delta)})$}{
consistent $\leftarrow \top$; break}}}
 \caption{Falsification}\label{alg:falsification}
\end{algorithm}

\subsection{Counterexample-guided framework}\label{sec:genframework}

In this section, we will assume that the demonstrator returns a solution to Prob. \ref{prob:fwdprob} which is \textit{boundedly-suboptimal with respect to the globally optimal solution}, in that $c(\demj) \le (1+\delta)c(\demjopt)$, for a known $\delta$, where $c(\demjopt)$ is the cost of the optimal solution. This is reasonable as the demonstration should be feasible (completes the task), but may be suboptimal in terms of cost (i.e. path length, etc.), and $\delta$ can be estimated from repeated demonstrations. Under this assumption, any trajectory $\trajxu$ satisfying the known constraints $\bar\feat(\trajxu) \in \bar \safeset$ at a cost lower than the suboptimality bound, i.e. $c(\trajxu) \le c(\demj)/(1+\delta)$, must violate $\ltl$ \cite{wafr, corl}. We can use this to reject candidate structures $\hatstruct$ and parameters $\hatparams$. If we can find a counterexample trajectory that satisfies the candidate LTL formula $\ltlprop$ at a lower cost by solving Prob. \ref{prob:fwdprob_ctrexample},

\vspace{2pt}
\begin{problem}[Counterexample search]\label{prob:fwdprob_ctrexample}
	\begin{equation*}
	\begin{array}{>{\displaystyle}c >{\displaystyle}l >{\displaystyle}l}
				&\\[-19pt]
		\text{find} & \quad \trajxu &\\[-2pt]
		\text{subject to} & \quad \trajxu \models \ltlprop \\
		& \quad \bar\feat(\trajxu) \in \bar\safeset(\demj) \subseteq \constraintspace\\
		& \quad c(\trajxu) < c(\demj)/(1+\delta)
	\end{array}\hspace{-15pt}
\end{equation*}
\end{problem}
\vspace{-2pt}

\noindent then $\ltlprop$ cannot be consistent with the demonstration. Thus, we can search for a consistent $\hatstruct$ and $\hatparams$ by iteratively proposing candidate $\hatstruct$ / $\hatparams$ by solving Prob. \ref{prob:globopt} (a modified version of Prob. \ref{prob:kkt_exact_ltl}, which we will discuss shortly) and searching for counterexamples that can prove the parameters are invalid/valid; this is summarized in Alg. \ref{alg:falsification}. Heuristics on the falsification loop are discussed in App. \ref{sec:loopvariants}. We now discuss the core components of Alg. \ref{alg:falsification} (Probs. \ref{prob:fwdprob_ctrexample} and \ref{prob:globopt}) in detail.

\noindent\textbf{Counterexample generation}: We propose different methods to solve Prob. \ref{prob:fwdprob_ctrexample} based on the dynamics. For piecewise affine systems, Prob. \ref{prob:fwdprob_ctrexample} can be solved directly as a MILP \cite{wolff}. However, the LTL planning problem for general nonlinear systems is challenging \cite{LiF17, FuPT17}. Probabilistically-complete sampling-based methods \cite{LiF17, FuPT17} or falsification tools \cite{AnnpureddyLFS11} can be applied, but can be slow on high-dimensional systems. For simplicity and speed, we solve Prob. \ref{prob:fwdprob_ctrexample} by finding a trajectory $\hattrajxu \models \ltlprop$ and boolean assignment $\Bool$ for a kinematic approximation of the dynamics via solving a MILP, then warm-start the nonlinear optimizer with $\hattrajxu$ and constrain it to be consistent with $\Bool$, returning some $\trajxu$. If $c(\trajxu) < c(\demj)/(1+\delta)$, then we return, otherwise, we generate a new $\hattrajxu$. Whether this method returns a valid counterexample depends on if the nonlinear optimizer converges to a feasible solution; hence, this approach is not complete. However, we show that it works well in practice (see Sec. \ref{sec:results}).

\noindent \textbf{Unifying parameter and structure search}: When both $\params$ and $\struct$ are unknown, they must be jointly learned due to their interdependence: learning the structure involves finding an unknown boolean function of $\params$, parameterized by $\struct$, while learning the AP parameters $\params$ requires knowing which APs were selected or negated, determined by $\struct$. This can be done by combining the KKT \eqref{eq:kkt_ltl} and DAG constraints \eqref{eq:sat1}-\eqref{eq:sat3} into a single MILP, which can then be integrated into Alg. \ref{alg:falsification}:

\vspace{3pt}
\begin{problem}[Learning $\params$, $\struct$ by global optimality, KKT]\label{prob:globopt}
\begin{equation*}\vspace{2pt}
	\hspace{-20pt}\begin{array}{>{\displaystyle}c >{\displaystyle}l >{\displaystyle}l}
				&\\[-15pt]
		\text{find} & \begin{split}\Dag, \satmatrix_j^\textrm{dem}, \satmatrix_j^{\neg s}, \params, \boldsymbol{\lambda}_{t}^{j,k}, \boldsymbol{\lambda}_{i,t}^{j,\neg k},\boldsymbol{\nu}_t^{j,k}, \mathbf{s}_{i,t}^j, \mathbf{Q}_{i,t}^{j}, \mathbf{Z}^j, \\ \forall i,j,t\end{split} \\
		\text{s.t.} & \{\textrm{KKT}_\textrm{LTL}(\demj)\}_{j=1}^{\numsafe}\\[1pt]
		& \textrm{topology constraints for } \Dag \\
		& \textrm{Equations }\eqref{eq:sat1}-\eqref{eq:sat2},\ j = 1,\ldots,\numsafe \\
		& \textrm{Equation }\eqref{eq:sat3},\ j = 1,\ldots,\numunsafe \\
	\end{array}\hspace{-20pt}
\end{equation*}
\end{problem}
\vspace{-5pt}

In Prob. \ref{prob:globopt}, since 1) the $\Bool_i^j(\params_i)$ at the leaf nodes of $\Dag$ are constrained via \eqref{eq:ltlfeas} to be consistent with $\params$ and $\demj$ and 2) the formula defined by $\Dag$ is constrained to be satisfied for the $\Bool$ via \eqref{eq:sat1}, the low-level demonstration $\demj$ must be feasible for the overall LTL formula defined by the DAG, i.e. $\ltl$, where $\struct = \Dag$.
$\textrm{KKT}_\textrm{LTL}(\demj)$ then chooses AP parameters $\params$ to make $\demj$ locally-optimal for the continuous optimization induced by a fixed realization of boolean variables. Overall, Prob. \ref{prob:globopt} finds a pair of $\params$ and $\struct$ which makes $\demj$ locally-optimal for a fixed $\Bool^j$ which is \textit{feasible} for $\ltl$, i.e. $\Phi(\Bool^j, \params, \struct) = 1$, for all $j$. To also impose the spec-optimality conditions (Def. \ref{def:specopt}), we can add these constraints to Prob. \ref{prob:globopt}: 

\begin{subequations}\label{eq:specopt}
	\begin{gather}
		 \satmatrix_{j,(\textrm{root},1)}^{\textrm{dem}, \hat \Bool_n^j} \le b_{nj}^1 \label{eq:specopt_1}\\
		\begin{split}\Vert {\boldsymbol{\lambda}_{i_m,t_m}^{j,\neg k}}^{\hspace{-8pt}\top} \nabla_{\state_t} \mathbf{g}_{i_m}^{\neg k}(\feat(\state_t^j), \params_{i_m}) \Vert \le M(1-b_{nj}^2),\\ m = 1,...,\mu\hspace{-5pt}\end{split}\label{eq:specopt_2}\\
		\mathbf{g}_{i_m}^{\neg k}(\eta(\state_t^j), \params_{i_m}) \ge -M(1- \mathbf{e}_{nm}^j),\ m = 1,...,\mu\label{eq:specopt_3} \\
		\mathbf{1}_{N_\ineq^{i_m}}^\top \mathbf{e}_{nm}^j \ge \hat \Booll_{i_m t_m}^j(\params_{i_m}) - b_{nj}^3,\ m = 1,...,\mu\label{eq:specopt_4} \\
		\mathbf{g}_{i_m}^{\neg k}(\eta(\state_t^j), \params_{i_m}) \le M(\hat \Booll_{i_m, t_m}^j + b_{nj}^3)\label{eq:specopt_5}\\[3pt]
		\begin{split}b_{nj}^1 + b_{nj}^2 + b_{nj}^3 \le 1,\quad \mathbf{b}_{nj} \in \{0,1\}^3,\\ \mathbf{e}_{nm}^j \in \{0,1\}^{N_\ineq^{i_m}}\end{split}\label{eq:specopt_6}
	\end{gather}
\end{subequations}

\noindent for $n = 1,\ldots,|\mathcal{I}|$, where $\satmatrix_{j}^{\textrm{dem}, \hat \Booll_n^j}$ is the satisfaction matrix for $\demj$ where the leaf nodes are perturbed to take the values of $\hat \Booll_n^j$, where $n$ indexes an $\iota \in \mathcal{I}$. \eqref{eq:specopt_1} models the case when the formula is not satisfied, \eqref{eq:specopt_2} models when $\demj$ remains locally-optimal upon relaxing the constraint (zero stationarity contribution), and \eqref{eq:specopt_3}-\eqref{eq:specopt_5} model the infeasible case.

\begin{rem}
	If $\mu = 1$, the infeasibility constraints \eqref{eq:specopt_3}-\eqref{eq:specopt_5} can be ignored (since together with \eqref{eq:specopt_1}, they are redundant), and we can modify \eqref{eq:specopt_6} to $b_{nj}^1 + b_{nj}^2 \le 1$, $\mathbf{b}_{nj} \in \{0,1\}^2$.
\end{rem}
\begin{rem}
It is only useful to enforce spec-optimality on index pairs $(i_1,t_1),\ldots, (i_\mu,t_\mu)$ where $G_{i_m}(\cstate_{t_m}^j, \params_{i_m}) = 0$ for all $m = 1,...,\mu$; otherwise the infeasibility case automatically holds. If $\params$ is unknown, we won't know \textit{a priori} when this holds, but if $\params$ are (approximately) known, we can pre-process so that spec-optimality is only enforced for salient $\iota \in \mathcal{I}$.
\end{rem}
\begin{rem}\label{rem:surveillance}
	Prob. \ref{prob:globopt} with spec-optimality constraints \eqref{eq:specopt} can be used to directly search for a $\ltlprop$ which can be satisfied by visiting a set of APs in any order (i.e. surveillance-type tasks) without using the loop in Alg. \ref{alg:falsification}, since \eqref{eq:specopt} directly enforces that any AP (1-SO) or a set of APs ($\mu$-SO) which were visited and which prevent the trajectory cost from being lowered must be visited for any candidate $\ltlprop$.
\end{rem}

\section{Learning Cost Function Parameters ($\params$, $\struct$, $\costparams$)}\label{sec:costparams}

If $\costparams$ is unknown, it can be learned by modifying $\textrm{KKT}_\textrm{LTL}$ to also consider $\costparams$ in the stationarity condition: all terms like $\nabla_{\trajxu} c(\demj)$ should be modified to $\nabla_{\trajxu} c(\demj, \costparams)$. When $c(\cdot, \cdot)$ is affine in $\costparams$ for fixed $\demj$, the stationarity condition is representable with a MILP constraint. However, the falsification loop in Alg. \ref{alg:falsification} requires a fixed cost function in order to judge if a trajectory is a counterexample. Thus, one valid approach is to first solve Prob. \ref{prob:globopt}, searching also for $\costparams$, then fixing $\costparams$, and running Alg. \ref{alg:falsification} for the fixed $\costparams$ (see App. \ref{sec:app_costalg}). Note that this procedure either eventually returns an LTL formula consistent with the fixed $\costparams$, or Alg. \ref{alg:falsification} becomes infeasible, and a new $\costparams$ must be generated and Alg. \ref{alg:falsification} rerun. While this procedure is guaranteed to eventually return a set of $\costparams$, $\struct$, and $\params$ which make each $\demj$ globally-optimal with respect to $c(\trajxu, \costparams)$ under $\ltl$, it may require iterating through an infinite number of candidate $\costparams$ and hence is not guaranteed to terminate in finite time (Cor. \ref{thm:completeness_costunc}). Nevertheless, we note that for certain simple classes of formulas (Rem. \ref{rem:surveillance}), a consistent set of $\costparams$, $\struct$, and $\params$ can be recovered in one shot.

\section{Theoretical Analysis}\label{sec:theory}

In this section, we prove that our method is complete under some assumptions, without (Thm. \ref{thm:completeness}) or with (Cor. \ref{thm:specoptcompleteness}) spec-optimality, and that we can compute guaranteed conservative estimates of $\safeset_i$/$\unsafeset_i$ (Thm. \ref{thm:innerapprox}). Finally, we show that leveraging stronger optimality assumptions on the demonstrator shrinks the set of consistent formulas (Thm. \ref{thm:distinguishability}). See App. \ref{sec:app_theory} for proofs.
\begin{assumption}\label{ass:complete}
	Prob. \ref{prob:fwdprob_ctrexample} is solved with a complete planner.
\end{assumption}
\begin{assumption}\label{ass:locopt}
	Each demonstration is locally-optimal (i.e. satisfies the KKT conditions) for fixed boolean variables.
\end{assumption}
\begin{assumption}\label{ass:feasible}
	The true parameters $\params$, $\struct$, and $\costparams$ are in the hypothesis space of Prob. \ref{prob:globopt}: $\params \in \Theta_p$, $\struct \in \Theta_s$, $\costparams \in \Theta_c$.
\end{assumption}

\begin{theorem}[Completeness and consistency, unknown case]\label{thm:completeness}
	Under Assumptions \ref{ass:complete}-\ref{ass:feasible}, Alg. \ref{alg:falsification} is guaranteed to return a formula $\ltl$ such that 1) $\demj \models \ltl$ and 2) $\demj$ is globally-optimal under $\ltl$, for all $j$, 3) if such a formula exists and is representable by the provided grammar.
\end{theorem}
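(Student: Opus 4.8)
The plan is to analyze Algorithm \ref{alg:falsification} as a nested loop: the outer loop increments the DAG size $\Nsat$, while the inner loop, for a fixed $\Nsat$, proposes candidate formulas via Problem \ref{prob:globopt} and attempts to falsify them via Problem \ref{prob:fwdprob_ctrexample}. I would establish the three claimed properties (membership, global optimality, existence-conditioned termination) by showing that (a) every formula the algorithm \emph{returns} satisfies properties 1 and 2, and (b) if a consistent formula exists in the grammar, the algorithm reaches a return statement in finite time. I would first pin down the loop's termination condition: a candidate $\ltlprop$ is declared consistent precisely when, for every demonstration $j$, Problem \ref{prob:fwdprob_ctrexample} fails to produce a counterexample, i.e.\ no feasible trajectory satisfies $\ltlprop$ at cost below $c(\demj)/(1+\delta)$.

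For the soundness direction (a), I would argue as follows. When the algorithm returns $\hatstruct, \hatparams$, two facts hold simultaneously. First, because $\hatstruct, \hatparams$ came from a feasible solution of Problem \ref{prob:globopt}, constraints \eqref{eq:sat1}--\eqref{eq:sat2} force each $\demj \models \varphi(\hatstruct,\hatparams)$, giving property 1. Second, the return condition guarantees that for each $j$ no trajectory satisfying $\varphi(\hatstruct,\hatparams)$ and the known constraints $\bar\safeset$ achieves cost strictly below $c(\demj)/(1+\delta)$; combined with the boundedly-suboptimal assumption $c(\demj) \le (1+\delta)c(\demjopt)$, this yields that $c(\demjopt) \ge c(\demj)/(1+\delta)$ is in fact the optimal cost attainable under $\varphi(\hatstruct,\hatparams)$, so $\demj$ is globally optimal (within the suboptimality tolerance) under the learned formula, giving property 2. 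Here Assumption \ref{ass:complete} is essential: only a complete planner certifies that the \emph{absence} of a returned counterexample means no counterexample exists, so that the falsification test is a genuine global-optimality certificate rather than a failure of the planner.

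For the completeness direction (b), conditioned on the existence of a consistent grammar-representable formula, I would use Assumption \ref{ass:feasible} to place the true $(\params,\struct,\costparams)$ inside the hypothesis space, so the true formula has some finite DAG size $\Nmin$. The key monotonicity observation is that once the outer loop reaches $\Nsat = \Nmin$, Problem \ref{prob:globopt} is feasible (the true parameters witness feasibility, using Assumption \ref{ass:locopt} to satisfy the KKT block), so the inner loop is nonempty. The inner loop cannot cycle forever on a fixed $\Nsat$ because each iteration that fails to certify consistency appends a genuine counterexample trajectory to $\{\demunsafe\}$, and constraint \eqref{eq:sat3} permanently excludes every candidate formula that this counterexample violates; since the number of distinct formulas expressible by a DAG of size $\Nsat$ over the finite grammar is finite (modulo the continuous $\params$, which I would handle by noting that distinct structures are what the counterexamples eliminate), the inner loop either returns or exhausts feasibility in finitely many steps.

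The main obstacle I expect is the finiteness argument for the inner loop, specifically reconciling the discrete structure search with the continuous AP-parameter search: the set of candidate $(\hatstruct,\hatparams)$ is not finite because $\hatparams$ ranges over a continuum, so I cannot naively count iterations. I would resolve this by arguing that each counterexample rules out an entire region of the joint parameter-structure space (all formulas the counterexample falsifies), and that the global optimality gap being strictly violated, $c(\trajxu^j) < c(\demj)/(1+\delta)$, makes each added counterexample strictly informative; termination for fixed $\Nsat$ then follows from the fact that there are finitely many \emph{discrete structures} $\hatstruct$ of size $\Nsat$, and for each fixed structure the falsification-driven refinement of $\hatparams$ drives the candidate toward the true consistent parameters or proves infeasibility. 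Formally completing this step likely requires either a compactness argument on $\Theta_p$ or an appeal to the completeness of the underlying counterexample-guided inductive synthesis scheme, and this is where the proof would need the most care.
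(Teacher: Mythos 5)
Your proposal follows essentially the same route as the paper's proof: property 1 from the satisfaction-matrix constraints \eqref{eq:sat1}--\eqref{eq:sat2} of Problem \ref{prob:globopt}, property 2 from the fact that the loop only declares consistency once the complete planner (Assumption \ref{ass:complete}) certifies that Problem \ref{prob:fwdprob_ctrexample} admits no lower-cost satisfying trajectory, and property 3 from the completeness of the DAG representation together with Assumptions \ref{ass:locopt}--\ref{ass:feasible} placing a true consistent formula in the hypothesis space. You are in fact more careful than the paper on two points it glosses over --- the finite termination of the inner loop over the continuous parameter space, and the fact that the falsification test certifies global optimality only up to the suboptimality tolerance $\delta$ --- so the residual gap you flag is one the paper's own proof shares rather than one you introduced.
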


\begin{corollary}[Shortest formula]
	Let $\Nmin$ be the minimal size DAG for which there exists $(\params, \struct)$ such that $\demj \models \ltl$ for all $j$. Under Assumptions \ref{ass:complete}-\ref{ass:feasible}, Alg. \ref{alg:falsification} is guaranteed to return a DAG of length $\Nmin$.
\end{corollary}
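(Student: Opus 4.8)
The plan is to leverage the completeness result of Theorem~\ref{thm:completeness} together with the specific outer-loop structure of Alg.~\ref{alg:falsification}, which increments $\Nsat$ starting from small values. The key observation is that Alg.~\ref{alg:falsification} begins with $\Nsat = 0$ and increments $\Nsat \leftarrow \Nsat + 1$ on each pass of the outermost \texttt{while} loop, only terminating when a consistent formula is found at the current DAG size. Thus the algorithm searches DAG sizes in strictly increasing order, and I would argue that it cannot return a formula of size larger than $\Nmin$ and cannot terminate with a formula of size smaller than $\Nmin$.

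First I would establish the lower bound: for every $\Nsat < \Nmin$, no consistent formula exists. This follows directly from the definition of $\Nmin$ as the \emph{minimal} size DAG for which there exist $(\params, \struct)$ with $\demj \models \ltl$ for all $j$. For any strictly smaller $\Nsat$, there is no choice of parameters and structure making all demonstrations even feasible, hence certainly none making them globally-optimal; so the inner search (Prob.~\ref{prob:globopt} combined with the counterexample loop) must fail to certify consistency, and the algorithm proceeds to increment $\Nsat$. Here I would appeal to Assumption~\ref{ass:feasible} to note that the true parameters lie in the hypothesis space, so the search space at each $\Nsat$ is exactly the set of well-formed DAGs of that size augmented with admissible parameters.

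Next I would establish the upper bound, namely that the algorithm does terminate by the time $\Nsat = \Nmin$. By the definition of $\Nmin$, there exists a DAG of size $\Nmin$ with parameters making each $\demj$ feasible. By Theorem~\ref{thm:completeness}, under Assumptions~\ref{ass:complete}--\ref{ass:feasible}, Alg.~\ref{alg:falsification} is guaranteed to return a formula consistent with the demonstrations whenever one exists and is representable in the grammar; the inner falsification loop using a complete planner (Assumption~\ref{ass:complete}) will reject all inconsistent candidates and certify a consistent one. Combining the two bounds, the first $\Nsat$ at which the algorithm terminates is exactly $\Nmin$, and it returns a DAG of that length.

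The main obstacle I anticipate is making precise the claim that ``no consistent formula exists for $\Nsat < \Nmin$'' aligns exactly with the termination condition of the inner loop. Specifically, I must be careful about the distinction between \emph{feasibility} (existence of $(\params,\struct)$ with $\demj \models \ltl$), which defines $\Nmin$, and the stronger \emph{global-optimality / consistency} condition that the falsification loop actually certifies via counterexample rejection. I would need to verify that if feasibility fails at size $\Nsat$, then Prob.~\ref{prob:globopt} is infeasible at that $\Nsat$, so the inner \texttt{while} loop is never entered and no spurious early termination occurs; and conversely that the consistency found at $\Nmin$ is genuine, relying on Theorem~\ref{thm:completeness} to handle the global-optimality certification. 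A subtle point is ensuring that a \emph{feasible} DAG of size $\Nmin$ also admits a \emph{globally-optimal-consistent} formula at that same size, which follows because the true formula itself (which makes each $\demj$ globally-optimal by the demonstrator's bounded-suboptimality assumption) is representable at size $\Nmin$ by Assumption~\ref{ass:feasible}.
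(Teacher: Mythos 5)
Your proof takes essentially the same route as the paper's: the outer loop increments $\Nsat$ one unit at a time, no consistent formula can exist at any size below $\Nmin$ by definition of $\Nmin$, and Theorem \ref{thm:completeness} guarantees the inner falsification loop certifies a consistent formula once one is representable, so the first size at which the algorithm terminates is $\Nmin$. The paper's own proof is a two-line version of exactly this argument.

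The one place where you go beyond the paper is also the one place where your argument does not hold up. You correctly flag the mismatch between the definition of $\Nmin$ (minimal size admitting mere \emph{feasibility}, i.e., $\demj \models \ltl$ for all $j$) and the algorithm's actual termination condition (existence of a formula of that size making every $\demj$ \emph{globally optimal}). But your patch --- that ``the true formula itself is representable at size $\Nmin$ by Assumption \ref{ass:feasible}'' --- does not follow: Assumption \ref{ass:feasible} only places the true $(\params, \struct, \costparams)$ somewhere in the hypothesis space; it says nothing about the true DAG having size exactly $\Nmin$. A strictly weaker formula (e.g., one dropping an ordering constraint) could be satisfied by all demonstrations at a size smaller than the true formula's, yet every candidate at that size could admit lower-cost counterexamples, in which case the inner loop at $\Nsat = \Nmin$ exhausts Prob.~\ref{prob:globopt} without certifying consistency and the algorithm proceeds to $\Nmin + 1$. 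So what the algorithm actually returns is the minimal size admitting a formula that makes the demonstrations feasible \emph{and} globally optimal, which is only lower-bounded by $\Nmin$ as defined in the statement. To be fair, the paper's own proof glosses over precisely the same point; but your explicit justification of that step is the one genuine gap in your write-up, and it cannot be repaired from the stated assumptions alone.
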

\begin{lemma}\label{lem:specglobopt}
	All globally-optimal trajectories are $\mu$-SO.
\end{lemma}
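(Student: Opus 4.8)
The plan is to argue by contradiction. Suppose $\demj$ is globally optimal for Prob.~\ref{prob:fwdprob} yet fails to be $\mu$-SO, so there is an index set $\iota = \{(i_1,t_1),\ldots,(i_\mu,t_\mu)\}$ for which all three bullets of Def.~\ref{def:specopt} are false; the goal is to produce a feasible trajectory of strictly lower cost, contradicting global optimality. First I would record two structural consequences of the failure of bullets two and three. Since the third bullet fails, $\demj$ is feasible with respect to the fully flipped assignment $\hat\Bool$; because flipping $\Booll_{i_m,t_m}$ swaps the required side of AP $i_m$ at time $t_m$, simultaneous feasibility for $\Bool$ and $\hat\Bool$ forces every relevant constraint state onto the AP boundary, i.e.\ $G_{i_m}(\cstate^j_{t_m},\params_{i_m}) = 0$ for all $m$ (this is the observation behind the remark that spec-optimality is only informative at active indices). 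Since the second bullet fails, $\hat\Bool$ satisfies the formula, so the continuous problem with booleans frozen to $\hat\Bool$ is a valid relaxation of Prob.~\ref{prob:fwdprob} all of whose feasible points model $\ltl$.

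Next I would exploit the failure of the first bullet: it yields a descent direction $d$ for the problem obtained by \emph{removing} the constraints indexed by $\iota$, so moving $\demj$ infinitesimally along $d$ gives $\trajxu'$ with $c(\trajxu') < c(\demj)$ that still satisfies the dynamics, $\bar\safeset$, and every AP constraint outside $\iota$; hence $\trajxu'$ agrees with $\demj$ on all booleans indexed outside $\iota$. Let $S \subseteq \iota$ be the set of indices whose membership $\trajxu'$ flips relative to $\demj$, and let $\Bool_S$ denote the corresponding boolean labeling (equal to $\Bool$ off $S$, flipped on $S$), which is exactly the labeling realized by $\trajxu'$. By Assumption~\ref{ass:locopt}, $\demj$ is locally optimal for the continuous problem with booleans fixed to $\Bool$, so $d$ cannot keep all of $\iota$ on its original side; therefore $S \neq \emptyset$.

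The clean case is $S = \iota$: then $\Bool_S = \hat\Bool$, which satisfies the formula, and $\trajxu'$ is feasible for it (its $\iota$-states lie on the flipped side), so $\trajxu' \models \ltl$ at strictly lower cost, contradicting global optimality. In particular, when $\mu = 1$ this is the \emph{only} possibility, since $S \neq \emptyset$ forces $S = \iota$, and the proof is complete. The same contradiction is reached whenever the descent direction supplied by the first bullet happens to flip the entire set $\iota$.

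The main obstacle is the mixed case $\emptyset \neq S \subsetneq \iota$, in which $\trajxu'$ realizes a partially flipped labeling $\Bool_S$ that need not satisfy $\ltl$ (for instance under XNOR-type couplings between two APs), so $\trajxu'$ is not an admissible competitor and no immediate contradiction appears. To close this, I would use that all $\iota$-states sit exactly on their AP boundaries, together with the fact that global optimality makes $\demj$ locally optimal over the feasible set restricted separately to the $\Bool$- and $\hat\Bool$-sign regions: any cost-decreasing direction of the removed problem must therefore point strictly into a mixed sign region. The delicate step is to show that, because the boundary coordinates $G_{i_m}$ can be varied along the simultaneous boundary and the removed-problem descent cone is convex, the direction $d$ can be re-routed into the fully flipped ($\hat\Bool$) region while remaining first-order descending and feasible, thereby reducing the mixed case to the clean case. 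I expect this re-routing argument---establishing that a descent direction landing in a mixed region implies one landing in the $\hat\Bool$ region---to be the crux, and it is plausibly where a genericity or nondegeneracy condition on the cost and APs is needed.
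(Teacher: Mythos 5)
Your proof follows the same route as the paper's: a case split on which of the three conditions of Def.~\ref{def:specopt} can fail, the observation that failure of the third bullet forces every $\cstate_{t_m}^j$ with $(i_m,t_m)\in\iota$ onto the boundary of its AP (this is exactly the paper's dichotomy between its cases (2) and (1), where a state strictly off the boundary makes $\demj$ infeasible for $\hat\Bool$), and then the use of a descent direction for the relaxed problem to manufacture a lower-cost trajectory that still models the formula. Your treatment of $\mu=1$ and of the ``clean'' case $S=\iota$ is complete and matches the paper's intent. One small refinement: the clean case in fact covers any perturbation in which no $\iota$-state moves strictly into the interior of its original region, since a state that stays on the boundary remains consistent with the flipped label (recall $\unsafeset_i$ is defined as a closure), so $\hat\Bool$ is still realized by $\trajxu'$ in that situation as well.

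The mixed case you flag --- a descent direction that strictly flips a nonempty proper subset of $\iota$ while pushing some other index strictly into the interior of its original region, so that $\trajxu'$ realizes only a partially flipped labeling that need not satisfy the formula --- is a genuine subtlety, and you are right that no purely local argument closes it without further work. What you should know is that the paper's own proof does not close it either: in its case (1) the paper asserts that there exists a lower-cost $\hattrajxu$ with $G_{i_m}(\hat\cstate_{t_m}^j)>0$ for at least one $m$, that this $\hattrajxu$ must be spec-infeasible by global optimality, and then concludes that condition (b) must hold. That last inference presupposes that the infeasible perturbation realizes the fully flipped assignment $\hat\Bool$; if it realizes a mixed assignment, its infeasibility says nothing about whether $\hat\Bool$ satisfies the formula, which is what condition (b) asserts. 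So your proposal is not missing an idea that the paper supplies --- it makes explicit a step the paper passes over. Your suggested repair (re-routing the descent direction into the fully flipped region, exploiting that all $\iota$-states sit on their boundaries) is a reasonable direction, but the paper offers no such argument and invokes no genericity hypothesis; for $\mu=1$, where the mixed case cannot occur, both your argument and the paper's are airtight.
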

\begin{corollary}[Alg. \ref{alg:falsification} with spec-optimality]\label{thm:specoptcompleteness}
	By modifying Alg. \ref{alg:falsification} so that Prob. \ref{prob:globopt} uses constraints \eqref{eq:specopt}, Alg. \ref{alg:falsification} still returns a consistent solution $\ltlprop$ if one exists, i.e. each $\demj$ is feasible and globally optimal for each $\ltlprop$.
\end{corollary}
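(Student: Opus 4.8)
The plan is to reduce the claim to Theorem \ref{thm:completeness}, observing that the modified procedure differs from Alg. \ref{alg:falsification} only in that the candidate-generation step Prob. \ref{prob:globopt} now additionally carries the constraints \eqref{eq:specopt}. Adding constraints can only shrink the set of candidate $(\hatstruct, \hatparams)$ that Prob. \ref{prob:globopt} returns, and it leaves the counterexample step Prob. \ref{prob:fwdprob_ctrexample} and the acceptance test of Alg. \ref{alg:falsification} untouched. Hence \emph{soundness} is immediate: whatever formula is returned still passes the (unchanged) counterexample check, so under Assumption \ref{ass:complete} each $\demj$ is feasible and globally optimal for it, by the same argument as in Theorem \ref{thm:completeness}. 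The entire burden is therefore to re-establish \emph{completeness}, i.e. to show that the extra constraints \eqref{eq:specopt} never prune away a formula under which all demonstrations are feasible and globally optimal.

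First I would argue that \eqref{eq:specopt} is a faithful MILP encoding of the $\mu$-spec-optimality condition of Def. \ref{def:specopt}: for each index set $\iota$ (indexed by $n$) and each demonstration $j$, the selector variables $\mathbf{b}_{nj} \in \{0,1\}^3$ together with \eqref{eq:specopt_6} force exactly one of the three disjuncts to be enforced, with the big-$M$ terms relaxing the other two. Concretely, \eqref{eq:specopt_1} realizes the ``formula not satisfied under the perturbed $\hat\Bool$'' case, \eqref{eq:specopt_2} realizes the ``$\demj$ remains locally optimal after removing the constraints'' case via a vanishing stationarity contribution, and \eqref{eq:specopt_3}--\eqref{eq:specopt_5} (with the auxiliary $\mathbf{e}_{nm}^j$) realize the ``infeasible under $\hat\Bool$'' case. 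I would verify both directions of this correspondence, the essential one being that any $\mu$-SO demonstration admits a feasible assignment of $(\mathbf{b}_{nj}, \mathbf{e}_{nm}^j)$ satisfying \eqref{eq:specopt}.

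Next I would invoke Lemma \ref{lem:specglobopt}: any formula $\ltlprop$ under which all demonstrations are globally optimal is one for which all demonstrations are $\mu$-SO. Combining this with the encoding above, such a consistent $\ltlprop$ — together with its parameters $\hatparams$ and the auxiliary KKT/DAG variables guaranteed by Assumptions \ref{ass:locopt}--\ref{ass:feasible} — admits an assignment of the spec-optimality variables satisfying \eqref{eq:specopt}. Therefore $\ltlprop$ remains feasible for the modified Prob. \ref{prob:globopt}; the added constraints prune only candidates that are feasible but not $\mu$-SO, all of which are necessarily not globally optimal and would in any case be rejected by the counterexample search. Consequently, at every DAG size $\Nsat$ the candidate set produced by the modified Prob. \ref{prob:globopt} still contains every consistent formula of that size, so the enumeration/counterexample argument of Theorem \ref{thm:completeness} carries over verbatim: once $\Nsat$ reaches the size of a shortest consistent formula, the modified Prob. \ref{prob:globopt} can still return one, the counterexample step finds no counterexample, and Alg. \ref{alg:falsification} terminates with a consistent $\ltlprop$.

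The main obstacle I anticipate is the faithful-encoding step: one must verify that the big-$M$ disjunction \eqref{eq:specopt_6} together with \eqref{eq:specopt_1}--\eqref{eq:specopt_5} exactly captures the three-way disjunction of Def. \ref{def:specopt}, including the degenerate indices flagged in the remarks (those with $G_{i_m}(\cstate_{t_m}^j, \params_{i_m}) \ne 0$, for which the infeasibility case holds automatically) and the $\mu = 1$ simplification. The remaining pieces — soundness and the replay of the enumeration argument — are routine once this encoding equivalence is established.
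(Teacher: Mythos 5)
Your proposal is correct and follows essentially the same route as the paper: both reduce the claim to Theorem \ref{thm:completeness} plus Lemma \ref{lem:specglobopt}, arguing that since every globally-optimal demonstration is $\mu$-SO, the added constraints \eqref{eq:specopt} cannot exclude any consistent $(\hatstruct,\hatparams)$ from the feasible set of Prob. \ref{prob:globopt}, so the falsification argument carries over. The paper's proof is terser and simply asserts that \eqref{eq:specopt} "enforces spec-optimality," whereas you explicitly flag the need to verify that the big-$M$ encoding faithfully captures the three-way disjunction of Def. \ref{def:specopt} --- a step the paper takes for granted but which is indeed where the real work lies.
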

\begin{corollary}[Completeness and consistency, unknown cost]\label{thm:completeness_costunc}
	Under Assumptions \ref{ass:complete}-\ref{ass:feasible}, Alg. \ref{alg:falsification_cost} returns a formula $\ltl$ such that 1) $\demj \models \ltl$ and 2) $\demj$ is globally-optimal with respect to $\costparams$ under the constraints of $\ltl$, for all $j$, 3) if such a formula exists and is representable by the provided grammar.
\end{corollary}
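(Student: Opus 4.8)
The plan is to \emph{reduce the unknown-cost case to Theorem \ref{thm:completeness}}. Algorithm \ref{alg:falsification_cost} is a nested procedure: an outer loop that proposes candidate cost parameters \costparams\ by solving Prob. \ref{prob:globopt} with \costparams\ treated as a decision variable (exploiting that, as noted in Sec. \ref{sec:costparams}, stationarity is affine and hence MILP-representable in \costparams\ for fixed \demj), and an inner loop that is exactly Alg. \ref{alg:falsification} run with \costparams\ held fixed. First I would observe that, conditioned on \emph{any} fixed \costparams, the inner loop falls precisely under the hypotheses of Theorem \ref{thm:completeness}: the cost is now a fixed function \(c(\cdot,\costparams)\), Assumptions \ref{ass:complete}--\ref{ass:feasible} still hold, the counterexample set is reset (each counterexample is certified cheaper than \(c(\demj,\costparams)/(1+\delta)\), which depends on the current \costparams, so a fresh inner run is required whenever \costparams\ changes), and thus by that theorem the inner loop returns a formula \ltl\ under which every \demj\ is feasible and globally optimal with respect to \(c(\cdot,\costparams)\) whenever such a formula exists, and otherwise terminates by declaring Prob. \ref{prob:globopt} infeasible.

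The remaining work is to show the outer loop eventually proposes a \costparams\ admitting a consistent formula, which I would argue from two facts. \textbf{Soundness of rejection}: a candidate \costparams\ is discarded only when the inner loop exhausts all length-bounded formulas without certifying global optimality, i.e. only when \emph{no} formula in the grammar makes every \demj\ globally optimal with respect to \(c(\cdot,\costparams)\); this is exactly the contrapositive of claim 3) of Theorem \ref{thm:completeness} applied at the fixed \costparams. \textbf{The true cost is never rejected}: under the premise that a consistent \((\costparams,\struct,\params)\) exists, Assumption \ref{ass:feasible} places the true \costparams\ in the hypothesis space \(\Theta_c\), and under the true \costparams\ the demonstrations genuinely are globally optimal, so by soundness the true \costparams\ can never be discarded. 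Combining these, each outer iteration either returns a consistent \ltl\ (establishing claims 1) and 2)) or permanently eliminates a \costparams\ that provably cannot work; since the true \costparams\ survives every elimination, the procedure cannot run out of viable candidates and must eventually output the desired formula.

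The hard part will be the \emph{continuum} of cost candidates, which is precisely why the statement is a completeness corollary rather than a finite-termination theorem. I would first confirm that the inner loop terminates finitely for each fixed \costparams, inheriting this from the argument for Theorem \ref{thm:completeness} (the grammar is length-bounded, so only finitely many DAG structures of a given size exist, and each rejected candidate contributes a counterexample that strictly tightens Prob. \ref{prob:globopt}). The delicate point is that each rejected \costparams\ is only a single point of \(\Theta_c\), so its exclusion need not shrink the search in any measure-theoretic sense, and the outer loop may therefore sift through infinitely many distinct cost candidates before reaching one that yields a consistent formula. I expect the main obstacle to be ruling out the pathology in which the outer loop endlessly proposes and rejects individually-invalid cost candidates without ever selecting the (never-rejected) true parameter; the most one can rigorously conclude is that a consistent formula is returned \emph{in the limit}, not in finite time, which matches the caveat stated in Sec. \ref{sec:costparams} and is the reason the result is phrased as a corollary asserting eventual completeness only.
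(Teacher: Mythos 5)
Your proposal is correct and follows essentially the same route as the paper's proof: reduce to Theorem \ref{thm:completeness} for each fixed $\costparams$, invoke Assumption \ref{ass:feasible} to guarantee the true cost parameters lie in the hypothesis space and are never eliminated, and conclude that a consistent formula is eventually returned, with no finite-termination guarantee due to the continuum of cost candidates. Your explicit ``soundness of rejection'' step and the careful caveat about possibly iterating through infinitely many candidates are elaborations of points the paper leaves implicit (the appendix version of the corollary even adds the hypothesis ``if the algorithm terminates in finite time''), so your treatment is, if anything, slightly more complete than the paper's.
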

\begin{theorem}[Conservativeness for fixed template]\label{thm:innerapprox}
	Suppose that $\struct$ and $\costparams$ are known, and $\params$ is unknown. Then, extracting $\guarsafe^i$ and $\guarunsafe^i$, as defined in \eqref{eq:guarsafe}-\eqref{eq:guarunsafe}, from the feasible set of Prob. \ref{prob:kkt_exact_ltl} projected onto $\Theta_i^p$ (denoted $\feas_i$), returns $\guarsafe^i \subseteq \safeset_i$ and $\guarunsafe^i \subseteq \unsafeset_i$, for all $i \in \{1, \ldots, \Nap\}$.
\end{theorem}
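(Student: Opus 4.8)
The plan is to reduce everything to one observation: the \emph{true} AP parameter $\params_i$ is itself a member of the projected feasible set $\feas_i$, and both $\guarsafe^i$ and $\guarunsafe^i$ are defined in \eqref{eq:guarsafe}--\eqref{eq:guarunsafe} as intersections over \emph{all} $\theta\in\feas_i$. Since an intersection is contained in each of its constituent sets, it is in particular contained in the set indexed by $\params_i$, which is exactly $\safeset_i$ (resp.\ $\unsafeset_i$). Establishing that single membership fact therefore does all the work.

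First I would certify ground-truth membership, i.e.\ $\params_i\in\feas_i$. By Assumption \ref{ass:feasible} the true $\params$ lies in the hypothesis space ($\params\in\Theta_p$), and here $\struct$ and $\costparams$ are fixed to their known true values, so Prob. \ref{prob:kkt_exact_ltl} is the relevant (fixed-template) program. By Assumption \ref{ass:locopt}, each $\demj$ is feasible for $\ltl$ and locally optimal for its own fixed boolean assignment; hence there exist a consistent $\Bool^j$, multipliers $\boldsymbol{\lambda}_t^{j,k},\boldsymbol{\lambda}_{i,t}^{j,\neg k},\boldsymbol{\nu}_t^{j,k}$, and auxiliary binaries $\mathbf{s}_{i,t}^j,\mathbf{Q}_{i,t}^j,\mathbf{Z}^j$ for which every condition of $\textrm{KKT}_\textrm{LTL}(\demj)$ holds \emph{when evaluated at the true} $\params$. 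Collecting these certificates over $j=1,\ldots,\numsafe$ exhibits a feasible point of Prob. \ref{prob:kkt_exact_ltl} whose $\params_i$-component equals the true parameter, so projecting onto $\Theta_i^p$ yields $\params_i\in\feas_i$.

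Given this, both inclusions follow directly. Because the constraint set indexed by $\params_i$ participates in the intersection,
\[
\guarsafe^i=\bigcap_{\theta\in\feas_i}\{\cstate\mid G_i(\cstate,\theta)\le 0\}\ \subseteq\ \{\cstate\mid G_i(\cstate,\params_i)\le 0\}=\safeset_i,
\]
and identically $\guarunsafe^i\subseteq\{\cstate\mid G_i(\cstate,\params_i)\ge 0\}$. The safe-set claim is then complete.

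The step I expect to be the main obstacle is the unsafe-set claim, since $\{\cstate\mid G_i(\cstate,\params_i)\ge 0\}$ and $\unsafeset_i=\textrm{cl}(\{\cstate\mid G_i(\cstate,\params_i)>0\})$ agree on the strict interior $\{G_i>0\}$ but can differ on degenerate boundary states where $G_i(\cdot,\params_i)=0$ yet which are not limits of strictly-positive states. For interior states the inclusion into $\unsafeset_i$ is immediate; for boundary states I would invoke the convention adopted in Sec. \ref{sec:prelims}, where $\unsafeset_i$ is taken to be \emph{closed} and states with $G_i=0$ may be regarded as lying in $\unsafeset_i$. Under the mild regularity that $\{G_i(\cdot,\params_i)=0\}$ is the topological boundary separating $\{G_i<0\}$ from $\{G_i>0\}$ (no isolated or one-sided zeros), one gets $\{G_i(\cdot,\params_i)\ge 0\}=\textrm{cl}(\{G_i(\cdot,\params_i)>0\})=\unsafeset_i$, closing the argument so that $\guarunsafe^i\subseteq\unsafeset_i$. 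I would state this regularity explicitly, noting it is already consistent with the boundary convention of Sec. \ref{sec:prelims}; absent it, the guarantee still holds up to this boundary set.
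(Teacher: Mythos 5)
Your proof is correct and follows essentially the same route as the paper's: both hinge on the observation that the true parameter $\params_i$ satisfies the KKT conditions of every demonstration (so $\params_i \in \feas_i$), whence the intersections defining $\guarsafe^i$ and $\guarunsafe^i$ are contained in the sets indexed by the true parameter. Your extra care on the boundary case --- that $\{\cstate \mid G_i(\cstate,\params_i)\ge 0\}$ coincides with $\unsafeset_i=\textrm{cl}(\{\cstate \mid G_i(\cstate,\params_i)>0\})$ only under a mild no-degenerate-zeros regularity condition --- addresses a step the paper's own proof silently elides when it passes from $G_i(\cstate,\theta_i^{p,*})\ge 0$ to $\cstate\in\unsafeset_i(\theta_i^{p,*})$, so this is a refinement rather than a divergence.
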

\begin{theorem}[Distinguishability]\label{thm:distinguishability}
	For the consistent formula sets defined in Sec. \ref{sec:learnability}, we have $\formset_{g} \subseteq \formset_{\tilde\mu\textrm{-SO}} \subseteq \formset_{\hat\mu\textrm{-SO}} \subseteq \formset_f$, for $\tilde\mu > \hat\mu$.
\end{theorem}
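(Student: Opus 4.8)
The plan is to establish the chain by proving its three inclusions separately, after reducing each to a per-demonstration implication: since every set $\formset_\bullet$ collects exactly the formulas under which \emph{each} $\demj$ enjoys the corresponding optimality property, an inclusion between formula sets follows as soon as the underlying implication is shown to hold demonstration-by-demonstration. Two of the three inclusions are essentially bookkeeping against the definitions and Lemma~\ref{lem:specglobopt}; the genuine content is the middle inclusion $\formset_{\tilde\mu\textrm{-SO}} \subseteq \formset_{\hat\mu\textrm{-SO}}$ for $\tilde\mu > \hat\mu$, i.e. that a higher spec-optimality order is a strictly stronger requirement.

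For the two outer inclusions I would argue as follows. The inclusion $\formset_g \subseteq \formset_{\tilde\mu\textrm{-SO}}$ is a direct consequence of Lemma~\ref{lem:specglobopt}: if a formula lies in $\formset_g$ it makes each $\demj$ globally optimal, whence each $\demj$ is $\mu$-SO for every $\mu$ and in particular for $\tilde\mu$, so the formula lies in $\formset_{\tilde\mu\textrm{-SO}}$. The inclusion $\formset_{\hat\mu\textrm{-SO}} \subseteq \formset_f$ is immediate from the definitions: membership in $\formset_{\hat\mu\textrm{-SO}}$ requires the base conditions defining $\formset_f$ (discrete feasibility and continuous local optimality of each $\demj$) \emph{together with} the additional conditions of Def.~\ref{def:specopt}, so it imposes strictly more, giving the containment.

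The crux is the middle inclusion, and the key observation I would exploit is that each of the three alternatives in Def.~\ref{def:specopt} depends on the index tuple $\iota$ only through its underlying \emph{support set} of pairs $(i,t)$: the constraints removed in the first alternative, the flipped assignment $\hat\Bool$ in the second, and the infeasibility check in the third are all determined solely by which $(i,t)$ pairs occur, not by their multiplicity or order. Since $\mathcal{I}$ consists of $\mu$-tuples with repetition permitted, I would take an arbitrary $\hat\mu$-tuple $\hat\iota$ and pad it to a $\tilde\mu$-tuple $\tilde\iota$ by repeating one of its entries $\tilde\mu - \hat\mu$ times, so that $\hat\iota$ and $\tilde\iota$ share the same support. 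If $\demj$ is $\tilde\mu$-SO then one of the three alternatives holds for $\tilde\iota$, and because the alternatives are functions of the support alone, the very same alternative holds verbatim for $\hat\iota$. As $\hat\iota$ was arbitrary, $\demj$ is $\hat\mu$-SO, yielding $\formset_{\tilde\mu\textrm{-SO}} \subseteq \formset_{\hat\mu\textrm{-SO}}$.

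I expect the padding step to be the main subtlety rather than a deep obstacle: it hinges on $\mathcal{I}$ admitting repeated indices and on checking that none of the three alternatives is sensitive to multiplicity. Were one to insist that the index tuples be genuinely distinct, the argument would become non-uniform: the first alternative could still be transferred from $\tilde\iota$ down to a sub-tuple $\hat\iota$ (removing fewer constraints shrinks the relaxed feasible set, and local optimality over the larger relaxed set implies local optimality over any contained subset), but the second and third alternatives would need a separate treatment because the flipped assignment $\hat\Bool$ changes with the index set. Making the support-set observation explicit handles all three alternatives simultaneously and is the cleanest route.
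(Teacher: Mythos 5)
Your proposal is correct and follows the same overall decomposition as the paper: three separate inclusions, with the first obtained from Lemma~\ref{lem:specglobopt} and the last read off from the definitions. The one place where you genuinely diverge is the middle inclusion $\formset_{\tilde\mu\textrm{-SO}} \subseteq \formset_{\hat\mu\textrm{-SO}}$. The paper disposes of it with a constraint-counting remark --- enforcing $\tilde\mu$-SO ``requires more constraints'' than enforcing $\hat\mu$-SO, so the feasible set cannot grow --- which silently presupposes that the $\tilde\mu$-SO conditions actually subsume the $\hat\mu$-SO ones. Your padding argument proves exactly that missing implication: since $\iota$ is written as a set $\{(i_1,t_1),\ldots,(i_\mu,t_\mu)\}$ with no distinctness requirement, every support of size $\hat\mu$ arises from some $\tilde\mu$-tuple, and each of the three alternatives in Def.~\ref{def:specopt} (constraint removal, the perturbed assignment $\hat\Bool$, and the infeasibility check) is a function of the support alone, so the alternative that holds for the padded tuple transfers verbatim to the original. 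Your closing observation --- that the implication would become delicate if repeated indices were disallowed, because flipping a strictly larger index set changes $\hat\Bool$ and the second and third alternatives with it --- identifies precisely the subtlety the paper's one-line justification glosses over. In short, same route, but your version of the middle step is the more rigorous one.
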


\section{Experimental Results}\label{sec:results}

We show that our algorithm outperforms a competing method, can learn shared task structure from demonstrations across environments, and can learn LTL formulas $\params$, $\struct$ and uncertain cost functions $\costparams$ on high-dimensional problems. Please refer to the supplementary video for visualizations of the results: \textbf{\textcolor{blue}{\url{https://youtu.be/cpUEcWCUMqc}}}.

\begin{figure}[h]
\centering
\vspace{-5pt}
\includegraphics[width=\linewidth]{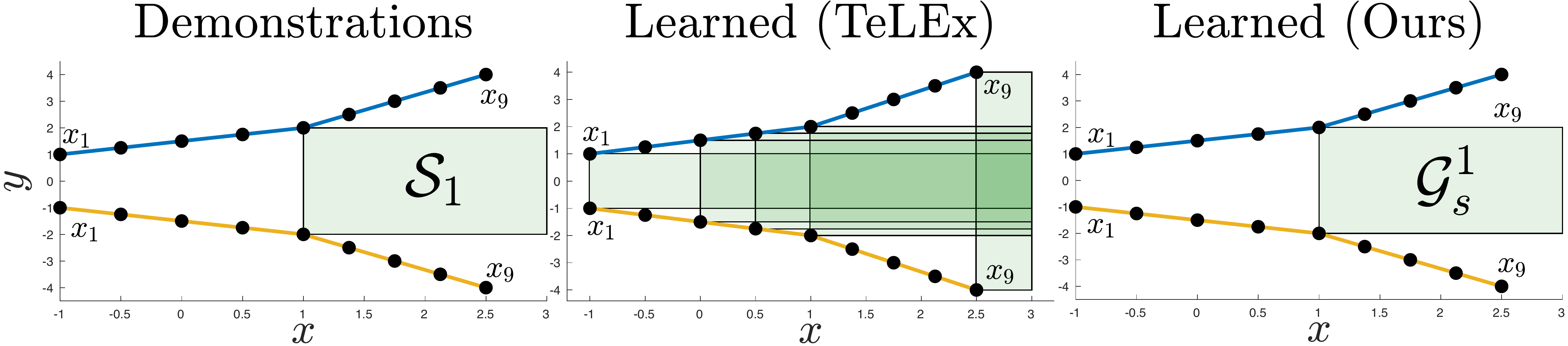}
\caption{Toy example for baseline comparison \cite{telex}.}\label{fig:telex}
\vspace{-5pt}
\end{figure}

\noindent \textbf{Baseline comparison}:
Likely the closest method to ours is \cite{telex}, which learns a pSTL formula that is tightly satisfied by the demonstrations via solving a nonconvex problem to local optimality: $\arg \max_{\params} \min_{j} \tau(\params, \demj)$, where $\tau(\params, \demj)$ measures how tightly $\demj$ fits the learned formula. We run the authors' code \cite{jha} on a toy problem (see Fig. \ref{fig:telex}), where the demonstrator has kinematic constraints, minimizes path length, and satisfies start/goal constraints and $\varphi = \eventually_{[0,8]} \prop_1$, where $\state \models \prop_1 \Leftrightarrow [I_{2\times 2}, -I_{2\times 2}]^\top \state \le [3, 2, -1, 2]^\top={[3, \params_1]}^\top$. We assume the structure $\struct$ is known, and we aim to learn $\params$ to explain why the demonstrator deviated from an optimal straight-line path to the goal. Solving Prob. \ref{prob:volextract} returns $\guarsafe^1 = \safeset_1$ (Fig. \ref{fig:telex}, right). On the other hand, we run TeLEx multiple times, converging to different local optima, each corresponding to a ``tight" $\params$ (Fig. \ref{fig:telex}, center): TeLEx cannot distinguish between multiple different ``tight" $\params$, which makes sense, as the method tries to find \textit{any} ``tight" solution. This example suggests that if the demonstrations are goal-directed, a method that leverages their optimality is likely to better explain them.

\noindent \textbf{Learning shared task structure}:
In this example, we show that our method can extract logical structure shared between demonstrations that complete the same high-level task, but in different environments (Fig. \ref{fig:transfer}). A point robot must first go to the mug ($\prop_1$), then go to the coffee machine ($\prop_2$), and then go to goal ($\prop_3$) while avoiding obstacles ($\prop_4,\prop_5$). As the floor maps differ, $\params$ also differ, and are assumed known. We add two relevant primitives to the grammar, sequence: $\varphi_1\ \seq\ \varphi_2 \doteq \neg \varphi_2\ \mathcal{U}_{[0, T_j-1]}\ \varphi_1$, enforcing that $\varphi_2$ cannot occur until after $\varphi_1$ has occurred for the first time, and avoid: $\avoid \varphi \doteq \always_{[0, T_j-1]} \neg \varphi$, enforcing $\varphi$ never holds over $[1, T_j]$. Then, the true formula is: $\varphi^* = \avoid \prop_4 \wedge \avoid \prop_5 \wedge (\prop_1\ \seq\ \prop_2) \wedge (\prop_2\ \seq\ \prop_3) \wedge \eventually_{[0,T_j-1]} \prop_3$.

Suppose first that we are given the blue demonstration in Env. 2. Running Alg. \ref{alg:falsification} with 1-SO constraints \eqref{eq:specopt} terminates in one iteration at $\Nsat = 14$ with $\varphi_0 = \avoid \prop_4 \wedge \avoid \prop_5 \wedge \eventually_{[0, T_j-1]} \prop_2 \wedge \eventually_{[0, T_j-1]} \prop_3 \wedge (\prop_1\ \seq\ \prop_2)$: always avoid obstacles 1 and 2, eventually reach coffee and goal, and visit mug before coffee. This formula is insufficient to complete the true task (the ordering constraint between coffee and goal is not learned). This is because the optimal trajectories satisfying $\varphi_0$ and $\varphi^*$ are the same cost, i.e. both $\varphi_0$ and $\varphi^*$ are consistent with the demonstration and could have been returned, and $\varphi_0, \varphi^* \in \varphi_g$ (c.f. Sec. \ref{sec:theory}). Now, we also use the blue demonstration from Env. 1 (two examples total). Running Alg. \ref{alg:falsification} terminates in two iterations at $\Nsat = 14$ with the formulas $\varphi_1 = \avoid \prop_4 \wedge \avoid \prop_5 \wedge \eventually_{[0, T_j-1]} \prop_1 \wedge \eventually_{[0, T_j-1]} \prop_2 \wedge \eventually_{[0, T_j-1]} \prop_3$ (which enforces that the mug, coffee, and goal must be eventually visited, but in any order, while avoiding obstacles) and $\varphi_2 = \varphi^*$. Since the demonstration in Env. 1 doubles back to the coffee before going to goal, increasing its cost over first going to goal and then to coffee, the ordering constraint between the two is learnable. We also plot the generated counterexample (Fig. \ref{fig:transfer}, yellow), which achieves a lower cost, since $\varphi_1$ involves no ordering constraints. We use the learned formula to plan a path completing the task in a new environment (App. \ref{sec:app_experiments_transfer}). Overall, this example suggests we can use demonstrations in different environments to learn shared task structure and disambiguate between potential explanations.

\begin{figure}[h]
\centering
\vspace{-3pt}
\includegraphics[width=8.5cm]{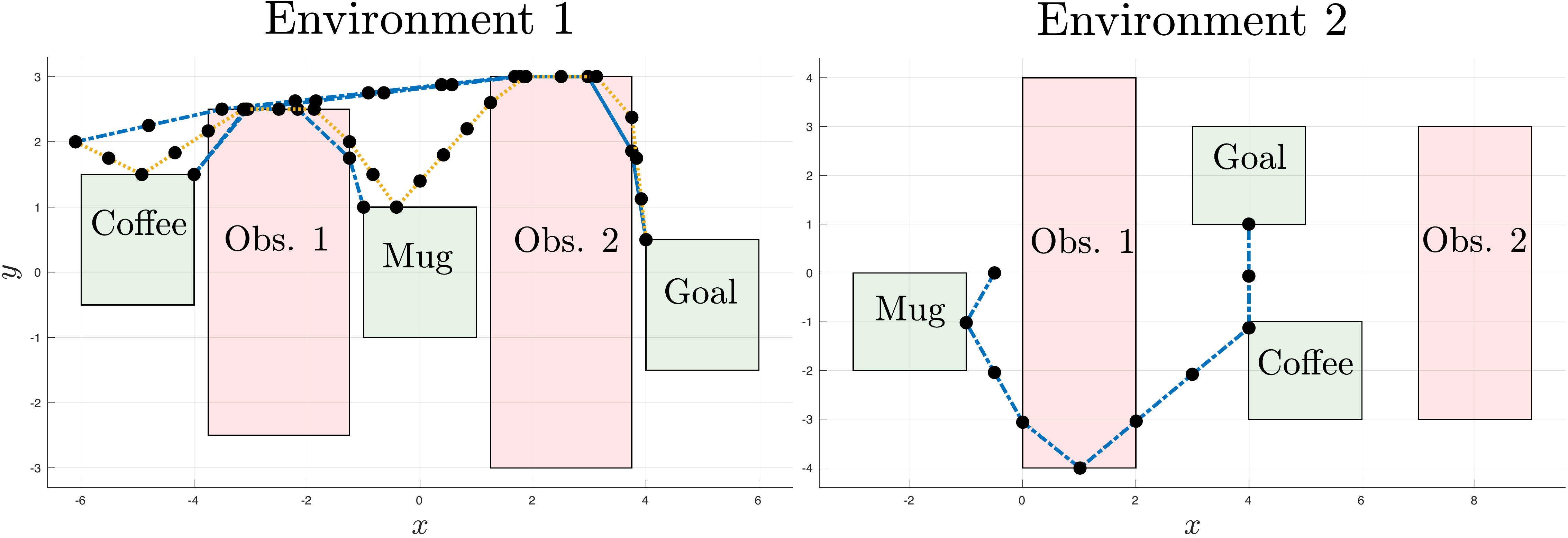}
\caption{Different environments (different $\params$) with shared task (same $\struct$).\vspace{-10pt}}\label{fig:transfer}
\vspace{-5pt}
\end{figure}

\begin{figure}[h]
\centering
\includegraphics[width=8.8cm]{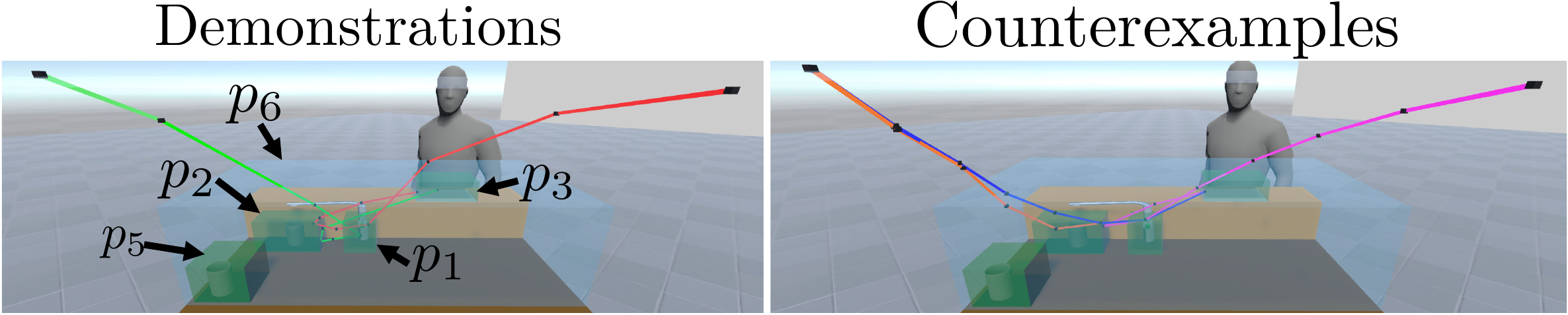}\vspace{1pt}
\caption{Demonstrations and counterexamples for the manipulation task.}\label{fig:arm_demos}
\vspace{-6pt}
\end{figure}

\vspace{-8pt}
\noindent\textbf{Multi-stage manipulation task}: We consider the setup in Figs. \ref{fig:arm_setup}, \ref{fig:arm_demos} of teaching a 7-DOF Kuka iiwa robot arm to prepare a drink: first move the end effector to the button on the faucet ($\prop_1$), then grasp the cup ($\prop_2$), then move the cup to the customer ($\prop_3$), all while avoiding obstacles. After grasping the cup, an end-effector pose constraint $(\alpha, \beta, \gamma) \in \safeset_4(\params_4)$ ($\prop_4$) must be obeyed. We add two ``distractor" APs: a different cup ($\prop_5$) and a region ($\prop_6$) where the robot can hand off the cup. We also modify the grammar to include the sequence operator $\seq$, (defined as before), and add an ``after" operator $\varphi_1\ \mathcal{T}\ \varphi_2 \doteq \always_{[0, T_j-1]}(\varphi_2 \rightarrow \always_{[0, T_j-1]} \varphi_1)$, that is, $\varphi_1$ must hold after and including the first timestep where $\varphi_2$ holds. The true formula is: $\varphi^* = (\prop_1\ \seq\ \prop_2) \wedge (\prop_2\ \seq\ \prop_3) \wedge \eventually_{[0, T_j-1]} \prop_3 \wedge (\prop_4\ \mathcal{T}\ \prop_2)$. We use a kinematic arm model: $j_{t+1}^i = j_t^i + u_t^i$, $i=1,\ldots, 7$, where $\Vert u_t \Vert_2^2 \le 1$ for all $t$. Two suboptimal human demonstrations ($\delta=0.7$) optimizing $c(\trajxu) = \sum_{t=1}^{T-1} \Vert j_{t+1} - j_t\Vert_2^2$ are recorded in virtual reality. We assume we have nominal estimates of the AP regions $\safeset_i(\params_{i,\textrm{nom}})$ (i.e. from a vision system), and we want to learn the $\struct$ and $\params$ of $\varphi^*$. 

\vspace{-1pt}
We run Alg. \ref{alg:falsification} with the 1-SO constraints \eqref{eq:specopt}, and encode the nominal $\params_i$ by enforcing that $\Theta_{i}^p = \{\params_i \mid \Vert \params_i - \params_{i,\textrm{nom}} \Vert_1 \le 0.05 \}$. At $\Nsat = 11$, the inner loop runs for 3 iterations (each taking 30 minutes on an i7-7700K processor), returning candidates $\varphi_1 = (\prop_1\seq\prop_3) \wedge (\prop_2\seq\prop_3) \wedge (\eventually_{[0,T_j-1]} \prop_3) \wedge (\prop_4 \mathcal{T} \prop_3)$, $\varphi_2 = (\prop_1\seq\prop_3) \wedge (\prop_2\seq\prop_3) \wedge (\eventually_{[0,T_j-1]} \prop_3) \wedge (\prop_4 \mathcal{T} \prop_2)$, and $\varphi_3 = \varphi^*$. $\varphi_1$ says that before going to the customer, the robot has to visit the button and cup in any order, and then must satisfy the pose constraint after visiting the cup. $\varphi_2$ has the meaning of $\varphi^*$, except the robot can go to the button or cup in any order. Note that $\varphi_3$ is a stronger formula than $\varphi_2$, and $\varphi_2$ than $\varphi_1$; this is a natural result of the falsification loop, which returns incomparable or stronger formulas with more iterations, as the counterexamples rule out weaker or equivalent formulas. Also note that the distractor APs don't feature in the learned formulas, even though both demonstrations pass through $\prop_6$. This happens for two reasons: we increase $\Nsat$ incrementally and there was no room to include distractor objects in the formula (since spec-optimality may enforce that $\prop_1$-$\prop_3$ appear in the formula), and even if $\Nsat$ were not minimal, $\prop_6$ would not be guaranteed to show up, since visiting $\prop_6$ does not increase the trajectory cost. 

We plot the counterexamples in Fig. \ref{fig:arm_demos}: blue/purple are from iteration 1; orange is from iteration 2. They save cost by violating the ordering and pose constraints: from the left start state, the robot can save cost if it visits the cup before the button (blue, orange trajectories), and loosening the pose constraint can reduce joint space cost (orange, purple trajectories). The right demonstration produces no counterexample in iteration 2, as it is optimal for this formula (changing the order does not lower the optimal cost). For the learned $\params$, $\params_i = \params_{i,\textrm{nom}}$ except for $\prop_2$, $\prop_3$, where the box shrinks slightly from the nominal; this is because by tightening the box, a Lagrange multiplier can be increased to reduce the KKT residual. We use the learned $\params$, $\struct$ to plan formula-satisfying trajectories from new start states (see App. \ref{sec:app_experiments}). Overall, this example suggests that Alg. \ref{alg:falsification} can recover $\params$ and $\struct$ on a high-dimensional problem and ignore distractor APs, despite demonstration suboptimality.

\setlength{\textfloatsep}{0pt}
\begin{figure}[h]
\centering
\includegraphics[width=\linewidth]{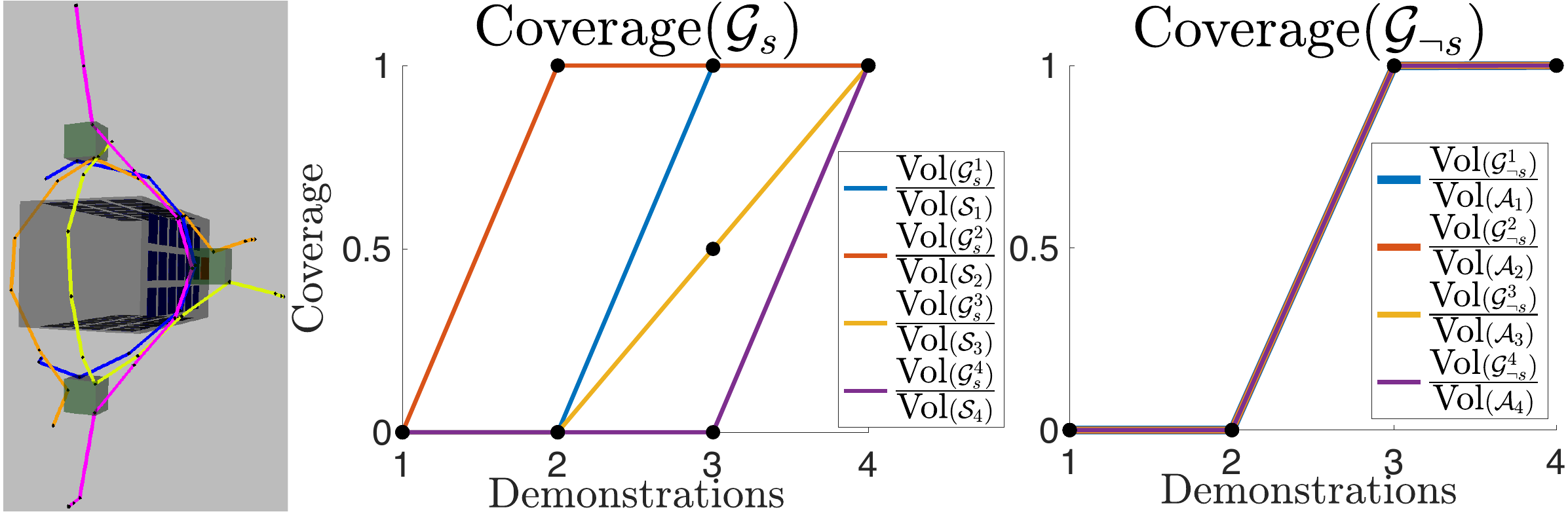}\vspace{-5pt}
\caption{Quadrotor surveillance demonstrations and learning curves.}\label{fig:quad_demos}
\end{figure}

\noindent\textbf{Multi-stage quadrotor surveillance}: We demonstrate that we can jointly learn $\params$, $\struct$, and $\costparams$ in one shot on a 12D nonlinear quadrotor system (see App. \ref{sec:app_experiments_quad}). We are given four demonstrations of a quadrotor surveilling a building (Fig. \ref{fig:quad_demos}): it needs to visit three regions of interest (Fig. \ref{fig:quad_demos}, green) while not colliding with the building. All visitation constraints can be learned directly with 1-SO (see Rem. \ref{rem:surveillance}) and collision-avoidance can also be learned with 1-SO, with enough demonstrations. The true formula is $\varphi^* = \eventually_{[0, T_j-1]} \prop_1 \wedge \eventually_{[0, T_j-1]} \prop_2 \wedge \eventually_{[0, T_j-1]} \prop_3 \wedge \always_{[0,T_j-1]} \neg \prop_4$, where $\prop_1$-$\prop_3$ represent the regions of interest and $\prop_4$ is the building. We aim to learn $\params_i$ for the parameterization $\safeset_i(\params_i) = \{ [I_{3\times 3}, -I_{3\times 3}]^\top [x, y, z]^\top \le \params_i\}$, assuming $\params_{4,6} = 0$ (the building is not hovering). The demonstrations minimize $c(\trajxu, \costparams) = \sum_{r \in R}\sum_{t=1}^{T-1} \gamma_r (r_{t+1} - r_t)^2$, where $R=\{x,y,z,\dot\alpha,\dot\beta,\dot\gamma\}$ and $\gamma_r = 1$, i.e. equal penalties to path length and angular acceleration. We assume $\gamma_r \in [0.1, 1]$ and is unknown: we want to learn the cost weights for each state. 

Solving Prob. \ref{prob:globopt} with 1-SO conditions (at $\Nsat = 12$) takes 44 minutes and recovers $\params$, $\struct$, and $\costparams$ in one shot. To evaluate the learned $\params$, we show in Fig. \ref{fig:quad_demos} that the coverage of the $\guarsafe^i$ and $\guarunsafe^i$ for each $\prop_i$ (computed by fixing the learned $\struct$ and running Prob. \ref{prob:volextract}) monotonically increases with more data. In terms of recovered $\struct$, with one demonstration, we return $\varphi_1 = \eventually_{[0, T_j-1]} \prop_2 \wedge \eventually_{[0, T_j-1]} \prop_3 \wedge \eventually_{[0, T_j-1]} \prop_4 \wedge \always_{[0,T_j-1]} \neg \prop_1$. This highlights the fact that since we are not provided labels, there is an inherent ambiguity of how to label the regions of interest (i.e. $\prop_i$, $i =1,\ldots,3$ can be associated with any of the green boxes in Fig. \ref{fig:quad_demos} and be consistent). Also, one of the regions of interest in $\varphi$ gets labeled as the obstacle (i.e. $\prop_1$ and $\prop_4$ are swapped), since one demonstration is not enough to disambiguate which of the four $\prop_i$ should touch the ground. Note that this ambiguity can be eliminated if labels are provided (see App. \ref{sec:prior_knowledge}) or if more demonstrations are provided: for two and more demonstrations, we learn $\varphi_i = \varphi^*$, $i = 2,\ldots,4$. When using all four demonstrations, we recover the cost parameters $\costparams$ and structure $\struct$ exactly, i.e. $\ltlprop = \varphi^*$, and fixing the learned $\struct$ and running Prob. \ref{prob:volextract} returns $\guarsafe^i = \safeset_i$ and $\guarunsafe^i = \unsafeset_i$, for all $i$. The learned $\costparams$, $\struct$, and $\params$ are used to plan trajectories that efficiently complete the task for different initial and goal states (see App. \ref{sec:app_experiments_quad}). Overall, this example suggests that our method can jointly recover a consistent set of $\params$, $\struct$, and $\costparams$ for high-dimensional systems.

\section{Conclusion}

We present an method that learns LTL formulas with unknown atomic propositions and logical structure from only positive demonstrations, assuming the demonstrator is optimizing an uncertain cost function. We use both implicit (KKT) and explicit (algorithmically generated lower-cost trajectories) optimality conditions to reduce the hypothesis space of LTL specifications consistent with the demonstrations. In future work, we aim to robustify our method to mislabeled demonstrations, explicitly consider demonstration suboptimality arising from risk, and reduce our method's computation time. 

\section*{Acknowledgments}
\footnotesize The authors thank Daniel Neider for insightful discussions. This research was supported in part by an NDSEG fellowship, NSF grants IIS-1750489 and ECCS-1553873, and ONR grants N00014-17-1-2050 and N00014-18-1-2501.

\normalsize
\clearpage
\bibliographystyle{plainnat}
\bibliography{references}
\clearpage
\appendix

\begin{figure*}[b]
\hrulefill
\begin{equation}\label{eq:ltl_semantics}
	\hspace{-25pt}\begin{array}{>{\displaystyle}c >{\displaystyle}l >{\displaystyle}l}
		(\trajxu,t) \models p_i & \Leftrightarrow & \unkfeat_i(\feat_i(\state_t), \params_i) \le \bzero \\
		(\trajxu,t) \models \neg p_i & \Leftrightarrow & \neg((\trajxu,t) \models p_i ) \\
		(\trajxu,t) \models \varphi_1 \orltl \varphi_2 & \Leftrightarrow & (\trajxu,t)\models \varphi_1 \orltl (\trajxu,t)\models \varphi_2 \\
		(\trajxu,t) \models \varphi_1 \andltl \varphi_2 & \Leftrightarrow &  (\trajxu,t)\models \varphi_1 \andltl (\trajxu,t)\models \varphi_2 \\
		(\trajxu,t) \models \always_{[t_1,t_2]} \varphi & \Leftrightarrow & (t+t_1 \le T) \wedge (\forall \tilde t \in [t+t_1,\min(t+t_2,T)], (\trajxu, \tilde t) \models \varphi) \\
		(\trajxu,t) \models \varphi_1 \mathcal{U}_{[t_1,t_2]} \varphi_2 & \Leftrightarrow & \begin{split}(t+t_1 \le T) \wedge (\exists \tilde t \in [t+t_1,\min(t+t_2, T)] \textrm{ s.t. } (\trajxu,\tilde t) \models \varphi_2)\ \andltl \\(\forall \check t \in [t,\tilde t-1], (\trajxu,\check t) \models \varphi_1)\end{split} \\
		(\trajxu,t) \models \eventually_{[t_1,t_2]} \varphi & \Leftrightarrow & (t+t_1 \le T) \wedge (\exists \tilde t \in [t+t_1,\min(t+t_2,T)] \textrm{ s.t. } (\trajxu, \tilde t) \models \varphi) \\
	\end{array}\hspace{-40pt}
\end{equation}
\end{figure*}

In this supplementary material, we present a detailed description of the LTL semantics that we use (Appendix \ref{sec:app_semantics}), further explanation and motivation for spec-optimality, a discrete optimality condition that we leverage (Appendix \ref{sec:spec_opt}), an expanded version of our theoretical analysis (Appendix \ref{sec:app_theory}), including proofs and additional comments, minor extensions and expanded details on our method (Appendix \ref{sec:app_method}), and some additional experimental details (Appendices \ref{sec:app_experiments_transfer}-\ref{sec:app_experiments_quad}).

\appendices

\section{LTL Semantics}\label{sec:app_semantics}

We denote the satisfaction of a formula $\ltl$ on a finite-duration trajectory $\trajxu$ of total duration $T$, evaluated at time $t \in \{1, 2, \ldots, T\}$, as $(\trajxu, t) \models \varphi$. Then, the formula satisfaction is defined recursively in \eqref{eq:ltl_semantics}. We emphasize that since we consider discrete-time trajectories, a time interval $[t_1, t_2]$ is evaluated only on integer time instants $\{t_1, t_1+1, \ldots, t_2\}$; this is made concrete in \eqref{eq:ltl_semantics}. We also provide the following intuitive description of the LTL operators:

\begin{itemize}
	\item The ``or" operator $\varphi_1 \vee \varphi_2$ denotes a disjunction between formulas $\varphi_1$ and $\varphi_2$
	\item The ``and" operator $\varphi_1 \wedge \varphi_2$ denotes a conjunction between formulas $\varphi_1$ and $\varphi_2$
	\item The ``bounded-time always" operator $\always_{[t_1,t_2]} \varphi$ denotes that a formula $\varphi$ always holds over the interval $[t_1, t_2]$
	\item The ``bounded-time until" operator $\varphi_1\ \mathcal{U}_{[t_1,t_2]}\ \varphi_2$ denotes that a formula $\varphi_2$ eventually holds during the interval $[t_1, t_2]$, and for all timesteps prior to that, $\varphi_1$ must hold.
	\item The ``bounded-time eventually" operator $\eventually_{[t_1,t_2]} \varphi$ denotes that a formula $\varphi$ eventually has to hold during the interval $[t_1, t_2]$.
\end{itemize}

\section{Details on Spec-Optimality}\label{sec:spec_opt}
For ease of reading, we have copied the definition of spec-optimality from the main body.
\begin{definition}[Spec-optimality]\label{def:app_specopt}
	A demonstration $\demj$ is \textit{$\mu$-spec-optimal ($\mu$-SO)}, where $\mu \in \mathbb{Z}_+$, if for every index set $\iota \doteq \{(i_1,t_1),...,(i_\mu,t_\mu) \}$ in $\mathcal{I} \doteq \{ \iota \mid i_m \in \{1,...,\Nap\}, t_m \in \{1,..., T_j\}, m=1,...,\mu\}$, at least one of the following holds:
	\begin{itemize}
		\item $\demj$ is locally-optimal after removing the constraints associated with $\prop_{i_m}$ on $\cstate_{t_m}^j$, for all $(i_m, t_m) \in \iota$.
		\item For each index $(i_m, t_m) \in \iota$, the formula is not satisfied for a perturbed $\Bool$, denoted $\hat\Bool$, where $\hat\Booll_{i_m,t_m}(\params_{i_m}) = \neg \Booll_{i_m,t_m}(\params_{i_m})$, for all $m=1,\ldots,\mu$, and $\hat\Booll_{i',t'}(\params_{i'}) = \Booll_{i',t'}(\params_{i'})$ for all $(i',t')\notin \iota$.
		\item $\demj$ is infeasible with respect to $\hat\Bool$.
	\end{itemize}
\end{definition}

Recall from the main body that spec-optimality aims to enforce a level of logical optimality. More specifically, spec-optimality is a measure of logical optimality, evaluated locally around a demonstration. To see this, note that the conditions in Def. \ref{def:app_specopt} are essentially checking how the local optimality of a demonstration changes as a result of local perturbations to the assignments of the discrete variables $\Bool$. The three conditions in Def. \ref{def:app_specopt} capture the three possibilities upon perturbing $\Bool$: the demonstration could become infeasible if $\Bool$ is perturbed (this is what the third condition checks), the demonstration could remain feasible but local optimality may not change (this is what the first condition checks), or the demonstration could remain feasible and no longer be locally-optimal (this is what the second condition checks). By enforcing that a demonstration is spec-optimal with respect to the formula being satisfied, we enforce that this last possibility (feasible but not locally-optimal) never occurs. We would want to enforce this, for instance, if the demonstration is assumed to be globally-optimal for the true LTL formula, because there should be no alternative assignment of $\Bool$ which admits a direction in which the demonstration cost can be improved. In terms of learning the LTL formula, demonstration spec-optimality is a useful condition because it prunes trivially-satisfiable formulas from the search space, spec-optimality necessarily holds for a demonstration to be globally-optimal (Lem. \ref{lem:app_specglobopt}), and it can be compactly enforced within a MILP, making it an efficient surrogate for enforcing global optimality. Generally, without spec-optimality, the falsification loop in Alg. \ref{alg:falsification} will need to eliminate more formulas on the way to finding a formula which makes the demonstrations globally-optimal. As a final note, we can interpret $\mu$ as a tuning knob for shifting the computation between the falsification loop and Prob. \ref{prob:globopt}; imposing a larger $\mu$ can potentially rule out more formulas at the cost of adding additional constraints and decision variables to Problem \ref{prob:globopt}.

To show how these conditions can be used on a concrete example, consider again the problem visualized in Fig. \ref{fig:venn_diagram}. In this problem, $\params_1, \params_2$ are known and we are given two kinematic demonstrations minimizing path length under input constraints, formula $\varphi = (\neg p_2\ \mathcal{U}_{[0, T_j-1]}\ p_1)\wedge \eventually_{[0,T_j-1]} p_2 $, and start/goal constraints. We also introduce a different candidate formula $\hat\varphi = \eventually_{[0,T_j-1]} p_1 \vee \eventually_{[0,T_j-1]} p_2$. For this example, consider only the blue demonstration. If the demonstration is globally-optimal, then $\hat\varphi$ is inconsistent, because to improve the trajectory cost, the demonstrator would choose to visit only one of $\mathcal{S}_1$ or $\mathcal{S}_2$, not both. 

We will show how spec-optimality can be used to distinguish between $\varphi$ and $\hat\varphi$; specifically, we show the demonstration is 1-SO with respect to $\varphi$ but not for $\hat\varphi$. For both $\varphi$ and $\hat\varphi$, $\mathcal{I} = \{(1, 1), \ldots, (1, 5), (2, 1), \ldots, (2, 5)\}$. Let's consider $\varphi$ first. For $\iota \in \{(1, 1), (2, 1), (2, 2), (1, 3),$ $ (2, 3), (1, 4), (1, 5), (2, 5)\}$, the third condition in Def. \ref{def:app_specopt} will hold, since at these timesteps, the demonstration is not on the boundary of the paired AP. For $\iota \in \{(1, 2), (2, 4)\}$, the second condition in Def. \ref{def:app_specopt} will hold, since perturbing $\Bool$ at either of these timesteps (from $\Booll_{1,2}(\params_1) = 1$ to $0$ or from $\Booll_{2,4}(\params_2) = 1$ to $0$) will cause $\varphi$ to be not satisfied. Thus, the demonstration is spec-optimal with respect to $\varphi$. On the other hand, for $\hat\varphi$, again for $\iota \in \{(1, 1), (2, 1), (2, 2), (1, 3),$ $ (2, 3), (1, 4), (1, 5), (2, 5)\}$, the third condition in Def. \ref{def:app_specopt} will hold. However, none of the three conditions will hold for $\iota \in \{(1, 2), (2, 4)\}$, since the demonstration will not be locally-optimal upon relaxing the constraints for either $\prop_1$ or $\prop_2$, and since $\hat\varphi$ only enforces that either one of $\mathcal{S}_1$ or $\mathcal{S}_2$ are visited, $\hat\varphi$ is still satisfied if either $\Booll_{1,2}(\params_1)$ or $\Booll_{2,4}(\params_2)$ is flipped to 0. Hence, the demonstration is not spec-optimal with respect to $\hat\varphi$.

\section{Theoretical Analysis (Expanded)}\label{sec:app_theory}

In this section, we prove that our method is complete under some assumptions, without (Thm. \ref{thm:app_completeness}) or with (Cor. \ref{thm:app_specoptcompleteness}) spec-optimality, and that we can compute guaranteed conservative estimates of $\safeset_i$/$\unsafeset_i$ (Thm. \ref{thm:app_innerapprox}). Finally, we show that the stronger the assumptions on the demonstrator, the smaller the set of consistent formulas (the less ill-posed the problem) (Thm. \ref{thm:app_distinguishability}). We copy the theorem statements and assumptions from the main body for ease of reading.

\subsection{Correctness and conservativeness}

\begin{assumption}\label{ass:app_complete}
	Prob. \ref{prob:fwdprob_ctrexample} is solved with a complete planner.
\end{assumption}
\begin{assumption}\label{ass:app_locopt}
	Each demonstration is locally-optimal (i.e. satisfies the KKT conditions) for fixed boolean variables.
\end{assumption}
\begin{assumption}\label{ass:app_feasible}
	The true parameters $\params$, $\struct$, and $\costparams$ are in the hypothesis space of Prob. \ref{prob:globopt}: $\params \in \Theta^p$, $\struct \in \Theta^s$, $\costparams \in \Theta^c$.
\end{assumption}

We will use these assumptions to show that when the cost function parameters $\params$ are known, our falsification loop in Alg. \ref{alg:falsification} is guaranteed to return a consistent formula; that is, it makes the demonstrations globally-optimal.

\begin{theorem}[Completeness \& consistency, unknown $\struct$, $\params$]\label{thm:app_completeness}
	Under Assumptions \ref{ass:app_complete}-\ref{ass:app_feasible}, Alg. \ref{alg:falsification} is guaranteed to return a formula $\ltl$ such that 1) $\demj \models \ltl$ and 2) $\demj$ is globally-optimal under $\ltl$, for all $j$, 3) if such a formula exists and is representable by the provided grammar.
\end{theorem}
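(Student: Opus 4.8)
The plan is to argue via the standard structure of a counterexample-guided inductive synthesis loop, establishing three facts: \emph{partial correctness} (every formula Alg.~\ref{alg:falsification} returns satisfies claims~1 and~2), \emph{soundness of rejection} (a genuinely consistent formula is never discarded), and \emph{exhaustive termination} (the search reaches the relevant DAG size and halts returning such a formula). First I would dispatch claim~1: any pair $(\hatstruct,\hatparams)$ emitted by Prob.~\ref{prob:globopt} is feasible for the satisfaction constraints \eqref{eq:sat1}--\eqref{eq:sat2}, which force $\satmatrix_{j,(\textrm{root},1)}^\textrm{dem}=1$, so $\demj \models \ltlprop$ for every $j$; this gives claim~1 for \emph{any} returned formula. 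The embedded $\textrm{KKT}_\textrm{LTL}$ constraints together with Assumption~\ref{ass:app_locopt} additionally certify that each $\demj$ is locally optimal for the boolean assignment that $\ltlprop$ induces.

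Next I would establish claim~2 and the soundness of the rejection test simultaneously, both resting on the suboptimality bound $c(\demj)\le(1+\delta)c(\demjopt)$. A candidate is declared consistent and returned only when, for every demonstration, Prob.~\ref{prob:fwdprob_ctrexample} fails to produce a trajectory satisfying $\ltlprop$ at cost below $c(\demj)/(1+\delta)$. By Assumption~\ref{ass:app_complete} the planner is complete, so this failure certifies that no $\ltlprop$-satisfying trajectory exists below that threshold, i.e. $c(\demjopt)\ge c(\demj)/(1+\delta)$; hence $\demj$ attains the optimum within the assumed tolerance, which is the operative notion of global optimality under $\ltlprop$ (claim~2). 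The contrapositive yields soundness: if Prob.~\ref{prob:fwdprob_ctrexample} returns a counterexample $\trajxu^j$, then $c(\demjopt)\le c(\trajxu^j) < c(\demj)/(1+\delta)$ contradicts the bound, so only genuinely inconsistent candidates are ever refuted and added to $\{\demunsafe\}$.

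For claim~3 I would combine soundness with a monotonicity argument on $\Nsat$. Let $\varphi^\star$ be a consistent formula of minimal size $\Nmin$, which exists by hypothesis and lies in the hypothesis space by Assumption~\ref{ass:app_feasible}. No accumulated counterexample $\demunsafe$ can satisfy $\varphi^\star$: since $c(\demunsafe)<c(\demj)/(1+\delta)$ for the $\demj$ that spawned it, a satisfying $\demunsafe$ would make $\demj$ non-optimal under $\varphi^\star$, contradicting consistency. Thus $\varphi^\star$ meets every constraint \eqref{eq:sat3} and stays feasible for Prob.~\ref{prob:globopt} at all iterations with $\Nsat\ge\Nmin$. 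For $\Nsat<\Nmin$, minimality forces every candidate to be inconsistent, so by rejection soundness and planner completeness each is refuted by a counterexample; once Prob.~\ref{prob:globopt} turns infeasible the outer loop increments $\Nsat$, so the search monotonically reaches $\Nmin$, where it must eventually propose a formula for which no counterexample is returned, and halt with claims 1--3.

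The hard part will be the finite termination of the inner loop over the \emph{continuous} parameter space $\Theta^p$: a priori Prob.~\ref{prob:globopt} could keep emitting inconsistent candidates that share a DAG structure while differing infinitesimally in $\params$, never reaching infeasibility and never advancing $\Nsat$. The crux is therefore to exploit the finiteness of the \emph{discrete} DAG structure set at each fixed $\Nsat$, guaranteed by the bounded-length encoding of Sec.~\ref{sec:dag}, together with the fact that below $\Nmin$ no structure admits \emph{any} consistent parameterization. I would argue that the counterexample constraints \eqref{eq:sat3}, acting jointly with the $\textrm{KKT}_\textrm{LTL}$ and satisfaction constraints that pin down the boolean assignment, eliminate each inconsistent structure after finitely many refutations, so that Prob.~\ref{prob:globopt} is driven to infeasibility in finitely many inner iterations at every size strictly below $\Nmin$. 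Discharging this finiteness claim rigorously is where the real care lies; the remaining steps are bookkeeping on the loop invariant.
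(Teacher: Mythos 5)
Your proposal follows essentially the same route as the paper's proof: claim~1 from the satisfaction constraints \eqref{eq:sat1}--\eqref{eq:sat2} forcing $\satmatrix_{j,(\textrm{root},1)}^\textrm{dem}=1$ for any candidate emitted by Prob.~\ref{prob:globopt}, claim~2 from completeness of the counterexample planner (Assumption~\ref{ass:app_complete}) certifying that termination of the inner loop implies no $\ltlprop$-satisfying trajectory of cost below $c(\demj)/(1+\delta)$ exists, and claim~3 from Assumption~\ref{ass:app_feasible} together with completeness of the DAG representation. Two remarks. First, your explicit soundness-of-rejection step --- that no accumulated counterexample can satisfy a genuinely consistent $\varphi^\star$, so $\varphi^\star$ survives every constraint \eqref{eq:sat3} and remains in the feasible set of Prob.~\ref{prob:globopt} --- is left implicit in the paper but is in fact needed for claim~3 to go through; making it explicit is an improvement. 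Second, the ``hard part'' you flag at the end (finite termination of the inner loop when inconsistent candidates can differ only infinitesimally in the continuous parameters $\params\in\Theta^p$ while sharing a DAG structure) is a legitimate concern, but the paper's own proof does not address it either: it simply asserts that the loop ``will continue until'' no counterexample can be produced, with no argument that this happens after finitely many iterations. So you should not treat that as a missing piece you must supply to match the paper; it is a gap in the published argument as much as in yours, and your honest identification of it is the most substantive contribution of your write-up.
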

\begin{proof}
	To see the first point - that Alg. \ref{alg:falsification} returns $\ltlprop$ such that $\demj \models \ltlprop$ for all $j$, note that in Prob. \ref{prob:globopt}, the constraints \eqref{eq:sat1}-\eqref{eq:sat3} on the satisfaction matrices $\satmatrix_j^\textrm{dem}$ encode that each demonstration is feasible for the choice of $\params$ and $\struct$; hence, the output of Prob. \ref{prob:globopt} will return a feasible $\ltlprop$. As Alg. \ref{alg:falsification} will eventually return some $\ltlprop$ which is an output of Prob. \ref{prob:globopt}, the $\ltlprop$ that is ultimately returned is feasible.
	
	Next, to see the second point - that the ultimately returned $\ltlprop$ makes each $\demj$ globally-optimal, note that at some iteration of the inner loop, if Prob. \ref{prob:fwdprob_ctrexample} is feasible and its solution algorithm is complete (Assumption \ref{ass:app_complete}), it will return a trajectory which is lower-cost than the demonstration and satisfies $\ltlprop$. Note that Prob. \ref{prob:fwdprob_ctrexample} will always be feasible, since Prob. \ref{prob:globopt} returns $\params, \struct$ for which the demonstration is feasible, and the feasible set of Prob. \ref{prob:fwdprob_ctrexample} contains the demonstration. The falsification loop will continue until Prob. \ref{prob:fwdprob_ctrexample} cannot produce a trajectory of strictly lower cost for each demonstration; this is equivalent to ensuring that each demonstration is globally optimal for the $\ltlprop$.
	
	To see the last point, we note that if there exists a formula $\ltlprop$ which satisfies the demonstrations, it is among the feasible set of possible outputs of Alg. \ref{alg:falsification}; that is, the representation of LTL formulas, $\Dag$, is complete (c.f. Lemma 1 in \cite{daniel}).
\end{proof}

We will further show that the formula returned by Alg. \ref{alg:falsification} is the shortest formula which is consistent with the demonstrations; this is due to $N_\textrm{DAG}$ only being incremented upon infeasibility of a smaller $N_\textrm{DAG}$ to explain the demonstrations.
\begin{corollary}[Shortest formula]
	Let $\Nmin$ be the minimal size DAG for which there exists $(\params, \struct)$ such that $\demj \models \ltl$ for all $j$. Under Assumptions \ref{ass:app_complete}-\ref{ass:app_feasible}, Alg. \ref{alg:falsification} is guaranteed to return a DAG of length $\Nmin$.
\end{corollary}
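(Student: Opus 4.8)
The plan is to combine the termination-and-consistency guarantee of Theorem~\ref{thm:app_completeness} with the monotone structure of the outer loop in Alg.~\ref{alg:falsification}. The central structural observation is that $\Nsat$ is initialized to $0$ and incremented by exactly one on each pass of the outer \texttt{while} loop, and that this increment happens \emph{only} when the inner loop exits because Problem~\ref{prob:globopt} has become infeasible at the current size (rather than because a consistent candidate was found). Consequently, if the algorithm halts and returns a DAG of size $\bar N$, then for every $N \in \{1,\dots,\bar N -1\}$ the inner loop at size $N$ must have terminated via infeasibility of Problem~\ref{prob:globopt}. The whole proof then reduces to a lower bound $\bar N \ge \Nmin$ and a matching upper bound $\bar N \le \Nmin$.

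For the lower bound, I would argue directly from the definition of $\Nmin$. For any $N < \Nmin$ there is, by minimality, no pair $(\params,\struct)$ with a size-$N$ DAG for which $\demj \models \ltl$ holds for all $j$. Since Problem~\ref{prob:globopt} enforces exactly this satisfaction requirement through the constraints \eqref{eq:sat1}--\eqref{eq:sat2} (which fix $\satmatrix_{j,(\textrm{root},1)}^\textrm{dem}=1$), it is infeasible at every such $N$ \emph{regardless} of which counterexamples have been accumulated, because adding the counterexample constraints \eqref{eq:sat3} only shrinks the feasible set. Hence the inner loop exits immediately by infeasibility for each $N<\Nmin$, $\Nsat$ is incremented past all of them, and the algorithm cannot return a DAG smaller than $\Nmin$.

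For the upper bound, the key lemma is that every \emph{consistent} formula --- one under which all demonstrations are globally optimal --- is feasible for Problem~\ref{prob:globopt} at \emph{every} iteration. Indeed, such a formula satisfies the demonstrations, and it must reject every accumulated counterexample $\demunsafe$: each was added only because $c(\demunsafe) < c(\demj)/(1+\delta)$ for some $j$, and the bounded-suboptimality assumption gives $c(\demj)/(1+\delta) \le c(\demjopt)$, so a consistent formula accepting $\demunsafe$ would exhibit a satisfying trajectory strictly cheaper than the global optimum under that formula, a contradiction. Therefore, once $\Nsat$ reaches a size at which a consistent formula exists, Problem~\ref{prob:globopt} can never become infeasible at that size, the inner loop cannot exit via infeasibility, and it must instead halt by producing a consistent candidate (termination being guaranteed by Theorem~\ref{thm:app_completeness}). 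This pins the returned size to the minimal size admitting a consistent formula. Under Assumption~\ref{ass:app_feasible} the true formula lies in the hypothesis space and is itself consistent (again via $c(\demj)\le(1+\delta)c(\demjopt)$) and in particular feasible, so the minimal consistent size coincides with $\Nmin$; combining this with the lower bound yields $\bar N = \Nmin$.

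The main obstacle I anticipate is exactly the reconciliation of \emph{feasibility} (which defines $\Nmin$) with \emph{consistency} (which actually triggers termination of the inner loop). The delicate claim is that infeasibility of Problem~\ref{prob:globopt} at a fixed size $N$ certifies the absence not merely of a satisfying size-$N$ formula but of a globally optimal one; this rests on the counterexample-rejection lemma above together with the completeness of the counterexample planner (Assumption~\ref{ass:app_complete}), which ensures that every inconsistent candidate is eventually eliminated by some generated counterexample. Arguing that only finitely many counterexamples are needed to exhaust the inconsistent size-$N$ candidates --- despite the continuum of admissible AP parameters $\params$ --- is the technically subtle step, and I would discharge it by invoking the termination guarantee already established in Theorem~\ref{thm:app_completeness} rather than re-deriving it here.
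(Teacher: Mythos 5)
Your overall strategy --- $\Nsat$ increments only when the inner loop exhausts Problem~\ref{prob:globopt} at the current size, a lower bound from the definition of $\Nmin$, and an upper bound from the inner loop's guaranteed success once a consistent formula exists --- is exactly the paper's argument, which it compresses into two sentences (incremental increase of $\Nsat$ plus the inner-loop guarantee of Theorem~\ref{thm:app_completeness}). Your lower bound is correct, and your counterexample-rejection lemma (a consistent formula can never be excluded by an accumulated counterexample, since $c(\demunsafe) < c(\demj)/(1+\delta) \le c(\demjopt)$ would contradict global optimality under that formula) is a genuinely useful piece that the paper leaves implicit.

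The gap is the sentence ``so the minimal consistent size coincides with $\Nmin$.'' You correctly isolate the crux --- $\Nmin$ is defined by mere satisfaction, while the inner loop terminates only on consistency (global optimality) --- but your resolution does not close it. The true formula being consistent and in the hypothesis space only yields the chain $(\text{minimal satisfying size}) = \Nmin \le (\text{minimal consistent size}) \le (\text{size of the true formula})$, and nothing you have argued rules out the middle inequality being strict: at size $\Nmin$ there could exist satisfying pairs $(\params,\struct)$ --- e.g., a weak formula admitting a cheaper satisfying trajectory --- while \emph{every} size-$\Nmin$ candidate is eventually falsified, in which case the inner loop at $\Nmin$ drives Problem~\ref{prob:globopt} to infeasibility, $\Nsat$ is incremented, and the returned DAG is strictly larger than $\Nmin$ as defined. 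Closing this requires either reading $\Nmin$ as the minimal size admitting a formula under which the demonstrations are both satisfying \emph{and} globally optimal (evidently the paper's intent, and what its own terse proof tacitly uses), or a separate argument that some minimal-size satisfying formula is automatically consistent. As written, your final equality is a non sequitur; the remainder of the argument is sound and, if anything, more explicit than the paper about where the subtlety lies.
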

\begin{proof}
	The result follows since Algorithm \ref{alg:falsification} increases $\Nsat$ incrementally (in the outer loop) until some $\ltlprop$ is returned which makes all of the demonstrations feasible and globally-optimal, and each inner iteration of Algorithm \ref{alg:falsification} is guaranteed to find a consistent $\ltlprop$ if one exists (c.f. Theorem \ref{thm:app_completeness}).
\end{proof}

As leveraging a notion of discrete optimality is crucial to reduce the search space of LTL formulas, we show that all globally-optimal demonstrations must also be $\mu$-spec-optimal for the true specification, for any positive integer $\mu$.
\begin{lemma}\label{lem:app_specglobopt}
	All globally-optimal trajectories are $\mu$-SO.
\end{lemma}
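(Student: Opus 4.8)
The plan is to argue by cases on the index set and then to reduce the only nontrivial case to a first-order (KKT) stationarity argument, exploiting that throughout the paper ``locally-optimal'' is operationalized as satisfying the KKT conditions (Assumption \ref{ass:app_locopt}), \emph{not} as strict local minimality. Fix a globally-optimal $\demj$ with its boolean realization $\Bool$, and fix an arbitrary index set $\iota = \{(i_1,t_1),\ldots,(i_\mu,t_\mu)\} \in \mathcal{I}$; let $\hat\Bool$ denote the perturbation flipping exactly the $\iota$-entries. First I would dispose of the two easy cases: if $\hat\Bool \not\models \ltl$ the second condition of Def.~\ref{def:app_specopt} holds, and if $\demj$ is infeasible with respect to $\hat\Bool$ the third condition holds. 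The only case requiring work is when $\hat\Bool \models \ltl$ and $\demj$ is feasible with respect to $\hat\Bool$; call this Case C. In Case C each $\cstate_{t_m}^j$ satisfies both the original containment and its flip, so it lies on the boundary $G_{i_m}(\cstate_{t_m}^j, \params_{i_m}) = 0$, i.e.\ the corresponding constraint is active.

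The core step is to show that in Case C the first condition of Def.~\ref{def:app_specopt} holds, i.e.\ $\demj$ remains a KKT point after the $\iota$-constraints are deleted. I would observe that, because $\demj$ is globally optimal and both $\Bool$ and $\hat\Bool$ are feasible, formula-satisfying boolean realizations of $\demj$, the trajectory $\demj$ globally minimizes each of the two continuous programs $P_\Bool$ and $P_{\hat\Bool}$ obtained from Prob.~\ref{prob:fwdprob} by fixing the booleans to $\Bool$ and $\hat\Bool$ respectively (every trajectory feasible for either program satisfies $\ltl$, hence is admissible for Prob.~\ref{prob:fwdprob}, whose optimum $\demj$ attains). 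By first-order necessary conditions, $\demj$ then satisfies $\textrm{KKT}_\textrm{LTL}$ for both programs, with nonnegative multipliers on the active $\iota$-constraints. The only difference between these two KKT systems and that of the relaxed program $P_{\setminus\iota}$ (the $\iota$-constraints removed) is the stationarity contribution of the $\iota$-constraints, which enters with opposite sign in $P_\Bool$ versus $P_{\hat\Bool}$, since flipping a boolean flips the constraint sense and hence negates the term $(2\mathbf{s}_{i_m,t_m}^j - 1)\nabla_{\state_t}\mathbf{g}_{i_m}(\cdot)$ in \eqref{eq:kkt_stat_ltl}.

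I would then combine the two stationarity conditions to eliminate the $\iota$-terms: $-\nabla_{\state_t} c(\demj)$ lies simultaneously in the cone generated by the $\iota$-gradients together with the common (non-$\iota$ and equality) directions, and in the cone generated by the \emph{negated} $\iota$-gradients together with those same common directions. Under a standard constraint qualification (linear independence of the active constraint gradients), the intersection of these two cones is exactly the cone of the common directions, which forces the $\iota$-multipliers to vanish. With zero multipliers on the $\iota$-constraints, the stationarity, complementary-slackness, nonnegativity, and primal-feasibility conditions of $P_\Bool$ collapse precisely onto the KKT conditions of $P_{\setminus\iota}$, so $\demj$ is locally-optimal after removing the $\iota$-constraints, establishing the first condition. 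Since $\iota$ and $\mu$ were arbitrary, $\demj$ is $\mu$-SO for every $\mu$.

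I expect the main obstacle to be exactly this multiplier-elimination when $\mu \ge 2$: a naive geometric (descent-direction) argument fails there, because a cost-improving direction for the relaxed program may point into the interior of some $\iota$-constraints while exiting others, so it need not be feasible for either $P_\Bool$ or $P_{\hat\Bool}$, and $\demj$ can genuinely fail to be a strict local \emph{minimum} of $P_{\setminus\iota}$ (it may even be a saddle). The resolution is to lean on the KKT reading of local optimality and to kill the $\iota$-multipliers via the opposite-sign cone argument above, which is where the constraint qualification is essential. A secondary point worth stating carefully is \emph{why} $\Bool$ and $\hat\Bool$ being simultaneously feasible and formula-satisfying is precisely what renders the active $\iota$-constraints non-binding in the optimality sense; this is the crux distinguishing spec-optimality from plain discrete feasibility.
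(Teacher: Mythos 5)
Your case decomposition is sound and your proof targets exactly the right residual claim: once conditions (b) and (c) are disposed of, every $\cstate_{t_m}^j$ with $(i_m,t_m)\in\iota$ lies on $\partial\safeset_{i_m}$ and condition (a) must be established. But you prove that claim by a genuinely different route than the paper. The paper's proof is zeroth-order: it argues the contrapositive, namely that if $\demj$ is \emph{not} locally optimal for the relaxed program (and all $\iota$-states are on their boundaries), then there is a nearby lower-cost trajectory exiting at least one $\safeset_{i_m}$, which by global optimality of $\demj$ cannot satisfy $\ltl$, whence condition (b); no gradients, multipliers, or constraint qualifications appear. Your proof is first-order: you observe $\demj$ globally minimizes both fixed-boolean programs $P_{\Bool}$ and $P_{\hat\Bool}$, write the two KKT systems, and cancel the $\iota$-multipliers using the opposite-sign structure, concluding condition (a) directly. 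Each approach buys something. Yours makes rigorous precisely the point the paper glosses over for $\mu\ge 2$: the paper's lower-cost counterexample flips only \emph{some} of the $\iota$ booleans, so inferring that the specific assignment $\hat\Bool$ (all of $\iota$ flipped) violates the formula is a leap; and your saddle-point observation correctly identifies that condition (a) can only be read as ``satisfies KKT'' (consistent with constraint \eqref{eq:specopt_2} and Assumption \ref{ass:app_locopt}), not as strict local minimality, for the lemma to hold. The costs of your route are real, though: (i) the multiplier cancellation needs LICQ of the active set at $\demj$ (and a CQ merely to get KKT for $P_{\hat\Bool}$ from global optimality), an assumption nowhere in the lemma's hypotheses and one the paper's feasible-set argument avoids entirely; and (ii) the clean ``negate the gradient'' picture holds only when $N_i^{\ineq}=1$ --- for multi-faceted $\safeset_i$, membership in $\unsafeset_i$ is a disjunction over faces selected by $\mathbf{s}_{i,t}^j$, so the active gradients of $P_{\Bool}$ and $P_{\hat\Bool}$ at a boundary point need not be negatives of one another, and the cancellation requires a more careful face-matching argument. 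Flagging LICQ explicitly as an added hypothesis (or replacing the cancellation with the paper's counterexample argument in the degenerate case) would close the remaining daylight between your proof and the stated lemma.
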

\begin{proof}
We show that it is not possible for a demonstration $\demj$ to be globally-optimal while failing to satisfy (a), (b), and (c). If the constraints corresponding to $\prop_{i_m}$ at $\cstate_{t_m}^j$ are relaxed, for some $\{(i_m, t_m)\}_{m=1}^\mu$, then $\demj$ can either remain locally-optimal (which means (a) is satisfied, and happens if all the constraints are inactive or redundant) or become not locally-optimal. If $\demj$ becomes not locally-optimal for the relaxed problem (i.e. (a) is not satisfied), then at least one of the original constraints is active, implying $\bigvee_{m=1}^\mu \big( G_{i_m}(\cstate_{t_m}^j) = 0 \big)$. In this case, one of the following holds: either (1) each $\cstate_{t_m}^j$ lies on its constraint boundary: $\bigwedge_{m=1}^\mu \big( G_{i_m}(\cstate_{t_m}^j) = 0 \big)$, or (2) at least one $\cstate_{t_m}$ does not lie on its constraint boundary. If (2) holds, then $\demj$ must be infeasible for $\hat \Bool$, so (c) must be satisfied. If (1) holds, then $\demj$ is both feasible for $\hat\Bool$ and not locally-optimal with respect to the relaxed constraints. Then, there exists some trajectory $\hattrajxu$ such that $c(\hattrajxu) < c(\demj)$, and for at least one $m$ in $1,\ldots,\mu$, $G_{i_m}(\hat\cstate_{t_m}^j) > 0$, where $\hat\cstate_{t_m}^j$ is the constraint state at time $t_m$ on $\hattrajxu$. $\hattrajxu$ cannot be feasible with respect to the true specification, since it makes $\demj$ not globally-optimal, so in this case (b) must hold.
\end{proof}

Using the previous lemma, we can show that modifying Alg. \ref{alg:falsification} to additionally impose the spec-optimality conditions in Prob. \ref{prob:globopt} still enjoys the completeness properties discussed in Theorem \ref{thm:app_completeness}, while also in general reducing the number of falsification iterations needed as a result of the reduced search space.
\begin{corollary}[Alg. \ref{alg:falsification} with spec-optimality]\label{thm:app_specoptcompleteness}
	By modifying Alg. \ref{alg:falsification} so that Prob. \ref{prob:globopt} uses constraints \eqref{eq:specopt}, Alg. \ref{alg:falsification} still returns a consistent solution $\ltlprop$ if one exists, i.e. each $\demj$ is feasible and globally optimal for each $\ltlprop$.
\end{corollary}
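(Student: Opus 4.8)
The plan is to obtain this corollary as a direct consequence of Theorem \ref{thm:app_completeness} and Lemma \ref{lem:app_specglobopt}. The only difference between the modified algorithm and the one analyzed in Theorem \ref{thm:app_completeness} is that Prob. \ref{prob:globopt} now additionally enforces the spec-optimality constraints \eqref{eq:specopt}, which can only shrink its feasible set. Consequently, every step of the completeness argument from Theorem \ref{thm:app_completeness}---that Prob. \ref{prob:globopt} returns only formulas $\ltlprop$ for which each $\demj$ is feasible, that the counterexample search Prob. \ref{prob:fwdprob_ctrexample} drives the accepted candidate to global optimality, and that the DAG representation is complete---carries over verbatim, provided I can show that the added constraints never remove a globally-optimal-inducing formula from the feasible set of Prob. \ref{prob:globopt}. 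Establishing this single fact is the heart of the proof.

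To show it, I would first argue that \eqref{eq:specopt} is a sound encoding of Def. \ref{def:app_specopt}: for each index set $\iota$ (indexed by $n$), the selector variables $\mathbf{b}_{nj}$ together with \eqref{eq:specopt_6} realize the disjunction of the three cases, where \eqref{eq:specopt_1} encodes case (b) (the perturbed assignment $\hat\Bool$ leaves $\ltlprop$ unsatisfied), \eqref{eq:specopt_2} encodes case (a) (relaxing the constraint leaves the demonstration locally-optimal, i.e. the stationarity contribution of the associated multiplier--gradient term vanishes), and \eqref{eq:specopt_3}--\eqref{eq:specopt_5} encode case (c) (infeasibility of $\demj$ under $\hat\Bool$). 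Hence any $(\struct,\params)$ for which each $\demj$ is $\mu$-SO is feasible for \eqref{eq:specopt}. I would then invoke Lemma \ref{lem:app_specglobopt}: since every globally-optimal trajectory is $\mu$-SO, the true consistent formula $\varphi^*$ (assumed to exist) satisfies the spec-optimality conditions for its true $\params$, and therefore remains feasible for the modified Prob. \ref{prob:globopt}.

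With $\varphi^*$ retained in the feasible set, the rest follows as in Theorem \ref{thm:app_completeness}. The outer loop still increments $\Nsat$ until it reaches the size at which $\varphi^*$ (or some other consistent formula) is representable; within the inner loop, a globally-optimal-inducing formula is never excluded by a counterexample, since global optimality means Prob. \ref{prob:fwdprob_ctrexample} admits no feasible trajectory of cost below $c(\demj)/(1+\delta)$, so the negative-example constraint \eqref{eq:sat3} is never triggered against it. Because the counterexample generator is untouched, the loop terminates exactly when it produces a candidate for which no demonstration yields a counterexample---that is, a formula under which every $\demj$ is feasible and globally optimal. Finally, the additional pruning introduced by \eqref{eq:specopt} only removes candidates that are feasible but not spec-optimal; by the contrapositive of Lemma \ref{lem:app_specglobopt} such candidates are never globally-optimal-inducing, so the pruning can only reduce the number of falsification iterations and cannot eliminate a consistent solution.

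The main obstacle I anticipate is the soundness direction of the MILP encoding: I must verify that the big-M constraints \eqref{eq:specopt_1}--\eqref{eq:specopt_5} together with the selector budget \eqref{eq:specopt_6} genuinely encode ``at least one of (a), (b), (c) holds'' for every $\iota$, rather than a stronger or mismatched condition, so that no $\mu$-SO formula is accidentally cut off. A secondary subtlety, inherited from Theorem \ref{thm:app_completeness} rather than introduced here, is that $\Theta^p$ is continuous, so arguing that pruning one counterexample at a time still drives the loop to a consistent formula among infinitely many parameter values must be handled as in that theorem; the spec-optimality constraints do not affect this aspect.
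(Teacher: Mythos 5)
Your proposal is correct and follows essentially the same route as the paper: invoke the completeness argument of Theorem \ref{thm:app_completeness} and use Lemma \ref{lem:app_specglobopt} to conclude that the added spec-optimality constraints are consistent with globally-optimal demonstrations, so they never exclude a consistent $(\struct,\params)$ from the feasible set of Prob. \ref{prob:globopt}. The paper's proof is considerably terser (it does not spell out the soundness of the MILP encoding \eqref{eq:specopt} of Def. \ref{def:specopt}, which you rightly flag as the point requiring verification), but the underlying argument is the same.
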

\begin{proof}
	The result follows from completeness of Alg. \ref{alg:falsification} (c.f. Theorem \ref{thm:app_completeness}) and Lemma \ref{lem:app_specglobopt}: adding \eqref{eq:specopt_1}-\eqref{eq:specopt_3} enforces that $\demj$ are spec-optimal, and via Lemma \ref{lem:app_specglobopt}, $\demj$, which is a globally-optimal demonstration, must also be spec-optimal. Hence, imposing constraints \eqref{eq:specopt_1}-\eqref{eq:specopt_3} is consistent with the demonstration.
\end{proof}

Next, we show how the consistency properties extend to the case of unknown cost function, if Alg. \ref{alg:falsification_cost} returns a solution, which it is not guaranteed to do in finite time.
\begin{corollary}[Consistency, unknown $\costparams$]\label{thm:app_completeness_costunc}
	Under Assumptions \ref{ass:app_complete}-\ref{ass:app_feasible}, if Alg. \ref{alg:falsification_cost} terminates in finite time, it returns a formula $\ltl$ such that 1) $\demj \models \ltl$ and 2) $\demj$ is globally-optimal with respect to $\costparams$ under the constraints of $\ltl$, for all $j$, 3) if such a formula exists and is representable by the provided grammar.
\end{corollary}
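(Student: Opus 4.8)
The plan is to reduce this corollary to the known-cost completeness result, Thm.~\ref{thm:app_completeness}, by exploiting the fact that once $\costparams$ is fixed the cost function $c(\cdot,\costparams)$ is fully known. First I would recall the structure of Alg.~\ref{alg:falsification_cost} (App.~\ref{sec:app_costalg}): it runs an outer loop that proposes a candidate $\costparams$ by solving Prob.~\ref{prob:globopt} with $\costparams$ promoted to a decision variable, where the stationarity contribution $\nabla_{\trajxu} c(\demj)$ is replaced by $\nabla_{\trajxu} c(\demj,\costparams)$ (affine in $\costparams$ for fixed $\demj$, hence MILP-representable). It then \emph{fixes} that $\costparams$ and invokes the inner falsification loop of Alg.~\ref{alg:falsification} using the now-constant cost $c(\cdot,\costparams)$; the procedure returns a formula only when one such inner invocation succeeds, and otherwise regenerates $\costparams$ and repeats.

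The key step is to observe that, for any fixed $\costparams$, the inner invocation is literally an instance of the setting analyzed in Thm.~\ref{thm:app_completeness}. With $\costparams$ held constant, the modified $\textrm{KKT}_\textrm{LTL}$ constraints coincide with \eqref{eq:kkt_ltl} for the known cost $c(\cdot,\costparams)$, and Prob.~\ref{prob:fwdprob_ctrexample} judges each candidate counterexample against this same fixed cost (as required for the falsification test to be well-defined). Assumptions~\ref{ass:app_complete} and \ref{ass:app_locopt} transfer verbatim. Hence, if the inner loop terminates and returns $\ltlprop$, Thm.~\ref{thm:app_completeness} gives directly that (1) each $\demj \models \ltlprop$ --- since the satisfaction-matrix constraints \eqref{eq:sat1}--\eqref{eq:sat3} inside Prob.~\ref{prob:globopt} still enforce demonstration feasibility --- and (2) each $\demj$ is globally-optimal with respect to $c(\cdot,\costparams)$ under the constraints of $\ltlprop$, because the inner loop halts precisely when Prob.~\ref{prob:fwdprob_ctrexample} can no longer produce a strictly lower-cost feasible trajectory for any demonstration. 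Since Alg.~\ref{alg:falsification_cost} terminates and returns a formula only through such a successful inner run, the returned $\ltlprop$ together with the associated $\costparams$ satisfies both conclusions.

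For conclusion (3) I would invoke Assumption~\ref{ass:app_feasible}: when a consistent formula and cost exist and are representable by the grammar, the true tuple $(\params,\struct,\costparams)$ lies in the joint hypothesis space of Prob.~\ref{prob:globopt}, so the outer loop always has a feasible candidate to propose, and completeness of the DAG encoding (Lemma~1 of \cite{daniel}) ensures the consistent formula is among the reachable outputs. The one genuine caveat --- and the reason the statement is conditioned on finite termination rather than being unconditional like Thm.~\ref{thm:app_completeness} --- is that the outer loop may enumerate infinitely many candidate $\costparams$ before a successful inner run, since the $\costparams$-space is continuous, so we cannot guarantee the process halts; establishing \emph{that} it halts is not part of the claim and is false in general. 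I expect the only nontrivial point to be isolating exactly where the known-cost argument is reused and verifying that fixing $\costparams$ leaves both the counterexample test and the KKT encoding intact; everything else is a direct specialization of the known-cost proof.
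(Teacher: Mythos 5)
Your proposal is correct and follows essentially the same route as the paper's proof: reduce to Thm.~\ref{thm:app_completeness} by observing that once $\costparams$ is fixed the inner loop is an instance of Alg.~\ref{alg:falsification} with a known cost, and invoke Assumption~\ref{ass:app_feasible} to guarantee a consistent tuple exists in the hypothesis space. Your treatment of the conditional on finite termination is, if anything, slightly more careful than the paper's own (rather terse) argument.
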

\begin{proof}
	Note that Alg. \ref{alg:falsification_cost} is simply Alg. \ref{alg:falsification} with an outer loop where potential cost parameters $\costparams$ are chosen. From Theorem \ref{thm:app_completeness}, we know that under Assumptions \ref{ass:app_complete}-\ref{ass:app_locopt}, for the true cost parameter $\costparams$, Alg. \ref{alg:falsification} is guaranteed to return $\params$ and $\struct$ which make the demonstrations globally-optimal under $\costparams$. From Assumption \ref{ass:app_feasible} and the fact that the true parameters $\params$, $\struct$, and $\costparams$ will make the demonstrations globally-optimal, we know there exists at least one consistent set of parameters (the true parameters). Then, Alg. \ref{alg:falsification_cost} will eventually find a consistent solution (possibly the true parameters), as it iteratively runs Alg. \ref{alg:falsification} for all consistent $\costparams$.
\end{proof}

Finally, we show that for fixed LTL structure and cost function, querying and volume extraction (Problems \ref{prob:query} and \ref{prob:volextract}) are guaranteed to return conservative estimates of the true $\safeset_i$ or $\unsafeset_i$.
\begin{theorem}[Conservativeness for unknown $\params$]\label{thm:app_innerapprox}
	Suppose that $\struct$ and $\costparams$ are known, and $\params$ is unknown. Then, extracting $\guarsafe^i$ and $\guarunsafe^i$, as defined in \eqref{eq:guarsafe}-\eqref{eq:guarunsafe}, from the feasible set of Prob. \ref{prob:kkt_exact_ltl} projected onto $\Theta_i^p$ (denoted $\feas_i$), returns $\guarsafe^i \subseteq \safeset_i$ and $\guarunsafe^i \subseteq \unsafeset_i$, for all $i \in \{1, \ldots, \Nap\}$.
\end{theorem}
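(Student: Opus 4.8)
The plan is to exploit that both $\guarsafe^i$ and $\guarunsafe^i$ are \emph{intersections} over the entire projected feasible family $\feas_i$, so each is contained in the set produced by any single member $\theta \in \feas_i$. It therefore suffices to show that the (unknown) true parameter $\params_i$ itself lies in $\feas_i$ and then to select $\theta = \params_i$ from the intersection. Once $\params_i \in \feas_i$ is in hand, \eqref{eq:guarsafe} gives $\guarsafe^i = \bigcap_{\theta \in \feas_i}\{\cstate \mid G_i(\cstate,\theta)\le 0\} \subseteq \{\cstate \mid G_i(\cstate,\params_i)\le 0\} = \safeset_i$, and \eqref{eq:guarunsafe} gives $\guarunsafe^i \subseteq \{\cstate \mid G_i(\cstate,\params_i)\ge 0\}$, which is the skeleton of both claims.

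The crux is establishing $\params_i \in \feas_i$, i.e.\ that the true $\params$ extends to a feasible point of Prob.~\ref{prob:kkt_exact_ltl}; projecting that point onto $\Theta_i^p$ then yields $\params_i \in \feas_i$. I would construct the witness explicitly. By Assumption~\ref{ass:app_feasible}, $\params \in \Theta^p$, so the parameter is admissible. Since each $\demj$ satisfies the true formula, assigning the booleans their true values $\Bool^j(\params)$ and choosing each $\mathbf{s}_{i,t}^j$ to record the active sign of $\mathbf{g}_i(\cstate_t^j,\params_i)$ satisfies the big-$M$ primal-feasibility encoding \eqref{eq:ltlfeas} and the containment links \eqref{eq:zind}. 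By Assumption~\ref{ass:app_locopt}, each $\demj$ is locally optimal for its fixed boolean assignment, so the KKT necessary conditions of the induced continuous program hold: there exist multipliers $\boldsymbol{\lambda}_t^{j,k}, \boldsymbol{\lambda}_{i,t}^{j,\neg k}\ge \mathbf{0}$ and $\boldsymbol{\nu}_t^{j,k}$ satisfying the sign-corrected stationarity \eqref{eq:kkt_stat_ltl}, and, with a suitable $\mathbf{Q}_{i,t}^j$, the complementary-slackness encoding \eqref{eq:compslackltl}. This tuple satisfies $\{\textrm{KKT}_\textrm{LTL}(\demj)\}_{j=1}^{\numsafe}$, hence $\params \in \feas$ and $\params_i \in \feas_i$.

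Combining the two steps, $\guarsafe^i \subseteq \safeset_i$ is immediate. For the unsafe side the member argument only yields $\guarunsafe^i \subseteq \{\cstate \mid G_i(\cstate,\params_i)\ge 0\}$, so the last step is to identify this with $\unsafeset_i = \textrm{cl}(\{\cstate \mid G_i(\cstate,\params_i) > 0\})$; I would invoke the stated boundary convention (states with $G_i=0$ may be assigned to either set), under which $\{G_i \ge 0\}$ and $\unsafeset_i$ coincide up to the shared boundary, giving $\guarunsafe^i \subseteq \unsafeset_i$. The main obstacle is the middle paragraph, namely certifying $\params_i \in \feas_i$: it hinges on the fact that local optimality of the demonstrations for their fixed booleans genuinely produces multipliers meeting the MILP-encoded KKT constraints, which is exactly what Assumption~\ref{ass:app_locopt} supplies. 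A secondary, purely topological subtlety is the identification of $\{G_i\ge 0\}$ with $\unsafeset_i$, which is clean away from flat zero-level regions of $G_i$ and is otherwise absorbed by the boundary convention.
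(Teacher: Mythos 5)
Your proposal is correct and follows essentially the same route as the paper's proof: both hinge on the observation that the true parameter $\theta_i^{p,*}$ lies in $\feas_i$ (because locally-optimal demonstrations satisfy the MILP-encoded KKT conditions for the true booleans and parameters), so the intersections \eqref{eq:guarsafe}--\eqref{eq:guarunsafe} are contained in the corresponding sets evaluated at $\theta_i^{p,*}$. The paper states this as a two-line contradiction argument, whereas you spell out the feasibility witness and flag the $\{G_i \ge 0\}$ versus $\textrm{cl}(\{G_i>0\})$ subtlety explicitly; these are refinements of, not departures from, the paper's argument.
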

\begin{proof}
	We first prove that $\guarunsafe^i\subseteq \unsafeset_i$. Suppose that there exists $\cstate \in \guarunsafe^i$ such that $\cstate \notin \unsafeset_i$. Then by definition, for all $\params_i \in \feas_i$, $G_i(\cstate, \params_i) \ge 0$.  However, we know that all locally-optimal demonstrations satisfy the KKT conditions with respect to the true parameter $\theta_i^{p,*}$; hence, $\theta_i^{p,*} \in \feas$. Then, $\state \in \unsafeset(\theta_i^{p,*})$. Contradiction.	Similar logic holds for proving that $\guarsafe^i \subseteq \safeset_i$. Suppose that there exists $\state \in \guarsafe^i$ such that $\state \notin \safeset_i$. Then by definition, for all $\params_i \in \feas_i$, $G_i(\cstate, \params_i) \le 0$.  However, we know that all locally-optimal demonstrations satisfy the KKT conditions with respect to the true parameter $\theta_i^{p,*}$; hence, $\theta_i^{p,*} \in \feas_i$. Then, $\cstate \in \safeset_i(\theta_i^{p,*})$. Contradiction.
\end{proof}

\subsection{Learnability}

The goal of this theorem is to show that the stronger the assumptions on the demonstrator, the smaller the set of consistent formulas (the less ill-posed the problem).
\begin{theorem}[Distinguishability]\label{thm:app_distinguishability}
	For the consistent formula sets defined in Sec. \ref{sec:learnability}, we have $\formset_{g} \subseteq \formset_{\tilde\mu\textrm{-SO}} \subseteq \formset_{\hat\mu\textrm{-SO}} \subseteq \formset_f$, for $\tilde\mu > \hat\mu$.
\end{theorem}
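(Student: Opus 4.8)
The plan is to prove the chain of inclusions $\formset_g \subseteq \formset_{\tilde\mu\text{-SO}} \subseteq \formset_{\hat\mu\text{-SO}} \subseteq \formset_f$ by handling each inclusion separately, working from the strongest optimality assumption (global optimality) down to the weakest (feasibility). The leftmost inclusion $\formset_g \subseteq \formset_{\tilde\mu\text{-SO}}$ is essentially immediate from Lemma \ref{lem:app_specglobopt}: a formula $\varphi$ lies in $\formset_g$ if every demonstration is feasible and globally-optimal under $\varphi$, and Lemma \ref{lem:app_specglobopt} states that globally-optimal trajectories are $\mu$-SO for \emph{any} $\mu \in \mathbb{Z}_+$; hence each demonstration is $\tilde\mu$-SO under $\varphi$, placing $\varphi \in \formset_{\tilde\mu\text{-SO}}$. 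The rightmost inclusion $\formset_{\hat\mu\text{-SO}} \subseteq \formset_f$ is similarly direct, since $\formset_f$ only requires discrete feasibility and continuous local optimality, which is a strictly weaker requirement than spec-optimality (spec-optimality presupposes a feasible, locally-optimal demonstration and adds logical conditions on top); so any formula making the demonstrations $\hat\mu$-SO automatically makes them feasible and locally-optimal.

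The substantive step is the middle inclusion $\formset_{\tilde\mu\text{-SO}} \subseteq \formset_{\hat\mu\text{-SO}}$ for $\tilde\mu > \hat\mu$, i.e.\ showing that being spec-optimal at the higher order $\tilde\mu$ implies being spec-optimal at the lower order $\hat\mu$. Here I would argue directly from Definition \ref{def:app_specopt}. Fix a demonstration $\demj$ that is $\tilde\mu$-SO under some formula $\varphi$, and take an arbitrary index set $\hat\iota = \{(i_1,t_1),\ldots,(i_{\hat\mu},t_{\hat\mu})\}$ of size $\hat\mu$ for which we must verify one of the three conditions. The idea is to \emph{pad} $\hat\iota$ up to a size-$\tilde\mu$ index set $\tilde\iota \supseteq \hat\iota$ by adjoining $\tilde\mu - \hat\mu$ additional index pairs, apply the $\tilde\mu$-SO hypothesis to $\tilde\iota$, and then show that whichever of the three conditions holds for $\tilde\iota$ descends to $\hat\iota$. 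The main obstacle, and the step requiring care, is that the three conditions do not all restrict monotonically under shrinking the index set in the same direction: the first condition (local optimality after relaxing \emph{all} constraints in the set) becomes easier to satisfy as the set grows (more constraints relaxed), whereas the infeasibility and non-satisfaction conditions reference the specific perturbed boolean assignment $\hat\Bool$ and must be matched against the right padding choice.

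To make the padding argument go through cleanly, I would choose the extra indices so as to preserve the relevant condition. The cleanest route is to pad with index pairs $(i',t')$ whose constraint states lie strictly off their AP boundaries, i.e.\ $G_{i'}(\cstate_{t'}^j, \params_{i'}) \neq 0$; for such an index, flipping $\Booll_{i',t'}$ forces infeasibility with respect to the perturbed assignment, so the third condition is triggered automatically on the padded indices. One then checks: if $\tilde\iota$ satisfies condition (a) (local optimality after relaxing all of $\tilde\iota$'s constraints), then since relaxing the larger set $\tilde\iota$ relaxes a superset of $\hat\iota$'s constraints, local optimality of the fully-relaxed problem forces local optimality of the $\hat\iota$-relaxed problem as well (removing fewer constraints cannot destroy a local optimum that survives removing more), giving (a) for $\hat\iota$; if instead the infeasibility or non-satisfaction condition holds, I would argue it is driven by an index already inside $\hat\iota$ (by the choice of padding on off-boundary states, which are individually infeasibility-inducing and hence handled by condition (c) restricted to $\hat\iota$ directly). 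I expect the fiddliest part to be ruling out the degenerate case where $\tilde\mu$-SO is witnessed \emph{only} through the padded indices in a way that does not transfer to $\hat\iota$; this is precisely where the freedom to \emph{choose} a favorable padding (rather than accept an adversarial one) is essential, since Definition \ref{def:app_specopt} quantifies universally over index sets and thus lets us instantiate $\tilde\iota$ as any convenient superset of $\hat\iota$.
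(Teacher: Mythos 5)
Your first and last inclusions are correct and match the paper: $\formset_g \subseteq \formset_{\tilde\mu\textrm{-SO}}$ follows from Lemma \ref{lem:app_specglobopt}, and $\formset_{\hat\mu\textrm{-SO}} \subseteq \formset_f$ because spec-optimality is layered on top of discrete feasibility and continuous local optimality. The divergence, and the gap, is in the middle inclusion. The paper never attempts the pointwise implication ``$\tilde\mu$-SO $\Rightarrow$ $\hat\mu$-SO''; it argues at the level of Prob. \ref{prob:globopt}, asserting that enforcing $\tilde\mu$-SO via \eqref{eq:specopt} imposes more constraints than enforcing $\hat\mu$-SO, so the feasible set of consistent formulas cannot grow. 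You instead try to prove the pointwise implication by padding a size-$\hat\mu$ index set $\hat\iota$ up to a size-$\tilde\mu$ set $\tilde\iota$, and this is where the argument breaks.

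The specific failure: your chosen padding --- indices $(i',t')$ with $G_{i'}(\cstate_{t'}^j,\params_{i'}) \neq 0$ --- makes condition (c) of Definition \ref{def:app_specopt} hold for $\tilde\iota$ \emph{automatically}, because flipping the boolean of any strictly off-boundary index already renders $\demj$ infeasible with respect to the perturbed assignment. The $\tilde\mu$-SO hypothesis applied to that $\tilde\iota$ is therefore discharged by the padding alone and yields no information about $\hat\iota$; your parenthetical claim that the infeasibility is ``driven by an index already inside $\hat\iota$'' is exactly backwards. Nor does a cleverer padding rescue the step: condition (b) genuinely fails to descend, since a formula can be falsified by flipping all $\tilde\mu$ booleans in $\tilde\iota$ while remaining satisfied when only the $\hat\mu$ booleans in $\hat\iota$ are flipped (the extra flips may be precisely what breaks satisfaction), and condition (c) has the same non-monotonicity. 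Only condition (a) transfers in the direction you need, so the disjunction over the three conditions does not restrict from $\tilde\iota$ to $\hat\iota$. To close the middle inclusion you would need to argue, as the paper does, about the nesting of the constraint sets added to Prob. \ref{prob:globopt} (or, equivalently, read the $\mu$-SO condition cumulatively over all index sets of size at most $\mu$), rather than derive one spec-optimality level pointwise from another.
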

\begin{proof}
	$\formset_g \subseteq \formset_{\tilde\mu\textrm{-SO}}$, since per Lemma \ref{lem:app_specglobopt}, all globally-optimal trajectories are $\tilde\mu$-SO. Thus, restricting Prob. \ref{prob:globopt} to enforce global optimality requires more constraints than restricting Prob. \ref{prob:globopt} to enforce $\tilde\mu$-SO. With more constraints, the feasible set of consistent formulas cannot be larger for global optimality. Similarly, as enforcing $\tilde\mu$-SO requires more constraints than enforcing $\hat\mu$-SO, the feasible set of consistent formulas cannot be larger for $\tilde\mu$-SO than for $\hat\mu$-SO.
	$\formset_{\mu\textrm{-SO}} \subseteq \formset_{f}$, since enforcing $\mu$-SO also enforces feasibility. Thus, restricting Prob. \ref{prob:globopt} to enforce $\mu$-SO requires more constraints than the standard Prob. \ref{prob:globopt}. With more constraints, the feasible set of consistent formulas cannot be larger for $\mu$-SO.
\end{proof}

\vspace{10pt}
\section{Method extensions and expanded details}\label{sec:app_method}

\subsection{Unknown cost algorithm}\label{sec:app_costalg}

In Algorithm \ref{alg:falsification_cost}, we formally write the analogue of the falsification approach in Alg. \ref{alg:falsification} when the cost function parameters $\costparams$ are unknown. The approach is the same as Algorithm \ref{alg:falsification}, apart from an additional outer while loop, where candidate $\costparams$ are selected. Upon the failure of a $\costparams$ to yield a consistent $\params$ and $\struct$, the $\costparams$ is added into a set of cost parameters for Problem \ref{prob:globopt} to avoid, $\Theta_\textrm{av}^c$. The avoidance condition can be implemented with integer constraints, i.e. $|\theta_i^c - \hat\theta_i^c| \ge \varepsilon_\textrm{av} - (1-z_\textrm{av}^i)$, $\sum_i z_\textrm{av}^i \ge 1$, for $i = 1,\ldots,|\theta_c|$. See Sec. \ref{sec:costparams} for more discussion.

\begin{algorithm}
\SetAlgoLined
\textbf{Input}: $\{\demj\}_{j=1}^{\numsafe}$, $\bar\safeset$, \textbf{Output}: $\hatstruct, \hatparams, \hatcostparams$\\
$\Nsat \leftarrow 0$, $\{\demunsafe\} \leftarrow \{\}$, $\Theta_\textrm{av}^c \leftarrow \{ \}$

\While{true}{
$\hatstruct$,\hspace{-1pt} $\hatparams$,\hspace{-1pt} $\hatcostparams$\hspace{-2pt} $\leftarrow$\hspace{-2pt} Problem \ref{prob:globopt}$'(\{\demj\}_{j=1}^{\numsafe}, \{\demunsafe\}, \Nsat, \Theta_\textrm{av}^c)$ \\
\While{$\neg$ consistent}{
$\Nsat \leftarrow \Nsat + 1$\\

\While{Problem \ref{prob:globopt} is feasible}{
$\hatstruct$, $\hatparams$ $\leftarrow$ Problem \ref{prob:globopt}$(\{\demj\}_{j=1}^{\numsafe}, \{\demunsafe\}, \Nsat, \hatcostparams)$ \\
\For{$j = 1$ to $\numsafe$}{
$\trajxu^j \leftarrow$ Problem \ref{prob:fwdprob_ctrexample}$(\demj)$ \\
\If{$c(\trajxu^j, \hatcostparams) < c(\demj, \hatcostparams)/(1+\delta)$}{
$\{\demunsafe\} \leftarrow \{\demunsafe\} \cup \trajxu$
}
}
\If{$\bigvee_{j=1}^{\numsafe} (c(\trajxu^j, \hatcostparams) < c(\demj, \hatcostparams)/(1+\delta))$}{
consistent $\leftarrow \top$; break\\}}}
\lIf{consistent}{ return} \lElse{$\Theta_\textrm{av}^c \leftarrow \Theta_\textrm{av}^c \cup \hatcostparams$; break} }
 \caption{Falsification, unknown cost function}\label{alg:falsification_cost}
\end{algorithm}

\subsection{Encoding prior knowledge}\label{sec:prior_knowledge}

\noindent \textbf{Known labels}: We have assumed that the demonstrations only include state/control trajectories and not the AP labels; this can lead to ambiguity as to which $\safeset$ should be assigned to which proposition $\prop_i$. For example, consider the example in Fig. \ref{fig:venn_diagram} (left), where the aim is to recover $\varphi(\params) = \eventually \safeset_1(\params_1) \vee \eventually \safeset_2(\params_2)$. The KKT conditions will imply that the demonstrator had to visit two boxes and their locations, but not if the left box should be labeled $\safeset_1$ or $\safeset_2$. However, in some settings it may be reasonable that the labels for each AP are provided, i.e. for an AP which requires a robot arm to grasp an object, we might have sensor data determining if the object has been grasped. In this case, we can incorporate this by simply constraining $\Bool_i^j(\params_i)$ to be the labels; this then removes the ambiguity mentioned earlier.

\noindent \textbf{Prior knowledge on $\params$}: In some settings, we may have a rough idea of $\params$, i.e. as noisy bounding boxes from a vision system. We might then want to avoid deviating from these nominal parameters, denoted $\params_\textrm{nom}$, or restrict $\params$ to some region around $\params_\textrm{nom}$, denoted $\Theta_{i,\textrm{nom}}$, subject to the KKT conditions holding. This can be done by adding $\sum_{j=1}^{\Nap}\Vert \params_i - \params_{i,\textrm{nom}} \Vert_1$ as an objective or $\params_{i,\textrm{nom}} \in \Theta_{i,\textrm{nom}}$ as a constraint to Prob. \ref{prob:kkt_exact_ltl}.

\subsection{Variants on the falsification loop}\label{sec:loopvariants}

Depending on the desired application, it may be useful to impose an ordering in which candidate structures $\struct$ are returned in line 4 of Alg. \ref{alg:falsification}. For example, the user may want to return the most restrictive formulas first (i.e. formulas with the smallest language), since more restrictive formulas are less likely to admit counterexamples (and hence the falsification should terminate in fewer iterations). On the other hand, the user may want to return the least restrictive formulas first, generating many invalid formulas in order to explicitly know what formulas do not satisfy the demonstrator's wishes. 

However, imposing an entailment-based ordering on the returned formulas is computationally challenging, as in general this will involve pairwise LTL entailment checks over a large set of possible LTL formulas, and each check is in PSPACE \cite{DemriS02}. Despite this, we can heuristically approximate this by assigning weights to each node type in the DAG based on their logical ``strength", such that each DAG with the same set of nodes has an overall weight $w = \sum_{u=1}^{\Nsat} \sum_{v=1}^{\Noper}w_{u,v}\parse_{u,v}$. For example, $\vee$ should be assigned a lower weight than $\andltl$, since $\vee$s can never restrict language size, while $\andltl$ can never grow it. Then, stronger/weaker formulas can be returned first by adding constraint $w \ge w_\textrm{thresh}$/$w \le w_\textrm{thresh}$, where $w_\textrm{thresh}$ is reduced/increased until a consistent formula is found. 

Note that multiple consistent formula structures can be also generated by adding a constraint for Prob. \ref{prob:globopt} to not return the same formula structure and continuing the falsification loop after the first consistent formula is found.

\section{Additional Experimental Details (Environment transfer)}\label{sec:app_experiments_transfer}
We use the learned $\struct$ to plan a trajectory which completes the high-level task (first going to the mug, then the coffee machine, then the goal, while always avoiding obstacles) in a novel environment map (with different AP parameters $\params$). Please refer to our accompanying video at \textbf{\textcolor{blue}{\url{https://youtu.be/cpUEcWCUMqc}}} for animations of the planned trajectories.

\begin{figure}[h]
\centering
\includegraphics[width=0.9\linewidth]{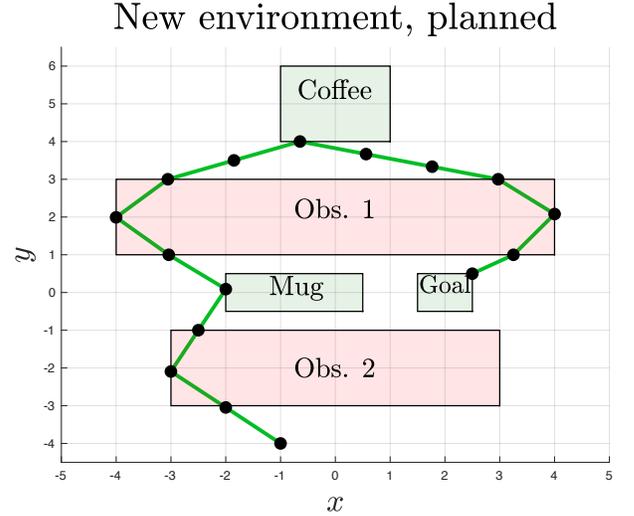}\vspace{-5pt}
\caption{Environment-transfer planned trajectory.}\label{fig:transfer_planned}
\end{figure}

\vspace{5pt}
\section{Additional Experimental Details (Manipulation)}\label{sec:app_experiments}
We use the learned $\params$ and $\struct$ to plan trajectories which complete the task from new initial conditions in the environment (Fig. \ref{fig:arm_planned}).  Please refer to our video at \textbf{\textcolor{blue}{\url{https://youtu.be/cpUEcWCUMqc}}} for animations of the planned trajectories. 

\begin{figure}[h]
\centering
\includegraphics[width=0.9\linewidth]{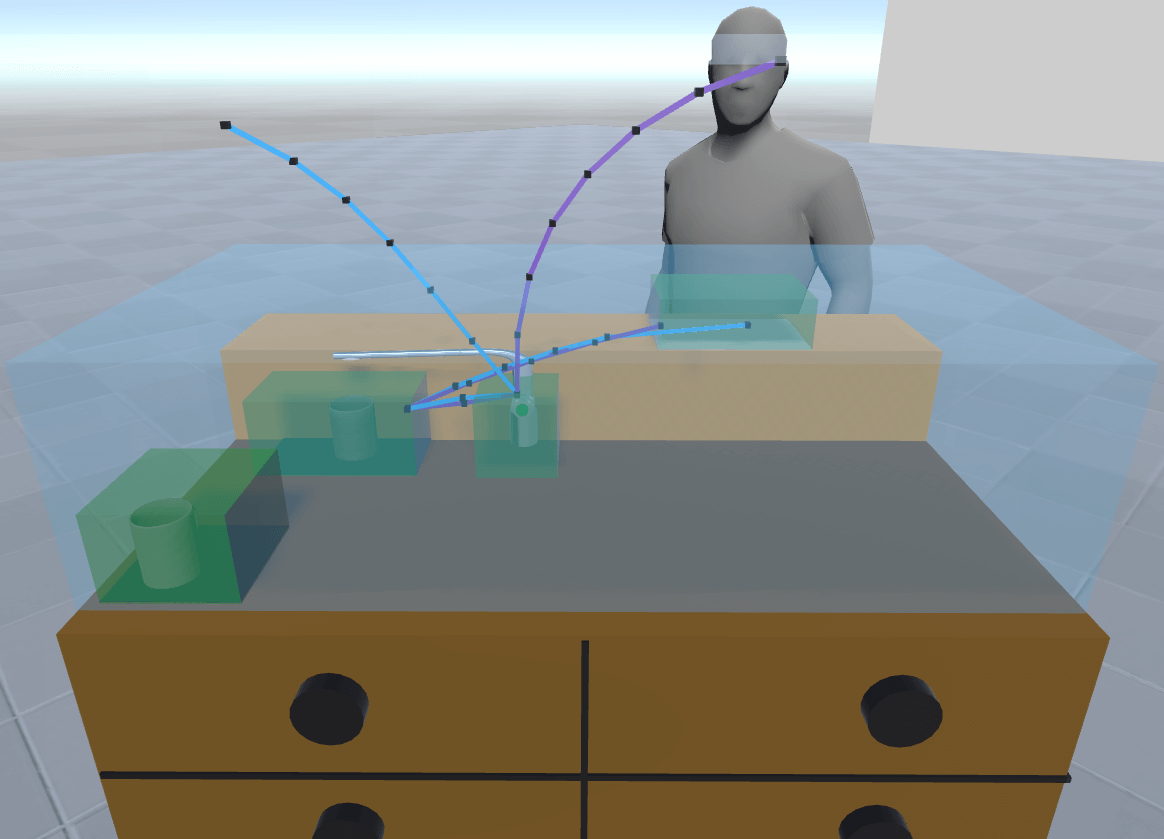}\vspace{-8pt}
\caption{Arm planned trajectories.\vspace{-10pt}}\label{fig:arm_planned}
\end{figure}
\vspace{-10pt}

\vspace{5pt}
\section{Additional Experimental Details (Quadrotor)}\label{sec:app_experiments_quad}

The system dynamics for the quadrotor \cite{quad_kth} are:
\begin{equation}
	\hspace{-5pt}\begin{bmatrix} \dot\chi \\ \dot y \\ \dot z \\ \dot\alpha \\ \dot\beta \\ \dot\gamma \\ \ddot \chi \\ \ddot y \\ \ddot z \\ \ddot \alpha \\ \ddot \beta \\ \ddot \gamma \end{bmatrix} = \begin{bmatrix} \dot\chi \\ \dot y \\ \dot z \\ \dot\beta \frac{\sin(\gamma)}{\cos(\beta)} + \dot\gamma \frac{\cos(\gamma)}{\cos(\beta)} \\ \beta \cos(\gamma) - \dot\gamma \sin(\gamma) \\ \dot\alpha + \dot\beta\sin(\gamma)\tan(\beta)+\dot\gamma\cos(\gamma)\tan(\beta) \\ -\frac{1}{m}[\sin(\gamma)\sin(\alpha) + \cos(\gamma)\cos(\alpha)\sin(\beta)]u_1 \\ -\frac{1}{m}[\cos(\alpha)\sin(\gamma) - \cos(\gamma)\sin(\alpha)\sin(\beta)]u_1 \\ g-\frac{1}{m}[\cos(\gamma)\cos(\beta)]u_1 \\ \frac{I_y-I_z}{I_x} \dot\beta \dot\gamma + \frac{1}{I_x}u_2 \\ \frac{I_z-I_x}{I_y} \dot\alpha \dot\gamma + \frac{1}{I_y}u_3\\ \frac{I_x-I_y}{I_z} \dot\alpha \dot\beta + \frac{1}{I_z}u_4\end{bmatrix},
\end{equation}
with control constraints $\Vert u_t \Vert_2 \le 10$. We time-discretize the dynamics by performing forward Euler integration with discretization time $\delta t = 1.2$ seconds. The 12D state is $x = [\chi, y, z, \alpha, \beta, \gamma, \dot x, \dot y, \dot z, \dot \alpha, \dot \beta, \dot \gamma]^\top$, and the relevant constants are $g = -9.81 \textrm{m}/\textrm{s}^2$, $m=1$kg, $I_x = 0.5\textrm{kg}\cdot\textrm{m}^2$, $I_y = 0.1\textrm{kg}\cdot\textrm{m}^2$, and $I_z = 0.3\textrm{kg}\cdot\textrm{m}^2$.

With the four demonstrations provided (see Sec. \ref{sec:results}), we learn $\costparams$, $\params$, and $\struct$, and obtain a representation of $\guarsafe^i$ and $\guarunsafe^i$, for all AP $p_i$. Using $\costparams$, $\struct$, and $\guarsafe^i$, we plan trajectories from new initial states to new goal states in the environment which are guaranteed to satisfy the true LTL formula; these trajectories are presented in Fig. \ref{fig:quad_planned}. Please refer to our accompanying video at \textbf{\textcolor{blue}{\url{https://youtu.be/cpUEcWCUMqc}}} for animations of the planned trajectories.

\begin{figure}[h]
\centering
\includegraphics[width=\linewidth]{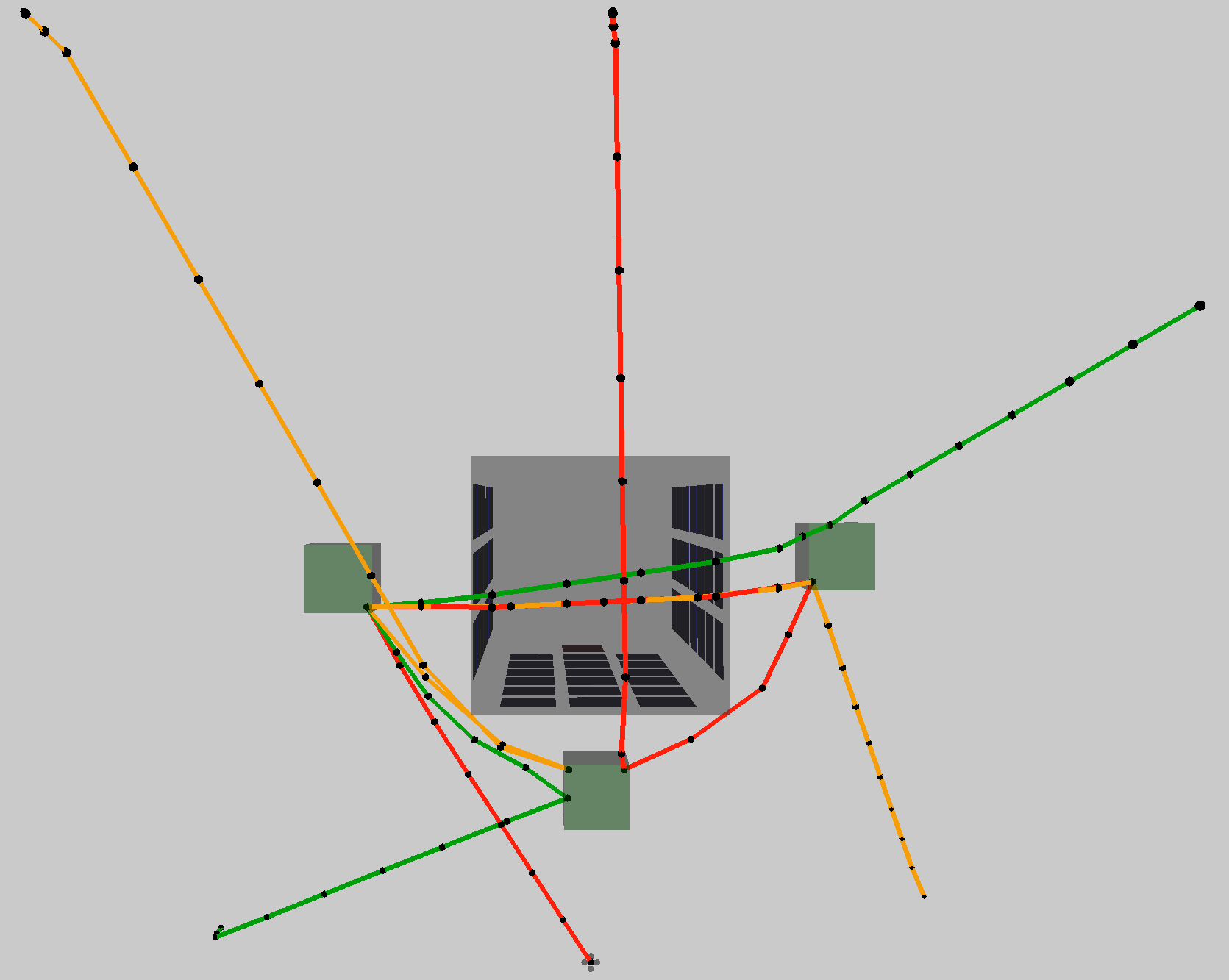}
\caption{Quadrotor planned trajectories.}\label{fig:quad_planned}
\end{figure}

\end{document}